\newcommand{\br}{\boldsymbol{r}}
\newcommand{\Algo}{\mathcal{L}}            
\newcommand{\Reg}{\mathrm{Reg}}          
\newcommand{\BayReg}{\mathrm{BayReg}}  
\newcommand{\calO}{\mathcal{O}}               
\newcommand{\calE}{\mathcal{E}}               
\newcommand{\calS}{\mathcal{S}}
\newcommand{\calR}{\mathcal{R}}
\newcommand{\hatB}{\hat{\mathcal{B}}}
\newcommand{\calB}{\mathcal{B}}
\newcommand\bX{\boldsymbol{X}}
\newcommand\bx{\boldsymbol{x}}
\newcommand\bi{\boldsymbol{i}}
\newcommand\by{\boldsymbol{y}}
\newcommand\mSpace{\mathcal{X}}
\newcommand{\rhat}{\hat{r}}
\newcommand{\Qhat}{\hat{Q}}
\newcommand{\Phat}{\hat{P}}
\newcommand{\proba}[1]{\mathbb{P}\left(#1\right)}
\newcommand\Proba[1]{\mathbb{P}\left(#1\right)} 
\newcommand\sProba[1]{\mathbb{P}(#1)} 
\newcommand\esp[1]{\mathbb{E}\left[#1\right]} 
\newcommand\var[1]{\mathrm{var}\left[#1\right]} 
\newcommand\norm[1]{\left\|#1\right\|}      
\newcommand\abs[1]{\left|#1\right|}         
\newcommand\ceil[1]{\left\lceil#1\right\rceil}
\newcommand\p[1]{\left(#1\right)} 
\newcommand\ind[1]{\mathbb{I}_{\{#1\}}} 
\DeclareMathOperator*{\argmax}{arg\,max}
\newcommand\sto{^{\mathrm{stoc.pb}}} 
\newcommand\stoi{^{\mathrm{stoc.pb},j}} 
\newcommand\R{\mathbb{R}} 
\newcommand\M{\mathbb{M}}
\newcommand\ie{\emph{i.e.}}
\newcommand\eg{\emph{e.g.}}
\newcommand{\customlabel}[2]{%
   \protected@write \@auxout {}{\string \newlabel {#1}{{#2}{\thepage}{#2}{#1}{}} }%
   \hypertarget{#1}{#2}
}
\title{Learning Algorithms for Markovian Bandits:\\Is Posterior Sampling more Scalable than Optimism?}
\titlerunning{MB-PSRL}
\author{
    Nicolas Gast \and
    Bruno Gaujal \and
    Kimang Khun
}
\institute{
    Univ. Grenoble Alpes, Inria, CNRS, Grenoble INP$^*$, LIG, 38000 Grenoble, France\\
    $^*$Institute of Engineering Univ. Grenoble Alpes\\
    \email{firstname.lastname@inria.fr}
}
\begin{document}

\maketitle

\begin{abstract}%
   In this paper, we study the scalability of model-based algorithms learning the optimal policy of a discounted Markovian bandit  problem with $n$ arms.
 There are two categories of model-based reinforcement learning algorithms: Bayesian algorithms (like PSRL), and optimistic algorithms (like UCRL2 or UCBVI). 
While a naïve application of these  algorithms is not scalable because  the state-space is exponential in $n$, we construct variants  specially tailored to Markovian bandits (MB) that we call MB-PSRL, MB-UCRL2, and MB-UCBVI. They all have a low regret in $\tilde{O}(S\sqrt{nK})$ -- where $K$ is the number of episodes, $n$ is the number of arms and $S$ is the number of states of each arm. Up to a factor $\sqrt{S}$, these regrets match the lower bound of $\Omega(\sqrt{SnK})$ that we also derive. 

    Even if their theoretical regrets are comparable, the practical applicability of those algorithms varies greatly: We show that MB-UCRL2, as well as all  algorithms that use bonus on transition matrices, cannot be implemented efficiently (its {\it time} complexity is exponential in $n$). As MB-UCBVI does not use bonus on transition matrices, it can be implemented efficiently but our numerical experiments show that its empirical regret is large. Our Bayesian algorithm, MB-PSRL takes the best of both worlds: its running time is linear in the number of arms and its empirical regret is the smallest of all algorithms. This shows the power of Bayesian algorithms, that can often be easily tailored to the structure of the problems to learn.

\keywords{Reinforcement Learning, Posterior Sampling, Gittins index, Rested bandit, Bayesian approach, Optimistic approach}
\end{abstract}

\section{Introduction}
\label{sec:intro}

Markov Decision Processes (MDPs) are a powerful model to solve stochastic optimization problems. They suffer, however, from what is called the \emph{curse of dimensionality}: the state size of a Markov process is exponential in its number of dimensions, so that the complexity of computing an optimal policy is exponential in the number of dimensions of the problem. The same holds for general purpose reinforcement learning algorithm: they all have a regret and a runtime exponential in the number of dimensions, so they also suffer from the same curse. Very few MDPs are known to escape from this curse of dimensionality. One of the most famous examples is the Markovian bandit problem for which an optimal policy and its value can be computed in $O(n)$, where $n$ is the number of arms: The optimal policy can be computed by using the Gittins indices (computed locally) and its value can be computed by using retirement values (see for example \cite{whittle1996optimal}).

In this paper, we study a specialization of PSRL \cite{osbandMoreEfficientReinforcement2013a} to Markovian bandits, that we call Markovian bandit posterior sampling (MB-PSRL) that consists in using PSRL with a prior tailored to Markovian bandits. We show that the regret of MB-PSRL is sub-linear in the number of episodes and of arms. We also provide a regret guarantee for two optimistic algorithms that we call MB-UCRL2 and MB-UCBVI, and that are based respectively on UCRL2 \cite{auerNearoptimalRegretBounds2009} and UCBVI \cite{azar2017minimax}. They both use modified  confidence bounds adapted to Markovian bandit problems. The upper bound for their regret is similar to the bound for MB-PSRL. This shows that in terms of regret, the posterior sampling approach (MB-PSRL) and the optimistic approach (MB-UCRL2 and MB-UCBVI) scale well with the number of arms. We also provide a lower bound on the regret of any learning algorithm in Markovian bandit problems, which shows that the regret bounds that we obtain for all  algorithms are close to optimal.

The situation is radically different when considering the processing time: the runtime of MB-PSRL is linear in the number of arms, while the runtime of MB-UCRL2 is exponential in $n$. We show that this is not an artifact of our implementation of MB-UCRL2 by exhibiting a Markovian bandit problem for which being optimistic in each arm is not optimistic in the global MDP. This implies that UCRL2 and its variants \cite{bourel2020tightening,fruitEfficientBiasSpanConstrainedExplorationExploitation2018,talebiVarianceAwareRegretBounds2018,filippiOptimismReinforcementLearning2010a} cannot be adapted to have linear runtime in Markovian bandit problem unless an oracle is given. We argue that this non-scalability of UCRL2 and variants is not a property of optimistic approach but comes from the fact that UCRL2 relies on  extended value iteration \cite{auerNearoptimalRegretBounds2009} needed to deal with  upper confidence bounds on the transition matrices.  We show that MB-UCBVI, an optimistic algorithm that does not add bonus on transition probabilities and hence does not rely on extended value iteration,  does not suffer from the same problem. Its  regret is  sub-linear in the number of episodes, arms, and states per arm (although larger than the regret of both MB-PSRL and MB-UCRL2), and its  runtime is linear in the number of arms.

We conduct a series of numerical experiments to compare the performance of MB-PSRL,  MB-UCRL2 and MB-UCBVI.  They confirm the good behavior of MB-PSRL, both in terms of regret and computational complexity. These numerical experiments also show that the empirical regret of MB-UCBVI is larger than the regret of MB-PSRL and MB-UCRL2, confirming the comparisons between the upper bounds derived in Theorem \ref{thm:regret_upper_bound}. All this  makes MB-PSRL the better choice between the three learning algorithms.

\paragraph{Related Work}
We focus on Markovian bandit problem with discount factor $\beta<1$ and all
reward functions and transition matrices $(\br^a,Q^a)_{a\in\{1,\dots,n\}}$ are
unknown. A possible approach to learn such a problem is to ignore the problem structure and view the
Markovian bandit problem as a generic MDP. There are two main families of generic reinforcement learning algorithms with regret guarantees. The first one uses the \emph{optimism in face of uncertainty} (OFU) principle.  OFU methods build a confidence set
for the unknown MDP and compute an  optimal policy of the ``best'' MDP in the
confidence set, \emph{e.g.}, \cite{bourel2020tightening,fruitRegretMinimizationMDPs2017,azar2017minimax,auerNearoptimalRegretBounds2009,bartlettREGALRegularizationBased2009}. UCRL2 \cite{auerNearoptimalRegretBounds2009} is a well known OFU algorithm.  The second family uses a Bayesian approach, the posterior sampling method introduced in \cite{thompsonLikelihoodThatOne1933a}.  Such algorithms keep a
posterior distribution over possible MDPs and execute the optimal policy of a sampled MDP, see \emph{e.g.}, \cite{ouyangLearningUnknownMarkov2017b,agrawal17,gopalanThompsonSamplingLearning2015,osbandMoreEfficientReinforcement2013a}. PSRL is a classical example of Bayesian learning algorithm.  All these algorithms, based on OFU or on Bayesian principles, have sub-linear bounds on the regret, which means that they provably learn the optimal policy. Yet, applied as-is to Markovian bandit problems, these bounds grow exponentially with the number of arms.

Our work is not the first attempt to exploit the structure of a MDP to improve
learning.  Factored MDPs (the state space can be
factored into $n$ components) are investigated in
\cite{guestrinEfficientSolutionAlgorithms2003}, where asymptotic convergence to
the optimal policy is proved to scale polynomially in the number of components.
The regret of learning algorithms in factored MDP with a factored action space
is considered in 
\cite{tianMinimaxOptimalReinforcement2020,rosenbergOracleEfficientReinforcementLearning2020,xuReinforcementLearningFactored2020a,osbandNearoptimalReinforcementLearning2014a}. Our work differs substantially from these. 
First,  the Markovian bandit problem   is not a  factored MDP because  the action
space is global and  cannot be factored. 
Second, our reward is discounted over an infinite horizon while  factored MDPs have been analyzed with no discount.
Finally, and most importantly, the factored MDP framework assumes that the successive optimal policies are  computed by an unspecified solver. There is no guarantee that the time complexity of this solver scales linearly with the number of components, especially for OFU-based
algorithms.
For Markovian bandits, we get an  additional leverage: when all parameters are known, the Gittins index policy is known to be an optimal policy and its computational complexity is linear in the number of arms. This reveals an interesting difference between Bayesian and extented value based algorithms (the former being scalable and not the latter), which is not present in the literature about factored MDPs because such papers do not consider the time complexity.

Since index policies  scale with the number of arms, using Q-learning approaches
to learn such a policy is also popular \emph{e.g.},
\cite{avrachenkovWhittleIndexBased2020,fuQlearningWhittleIndex2019,duffQLearningBanditProblems1995a}. The authors of \cite{duffQLearningBanditProblems1995a} address the same Markovian bandit
problem as we do: their algorithm learns the optimal value in the
restart-in-state MDP \cite{katehakisMultiArmedBanditProblem1987a} for each
arm and uses Softmax exploration to solve the exploration-exploitation
dilemma. As mentioned on page $250$ in
\cite{auerFinitetimeAnalysisMultiarmed2002}, however, there exists no
finite-time regret bounds for this algorithm. Furthermore, tuning
its hyperparameters (learning rate and temperature) is rather delicate and
unstable  in practice. 

\section{Markovian Bandit Problem}
\label{sec:bandits}

In this section, we introduce the Markovian bandit problem and recall the notion of Gittins index when the parameters $(\br^a,Q^a)$ of all arms are known. 

\subsection{Definitions and Main Notations}
\label{ssec:def}

We consider a Markovian bandit problem with $n$ arms. Each arm
$\langle\calS^a,\br^a,Q^a\rangle$ for $a\in\{1,\dots, n\}=:[n]$ is a Markov
reward process with a finite state space $\calS^a$ of size $S$. Each arm has
a mean reward vector, $\br^a\in[0,1]^S$, and a transition matrix
$Q^a$.  When Arm~$a$ is activated in state $x_a\in\calS^a$, it moves to
state $y_a\in\calS^a$ with probability $Q^a(x_a,y_a)$.  This provides a reward
whose expected value is $r^a(x_a)$.  Without loss of generality, we assume
that the state spaces of the arms are pairwise distinct:
$\calS^a\cap\calS^b=\emptyset$ for $a\ne b$.  In the following, the state of an
arm $a$ will always be denoted with an index $a$: we will denote such a
state by $x_a$ or $y_a$.  As state spaces are disjoint, this allows us to
simplify the notation by dropping the index $a$ from the reward and transition
matrix: when convenient, we will denote them by $r(x_a)$ instead of $r^a(x_a)$
and by $Q(x_a,y_a)$ instead of $Q^a(x_a,y_a)$ since no confusion is possible. 

At time $1$, the global  state $\bX_1$ is distributed according to some initial
distribution $\rho$ over the global state space $\mSpace =
\calS^1{\times}\dots{\times}\calS^n$.  At time $t$, the decision maker observes
the states\footnote{Throughout the paper, we use capital letters (like $X_t$)
to denote random variables and small letter (like $\bx$) to denote their
realizations. Bold letters ($\bX_t$ or $\bx$) design vectors. Normal letters
($X_{t,a}$ or $x_a$) are for scalar values.} of all arms,
$\bX_t=(X_{t,1}\dots X_{t,n})$, and chooses which arm $A_t$ to activate.
This problem can be cast as a MDP -- that we denote by $M$ -- with state
space $\calE$ and action space \([n]\). Let $a\in[n]$ and $\bx,\by\in\calE$. If the
state at time $t$ is $\bX_t=\bx$, the chosen arm is $A_t=a$, then the agent
receives a random reward $R_t$ drawn from some distribution on \([0,1]\) with
mean \(r(x_a)\) and the MDP $M$ transitions to state $\bX_{t+1}=\by$ with
probability \(P^a(\bx,\by)\) that satisfies:
\begin{align}
    \label{eq:defP}
    P^a(\bx,\by) 
    &= 
    \left\{
        \begin{array}{@{}ll@{}}
            Q(x_a,y_a) & \text{ if $x_b=y_b$ for all $b\ne a$;}\\
            0 &\text{ otherwise}.
        \end{array}
    \right.
\end{align}
That is, the active arm makes a transition while the other arms remain in
the same state.

Let $\Pi$ be the set of deterministic policies, \emph{i.e.,} the set of
functions $\pi:\mSpace\mapsto[n]$.  For the MDP $M$, we denote by
$V_{M}^{\pi}(\bx)$ the expected cumulative discounted reward of ${M}$ under policy $\pi$  starting from an initial state $\bx$:
\[ V_{M}^\pi(\bx) {=}
\esp{\sum_{t=0}^{\infty} \beta^tR_t \mid \bX_0{=}\bx, A_t{=}\pi(\bX_t)}.\]
An alternative definition of $V$ is to consider a finite-horizon problem with a geometrically distributed length. Indeed, let $H$ be a time-horizon geometrically distributed with parameter $1-\beta>0$. We have
\begin{equation}
    \label{eq:V-vs_R}
    V_{M}^\pi(\bx)  {=}  \esp{\sum_{t=1}^{H} R_t \mid \bX_1{=}\bx, A_t{=}\pi(\bX_t)}.
\end{equation}

\begin{problem}
\label{pb:problem_1}
Given a Markovian bandit $M$ with $n$ arms, each is a Markov reward process $\langle\calS^a,\br^a,Q^a\rangle$ with a finite state space of size $S$, find a policy $\pi\colon\calS^1{\times}\dots{\times}\calS^n\mapsto[n]$ that maximizes $V_{M}^\pi(\bx)$ for any state $\bx$ distributed according to initial global state distribution $\rho$.
\end{problem}

A policy
$\pi_*$ is optimal for Problem~\ref{pb:problem_1} if $ V_{M}^{\pi_*}(\bx) \ge V_{M}^\pi(\bx) $ for all
$\pi\in\Pi$ and $\bx\in\calE$. By \cite{putermanMarkovDecisionProcesses1994},
such a policy exists and does not depend on $\bx$ (or $\rho$). It is given by Gittins index policy, defined below. 

\subsection{Gittins Index Policy}
\label{ssec:Gittins}

It is possible to compute an optimal
policy $\pi_*$ for Problem~\ref{pb:problem_1} in a reasonable amount of time using the so called Gittins indices:
Gittins defines in  \cite{gittinsBanditProcessesDynamic1979a}  the \emph{Gittins index} for any arm $a$ in state $x_a\in\calS_a$ as 
\begin{equation}
    \label{eq:eq_Gindex}
    \mathrm{GIndex}(x_a)=\sup_{\tau>0}\frac{\esp{\sum_{t=0}^{\tau-1}\beta^{t}r^a(Z_{t}) \mid Z_{0}=x_a}}{\esp{\sum_{t=0}^{\tau-1}\beta^{t}\mid Z_{0}=x_a}},
\end{equation}
where $Z$ is a Markov chain whose transitions are given by $Q^a$ and $\tau$ can
be any stopping time adapted to the natural filtration of $(Z_t)_{t\ge0}$. So, Gittins index can be considered as the maximal reward density over time of an arm at the given state.

It is shown in \cite{gittinsBanditProcessesDynamic1979a} that activating
the arm having the largest current index is an optimal policy. Such a policy
can be computed very efficiently: The computation of the indices of an arm
with $S$ states can be done in \(O(S^3)\) arithmetic operations, which means
that the computation of the Gittins index policy is linear in the number of arms
as it takes \(O(nS^3)\) arithmetic operations.
For more details about Gittins indices and optimality, we refer to
\cite{bookGittins,weberGittinsIndexMultiarmed1992}. For a survey on how to
compute Gittins indices, we refer to \cite{chakravortyMultiArmedBanditsGittins2014}, and to \cite{gast2022computing} for a recent paper that show how to compute Gittins index in subcubic time (\emph{i.e.}, $o(S^3))$ for each of the $n$ arms).

\section{Online Learning and Episodic Regret}
\label{sec:problem}

We now consider an extension of Problem~\ref{pb:problem_1} in which the decision maker does not know the transition matrices
nor the rewards. Our goal is to design a reinforcement learning algorithm that
learns the optimal policy from past observations.  Similarly to what is done
for finite-horizon reinforcement learning with deterministic horizon -- see
\eg, \cite{jinQLearningProvablyEfficient2018,azar2017minimax,osbandMoreEfficientReinforcement2013a} --
we consider a decision maker that faces a sequence of independent replicas of
the same Markovian bandit problem, where the transitions and the rewards 
are drawn independently for each episode.
What is new here is that  the time horizon $H$ is random and has a geometric distribution. It is drawn  independently for each episode. This implies that Gittins index policy is optimal for a decision maker that would know the transition matrices and rewards.

In this paper, we consider {\it episodic learning algorithms}. Let
$H_1,\ldots, H_k$ be the sequence of random episode lengths and let
$t_k:=1{+}\sum_{i=1}^{k-1}H_i$ be the starting time of the $k$th episode. Let 
\(\calO_{k-1}:=(\bX_1,A_1,R_1,\dots,\bX_{t_k-1},A_{t_k-1},R_{t_k-1})\) denote
the observations made prior and up to episode \(k\). An {\it Episodic Learning
Algorithm} \(\Algo\) is a function that maps observations \(\calO_{k-1}\) to
\(\Algo(\calO_{k-1})\), a probability distribution over all policies.  At the
beginning of episode $k$, the algorithm samples \(\pi_k\sim\Algo(\calO_{k-1})\)
and uses this policy during the whole $k$th episode. Note that one could also
design algorithms where learning takes place inside each episode. We will see
later that episodic learning as described here is enough to design algorithms
that are essentially optimal, in the sense given by Theorem~\ref{thm:regret_upper_bound} and Theorem~\ref{thm:lower_bound}.

For an  instance  \(M\) of a Markovian bandit problem and a total number of
episodes $K$, we denote by $\Reg(K,\Algo,M)$ the regret of a
learning algorithm \(\Algo\), defined as
\begin{align}
  \Reg(K,\Algo,M) :=  \sum_{k=1}^K 
      V^{\pi_*}_M(\bX_{t_k}) - V^{\pi_k}_M(\bX_{t_k})
      .\label{eq:regretTraj}
\end{align}
It is the sum over all episodes of the value of the optimal policy $\pi_*$ minus the value obtained by applying the policy $\pi_k$ chosen by the algorithm for episode $k$. In what follows, we will provide bounds on the expected regret.

A no-regret algorithm is an algorithm $\Algo$ such that its expected regret $\esp{\Reg(K,\Algo,M)}$ grows sub-linearly in the number of episodes $K$. This implies that the expected regret over episode $k$  converges to $0$ as $k$ goes to infinity. Such an algorithm learns an optimal policy of Problem~\ref{pb:problem_1}.

Note that, for discounted problems, an alternative regret definition (used for instance in \cite{zhou2021nearly}) is to use the non-episodic version $\sum_{t=1}^T (V^{\pi_*}_M(\bX_{t}) - V^{\pi_t}_M(\bX_{t}))$. In our definition \eqref{eq:regretTraj}, we use an episodic approach where the process is restarted according to $\rho$ after each episode of geometrically distributed length $H_k$.

\section{Learning Algorithms for Markovian Bandits}
\label{sec:learning_algorithms}

In what follows, we present three algorithms having a regret that grows like $\tilde{O}(S\sqrt{nK})$, that we call MB-PSRL, MB-UCRL2 and MB-UCBVI. As their names suggest, these algorithms are adaptation of PSRL, UCRL2 and UCBVI to Markovian bandit problems that intend to overcome the exponentiality in $n$ of their regret. The structure of the three MB-* algorithm is similar and is represented in Figure~\ref{fig:algos}. All algorithms are episodic learning algorithms. At the beginning of each episode, a MB-* learning algorithm computes a new policy $\pi_k$ that will be used during an episode of geometrically distributed length. The difference between the three algorithms lies in the way this new policy $\pi_k$ is computed. MB-PSRL uses posterior sampling while MB-UCRL2 and MB-UCBVI use optimism. We detail the three algorithms below. 

\begin{algorithm}[ht]
    \newcommand{\LineComment}[1]{\hfill\textit{\% #1}}
    \begin{algorithmic}[1]
        \INPUTS
            \STATE Discount factor $\beta$, initial distribution $\rho$ (and a prior distribution $(\phi^a)_{a\in[n]}$ for MB-PSRL)
        \ENDINPUTS
        \FOR{episodes $k=1,2,\dots$}
            \STATE \label{line:new_policy} Compute a new policy $\pi_k$ (using posterior sampling or optimism).
            \STATE Set $t_{k} \gets 1+\sum_{i=1}^{k-1}H_i$, sample $\bX_{t_k}\sim \rho$ and $H_k\sim \mathrm{Geom}(1-\beta)$.
            \FOR{$t\gets t_k$ {\bfseries to} $t_{k}+H_k-1$}
                \STATE Activate arm  $A_t=\pi_k(\bX_t)$.
                \STATE Observe $R_t$ and $\bX_{t+1}$.
            \ENDFOR
        \ENDFOR
    \end{algorithmic}
    \caption{Pseudo-code of the three MB-* algorithms.}
    \label{fig:algos}
\end{algorithm}

\subsection{MB-PSRL}
MB-PSRL starts with a prior distribution \(\phi^{a}\) over the parameters \((\br^a,Q^a)\). At the start of each episode $k$, MB-PSRL computes
a posterior distribution of parameters \(\phi^a(\cdot \mid\calO_{k-1})\) for
each arm $a\in[n]$ and samples parameters \((\br_k^a,Q_k^a)\) from
\(\phi^a(\cdot \mid\calO_{k-1})\) for each arm. Then, MB-PSRL uses \((\br_k^a,Q_k^a)_{a\in[n]}\)
to compute the Gittins index policy $\pi_k$ that is optimal for the sampled
problem.  The policy \(\pi_k\)  is then used for the whole episode \(k\). Note that as $\pi_k$ is a Gittins index policy, it can be computed efficiently. 

The difference between PSRL and MB-PSRL is mostly that MB-PSRL uses a prior distribution tailored to Markovian bandit.  The only hyperparameter of MB-PSRL is the prior distribution $\phi$. As we see in Appendix~\ref{apx:add_numerical},
MB-PSRL seems robust to the choice of the prior distribution, even if a
coherent prior gives a better performance than a misspecified prior, similarly
to what happens for Thompson's sampling \cite{russo2018tutorial}. 

\subsection{MB-UCRL2}
At the beginning of each episode $k$, MB-UCRL2 computes the following quantities for each state $x_a\in\calS^a$: $N_{k-1}(x_a)$ the number of times that Arm~$a$ is activated before episode $k$ while being in state $x_a$, and $\hat{r}_{k-1}(x_a)$, and $\hat{Q}_{k-1}(x_a,\cdot)$ are the empirical means of $r(x_a)$ and $Q(x_a,\cdot)$. We define the confidence bonuses
$b^r_{k-1}(x_a)=\sqrt{\frac{\log(2SnKt_{k})}{2\max\{1,N_{k-1}(x_a)\}}}$
and
$b^Q_{k-1}(x_a)=\sqrt{\frac{2\log(SnK2^St_k)}{\max\{1,N_{k-1}(x_a)\}}}$.
This defines a confidence set $\M_k$  as follows:  a Markovian bandit problem $M'$
is in $\M_k$ if for all $a\in[n]$ and $x_a\in\calS^a$: 
\begin{align}
    \label{eq:conf_rq}
    \abs{{r'}(x_a) - \hat{r}_{k-1}(x_a)} \le b^r_{k-1}(x_a) \text{ and }
    \lVert {Q'}(x_a,\cdot) - \hat{Q}_{k-1}(x_a,\cdot)\rVert_1 \le b^Q_{k-1}(x_a).            
\end{align}
MB-UCRL2 then chooses a policy $\pi_k$ that is optimal for the most optimistic problem $M_k\in\M_k$:
\begin{align}
    \pi_k \in \argmax_{\pi} \max_{M'\in\M_k}V^{\pi}_{M'}(\rho).
    \label{eq:EVI}
\end{align}
Note that as we explain later in Section~\ref{sec:OFU}, we believe that there is no  efficient algorithm to compute the best optimistic policy $\pi_k$ of Equation~\eqref{eq:EVI}.

Compared to a vanilla implementation of UCRL2, MB-UCRL2 uses the structure of the Markovian bandit problem: The constraints \eqref{eq:conf_rq} are on $Q$ whereas vanilla UCRL2 uses constraints on the full matrix $P$ (defined in \eqref{eq:defP}). This leads MB-UCRL2 to use the bonus term that scales as $\sqrt{S/N_{k-1}(x_a)}$ whereas vanilla UCRL2 would use the term in $\sqrt{S^n/N_{k-1}(\bx, a)}$.

\subsection{MB-UCBVI}
At the beginning of episode $k$, MB-UCBVI uses the same quantities $N_{k-1}(x_a)$, $\hat{r}_{k-1}(x_a)$, and $\hat{Q}_{k-1}(x_a,\cdot)$ as MB-UCRL2. The difference lies in the definition of the bonus terms. While MB-UCRL2 uses a bonus on the reward and on the transition matrices, MB-UCBVI defines a bonus $b_{k-1}(x_a){:=}\frac{1}{1-\beta}\sqrt{\frac{\log(2SnKt_k)}{2\max\{1,N_{k-1}(x_a)\}}}$ that is used on the reward only. MB-UCBVI computes the Gittins index policy $\pi_k$ that is optimal for the bandit problem $(\hat{\br}_{k-1}^a {+}b_{k-1}^a, \Qhat_{k-1}^a)_{a\in[n]}$. 

Similarly to the case of UCRL2, a vanilla implementation of UCBVI would use a bonus that scales exponentially with the number of arms. MB-UCBVI makes an even  better use of the structure of the learned problem because the optimistic MDP $(\hat{\br}_{k-1}^a {+}b_{k-1}^a, \Qhat_{k-1}^a)_{a\in[n]}$ is still a Markovian bandit problem. This implies that the optimistic policy $\pi_k$ is a Gittins index policy, and that can therefore be computed efficiently.

\section{Regret Analysis}
\label{sec:analysis}

In this section, we first present upper bounds on the expected regret of the three learning algorithms.  These bounds are sub-linear in the number of episodes (hence the three algorithms are no-regret algorithms) and sub-linear in the number of arms. We then derive a minimax lower bound on the regret of any learning algorithm in the Markovian bandit problem. 

\subsection{Upper Bounds on Regret}
\label{ssec:upper_bound_psrl}

The theorem below provides upper bounds on the expected regret of the three algorithms presented in Section~\ref{sec:learning_algorithms}. Note that since MB-PSRL is a Bayesian algorithm, we consider its \emph{bayesian regret}, that is the expectation over all possible model. More precisely, if the unknown MDP $M$ is drawn from a prior distribution $\phi$, the \textit{Bayesian regret} of a learning algorithm $\Algo$ is $\BayReg(K,\Algo,\phi) = \mathbb{E}[\Reg(K,\Algo,M)]$, where the expectation is taken over all possible values of $M \sim \phi$ and all possible runs of the algorithm. The expected regret $\esp{\Reg(K,\Algo,M)}$ is defined by taking the expectation over all possible runs of the algorithm.

\begin{theorem}
    \label{thm:regret_upper_bound}
    Let $f(S,n,K,\beta)=Sn\p{\log K/(1{-}\beta)}^{2}+ \sqrt{SnK}\p{\log K/(1{-}\beta)}^{3/2}$. There exists universal constants $C, C'$ and $C''$ independent of the model (i.e., that do not depend on $S$, $n$, $K$ and $\beta$) such that:
    \begin{itemize}
        \item For any prior distribution $\phi$:
        \begin{align*}
            \BayReg(K,\text{MB-PSRL},\phi) &\le C\p{\sqrt{S} {+}\log\frac{SnK\log K}{1-\beta}} f(S,n,K,\beta),
        \end{align*}
        \item For any Markovian bandit model $M$:
    \begin{align*}
        \esp{\Reg(K,\text{MB-UCRL2},M)} &\le C'\p{\sqrt{S} {+}\log\frac{SnK\log K}{1-\beta}} f(S,n,K,\beta),\\
        \esp{\Reg(K,\text{MB-UCBVI},M)} &\le C''\p{\frac{\sqrt{S}}{1-\beta}}\p{\log\frac{SnK\log K}{1-\beta}} f(S,n,K,\beta),
    \end{align*}
    \end{itemize}
\end{theorem}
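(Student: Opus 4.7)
The plan is to prove all three bounds through a common template: reduce the regret to the gap between the value of an auxiliary MDP $M_k$ (the sampled MDP for MB-PSRL, the optimistic MDP for MB-UCRL2 and MB-UCBVI) and the true MDP $M$, then exploit the Markovian-bandit structure to control this gap. First I would introduce a good event $\calE$ under which, for every episode $k$, every arm $a$ and every local state $x_a\in\calS^a$, the empirical estimates lie inside the confidence radii: $\abs{\hat r_{k-1}(x_a)-r(x_a)}\le b^r_{k-1}(x_a)$ and $\norm{\hat Q_{k-1}(x_a,\cdot)-Q(x_a,\cdot)}_1\le b^Q_{k-1}(x_a)$. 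Hoeffding's inequality for the rewards and a Weissman-type $L^1$ inequality for the transition rows (which is what brings in the $\sqrt{S}$ factor), combined with a union bound over the $Sn$ local states and the $K$ episodes, make $\proba{\calE^c}$ small enough that after truncating the geometric horizons at $\HK=O(\log K/(1-\beta))$ steps, the contribution of $\calE^c$ is absorbed by the additive $Sn(\log K/(1-\beta))^2$ term of $f$.

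On $\calE$, the three algorithms are handled uniformly. For MB-UCRL2, by construction of the extended problem \eqref{eq:EVI}, $V^{\pi_k}_{M_k}(\bX_{t_k})\ge V^{\pi_*}_M(\bX_{t_k})$. For MB-UCBVI, the $1/(1-\beta)$ inflation in its scalar bonus is precisely what is needed for the Gittins-optimal pair $(\pi_k,M_k)$ of $(\hat\br_{k-1}{+}b_{k-1},\hat Q_{k-1})$ to be optimistic in value: an induction on Bellman updates adapted from the UCBVI analysis, carried arm by arm using the bandit structure, shows that the reward bonus dominates the propagated transition error. For MB-PSRL, the posterior-sampling identity $\esp{V^{\pi_*}_M(\bX_{t_k})\mid\calO_{k-1}}=\esp{V^{\pi_k}_{M_k}(\bX_{t_k})\mid\calO_{k-1}}$ replaces $\pi_*$ by $\pi_k$ because, conditionally on the past, $M_k$ and $M$ are identically distributed. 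In all three cases the regret is thus bounded by $\sum_k\esp{V^{\pi_k}_{M_k}(\bX_{t_k})-V^{\pi_k}_M(\bX_{t_k})}$, up to an $\calE^c$ remainder.

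The central step is a Markovian-bandit simulation lemma expressing this per-episode gap as a discounted sum, along the trajectory of $\pi_k$ in $M$, of one-step Bellman residuals. The key bandit-specific observation is that when $\pi_k$ activates arm $a$ in global state $\bx$, only $r(x_a)$ and the row $Q(x_a,\cdot)$ of the model are probed, so the residual is bounded on $\calE$ by $b^r_{k-1}(x_a)+\tfrac{1}{1-\beta}\,b^Q_{k-1}(x_a)$ for MB-PSRL and MB-UCRL2 (and by the single inflated reward bonus for MB-UCBVI). Crucially, this residual depends on the local state $x_a$ only, not on the $n{-}1$ other coordinates, so the per-state counters that appear are $N_{k-1}(x_a)$ with $\sum_{a,x_a}N_{k-1}(x_a)=t_k-1$; this is what kills the $S^n$ blow-up. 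Taking expectation under $M$, summing over the $K$ episodes of total expected length $T=O(K/(1-\beta))$ (truncated at $\HK$), and applying the standard pigeonhole inequality
\begin{equation*}
    \sum_{k=1}^{K}\sum_{t=t_k}^{t_{k+1}-1}\frac{1}{\sqrt{\max\{1,N_{k-1}(X_{t,A_t})\}}}\ \le\ O\!\p{\sqrt{Sn\,T}},
\end{equation*}
which follows from Cauchy--Schwarz combined with $\sum_{i=1}^N i^{-1/2}\le 2\sqrt{N}$, yields the leading $\sqrt{SnK}\,(\log K/(1-\beta))^{3/2}$ term of $f$. The extra $\sqrt{S}$ factor in the MB-PSRL and MB-UCRL2 bounds comes from $b^Q_{k-1}(x_a)$ scaling as $\sqrt{S/N_{k-1}(x_a)}$, while the $\sqrt{S}/(1-\beta)$ prefactor of MB-UCBVI reflects propagating the empirical-transition error through the value function via a Bernstein-type step.

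The main obstacle is precisely this simulation lemma: a naive application of the generic PSRL/UCRL2/UCBVI simulation lemma produces Bellman residuals indexed by the global state $\bx$ and hence by the global counter $N_{k-1}(\bx,a)$, which would give bonuses in $\sqrt{S^n/N_{k-1}(\bx,a)}$ and recover exactly the exponential-in-$n$ bound we want to avoid. Showing that the residuals collapse onto the activated arm's local state requires using the sparse structure of $P^a$ from \eqref{eq:defP} carefully inside the recursion, and doing so for the Bayesian and the two optimistic cases in a way that makes the same per-arm counting argument apply to all three algorithms.
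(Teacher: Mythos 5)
Your proposal follows essentially the same route as the paper's proof: the same good event built from Hoeffding and Weissman bounds with a union over the $Sn$ local states and a truncation of the geometric horizons at $O(\log K/(1-\beta))$, the same optimism/posterior-sampling treatment of the model term, the same bandit-specific simulation lemma that collapses the Bellman residuals onto the activated arm's local counter $N_{k-1}(x_a)$, and the same pigeonhole conclusion. One small correction: your displayed pigeonhole inequality needs an additive $Sn\max_{k}H_k$ term (the counters are frozen within an episode, so early visits contribute $1$ each), and it is this term --- not the failure event, whose contribution is only $O(1/(1-\beta))$ --- that produces the additive $Sn\left(\log K/(1-\beta)\right)^{2}$ part of $f$.
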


We provide a sketch of proof below. The detailed proof is provided in Appendix~\ref{apx:proof_thm1} in the supplementary material.

This theorem calls for several comments. First, it shows that when $K\ge Sn/(1-\beta)$, the regret of MB-PSRL and MB-UCRL2 is smaller than 
\begin{equation}
    \label{eq:regret_tilde}
    \tilde{O}\left(\frac{S\sqrt{nK}}{(1-\beta)^{3/2}}\right),
\end{equation}
where the notation $\tilde{O}$ means that all logarithmic terms are removed. The regret of MB-UCBVI has an extra $1/(1-\beta)$ factor. 

Hence, the regret of the three algorithms is sub-linear in the number of episodes $K$ which means that they all  are no-regret algorithms. This regret bound is sub-linear in the number of arms which is very significant in practice when facing a large number of arms. Note that directly applying PSRL, UCRL2 or UCBVI would lead to a regret in   $\tilde{O}\left(S^n\sqrt{nK}\right)$ or $\tilde{O}\left(\sqrt{nS^nK}\right)$,  which is exponential in $n$.

Second, the upper bound on the expected regret of MB-UCRL2 (and of MB-UCBVI) is a guarantee for a specific problem $M$ while the bound on Bayesian regret of MB-PSRL is a guarantee in average overall the problems drawn from the prior $\phi$. Hence, the bounds of MB-UCRL2 and MB-UCBVI are stronger guarantee compared to the one of MB-PSRL. Yet, as we will see later in the numerical experiments reported in Section \ref{sec:numerical}, MB-PSRL seems to have a smaller regret in practice, even when the  problem does not follow the correct prior. 

Finally, our bound \eqref{eq:regret_tilde} is linear in $S$, the state size of each arm.
 Having a regret bound linear in the state space size is currently state-of-the-art for Bayesian algorithms, \eg, \cite{agrawal17,ouyangLearningUnknownMarkov2017b}. For optimistic algorithms, the best regret bounds are linear in the square root of the state-space size because they use  Bernstein's concentration bounds instead of Weissman's inequality \cite{azar2017minimax}, yet this approach does not work in the discounted case because of the random length of episodes. UCBVI has also been studied in the discounted case in \cite{zhou2021nearly}. However they use  with a different definition of the regret,  making their bound on the regret hard to compare with ours.


\subsubsection*{Sketch of proof}
    A crucial ingredient of our proof is to work with value function over a random finite time horizon ($W$ defined below), instead of working directly with the discounted value function $V$. For a given model $M$, and a stationary policy $\pi$, a horizon $H$ and a time $h\le H$, we define by $W_{M,{h:H}}^\pi(\bx)$ the value function of a policy $\pi$ over the finite time horizon ${H-h+1}$ when starting in $\bx$ at time $h$. It is defined as
    \begin{equation}
        \label{eq:discounted_reward}
        W_{M,{h:H}}^\pi(\bx)  =  r^\pi(\bx) {+}\sum_{\by\in\calE}P^\pi(\bx,\by)W_{M,{h+1:H}}^\pi(\by),
    \end{equation}
    with $W_{M,{H:H}}^\pi(\bx)=r^\pi(\bx)$ and where $r^\pi$ and $P^\pi$ are reward vector and state transition matrix when following policy $\pi$.

    By definitions of $W$ in \eqref{eq:discounted_reward} and $V$ in \eqref{eq:V-vs_R}, for a fixed model $M$, a policy $\pi$ and a state $\bx$, and a time-horizon $H$ that is geometrically distributed, one has $V^{\pi}_M(\bx)=\esp{W_{M,1:H}^\pi(\bx)}$.

    This characterization is important in our proof. Since the episode length $H_k$ is independent of the observations available before episode $k$, $\calO_{k-1}$, for any policy $\pi_k$ that is independent of $H_k$, one has
    \begin{align}
        \esp{V^{\pi_k}_M(\bX_{t_k}) \mid {\calO_{k-1}}, \pi_k } 
        &= \esp{W^{\pi_k}_{M,1:H_k}(\bX_{t_k}) \mid \calO_{k-1}, \pi_k}.
        \label{eq:equivalence_V_W}
    \end{align}
    In the above Equation~\eqref{eq:equivalence_V_W}, the expectation is taken over all initial state $\bX_{t_k}$ and all possible horizon $H_k$.
    
    Equation~\eqref{eq:equivalence_V_W} will be very useful in our analysis as it allows us to work with either $V$ or $W$ interchangeably. While the proof of MB-PSRL could be done by only studying the function $W$, the proof of MB-UCRL2 and MB-UCBVI will use the expression of the regret as a function of $V$ to deal with the non-determinism. Indeed, at episode $k$, all algorithms compare the optimal policy $\pi_*$ (that is optimal for the true MDP $M$) and a policy $\pi_k$ chosen by the algorithm (that is optimal for a MDP $M_k$ that is either sampled by MB-PSRL or chosen by an optimistic principle).  The quantity $\Delta_k := W_{M,1:H_k}^{\pi_*}(\bX_{t_k}) - W_{M_k,1:H_k}^{\pi_k}(\bX_{t_k})$ equals:
    \begin{align}
        \underbrace{W_{M,1:H_k}^{\pi_*}(\bX_{t_k}) {-} W_{M_k,1:H_k}^{\pi_k}(\bX_{t_k})}_{\customlabel{eq:A}{(A)}}%
        + \underbrace{W_{M_k,1:H_k}^{\pi_k}(\bX_{t_k}) {-} W_{M,1:H_k}^{\pi_k}(\bX_{t_k})}_{\customlabel{eq:B}{(B)}}.
        \label{eq:proofAB}
    \end{align}
    The analysis of the term \eqref{eq:B} is similar for the three algorithms: it is bounded by the distance between the sampled MDP $M_k$ and the true MDP $M$ that can in turn be bounded by using a concentration argument (Lemma~\ref{lem:concentration}) based on Hoeffding's and Weissman's inequalities.  Compared with the litterature \cite{azar2017minimax,ouyangLearningUnknownMarkov2017b}, our proof leverages on taking conditional expectations, making all terms whose conditional expectation is zero disappear. One the of main technical hurdle is to deal with the $K$ random episodes $H_1,\ldots, H_k$.   This is also new in our approach compared to the classical analysis of finite horizons regrets. 

    The analysis of \eqref{eq:A} depends heavily on the algorithm used. The easiest case is PSRL: As our setting is Bayesian, the expectation of the first term \eqref{eq:A} with respect to the model is zero (see Lemma~\ref{lem:bayesian}).   The case of MB-UCRL2 and MB-UCBVI are harder. In fact, our bonus terms are specially designed so that $V_{M_k}^{\pi_k}(\bx)$ is an optimistic upper bound of the true value function with high probability, that is:
    \begin{align}
        \label{eq:optimism}
        V_{M_k}^{\pi_k}(\bx) = \max_{\pi} \max_{M'\in\mathbb{M}_k} V_{M'}^{\pi}(\bx) \ge V_M^{\pi_*}(\bx).
    \end{align}
    This requires the use of  $V$ and not $W$ and is used to show that the expectation of the term \eqref{eq:A} of \eqref{eq:proofAB} cannot be too positive. 

\subsection{Minimax Lower Bound}
\label{ssec:lowerbound}

After obtaining upper bounds on the regret, a natural question is: can we do better? Or in other terms, does there exist a learning algorithm with a smaller regret? To answer this question, the metric used in the literature is the notion of minimax lower bound: for a given set of parameters $(S,n,K,\beta)$, a minimax lower bound is a lower bound on the quantity 
$\inf_{\Algo}\sup_{M}\Reg(K,\Algo,M)$,  where the supremum is taken among all possible models that have parameters $(S,n,K,\beta)$ and the infimum is taken over all possible learning algorithms. The next theorem provides a lower bound on the Bayesian regret. It is therefore stronger than a minimax bound for two reasons: First, the Bayesian regret is an average over models, which means that there exists at least one model that has a larger regret than the Bayesian lower bound; And second,  in Theorem~\ref{thm:lower_bound}, we allow the algorithm to depend on the prior distribution $\phi$ and to use this information.
\begin{theorem}[Lower bound]
    \label{thm:lower_bound}
    For any state size $S$, number of arms $n$, discount factor $\beta$ and number of episodes $K\ge 16S$, there exists a prior distribution $\phi$ on Markovian bandit problems with parameters $(S,n,K,\beta)$ such that, for any learning algorithm $\Algo$:
    \begin{align}
        \label{eq:thm2}
        \BayReg(K,\Algo,\phi) \geq \frac1{60}\sqrt{\frac{{SnK}}{(1-\beta)}}.
    \end{align}
\end{theorem}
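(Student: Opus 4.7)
}

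My plan is to build a hard prior by reducing to a classical multi-armed bandit (MAB) with $nS$ independent Bernoulli arms and an effective horizon $T=K/(1{-}\beta)$, for which a $\sqrt{nS\,T}$ lower bound is standard. Concretely, I would let the prior $\phi$ be a product Bernoulli-like prior: independently for each $a\in[n]$ and each $i\in[S]$ draw $\theta_{a,i}\in\{-1,+1\}$ uniformly. Given $\theta$, the $a$-th arm is the Markov reward process on $\calS^a=\{1,\dots,S\}$ with \emph{deterministic cyclic} transitions $i\mapsto (i \bmod S)+1$, initial state $i=1$, and reward in state $i$ drawn from Bernoulli$\p{\tfrac12+\epsilon\,\theta_{a,i}}$ for a gap $\epsilon$ to be tuned. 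With such cyclic transitions, the order in which the states of arm $a$ are visited is entirely determined by the algorithm's past activations of $a$, so the trajectory is equivalent to a sequence of pulls of $nS$ ``meta-arms'' $(a,i)$, whose only constraint is that after pulling $(a,i)$ the next pull of arm $a$ must be $(a,(i\bmod S)+1)$.

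The second step is to rewrite the Bayesian regret in ``MAB form''. Using \eqref{eq:V-vs_R}, $\BayReg(K,\Algo,\phi)$ equals $\esp{\sum_{k=1}^K\sum_{t=t_k}^{t_k+H_k-1}\p{r^{\pi_*}(\bX_t^*)-r^{\pi_k}(\bX_t)}}$, where the expectation is over $\theta\sim\phi$, the $H_k$'s and all randomness of $\Algo$. For a fixed $\theta$, the Gittins-optimal policy prefers (a,i) pairs with $\theta_{a,i}=+1$, so a suboptimal pull incurs instantaneous regret at least $2\epsilon\cdot\ind{\text{suboptimal }\theta}$. I would group the total expected number of pulls of meta-arm $(a,i)$ as $\bar N_{a,i}$, note $\sum_{a,i}\bar N_{a,i}=\sesp{\sum_k H_k}=K/(1-\beta)$, and lower bound the regret by
\[
\BayReg \;\ge\; 2\epsilon\sum_{a,i}\esp{N_{a,i}^{\mathrm{sub}}},
\]
where $N_{a,i}^{\mathrm{sub}}$ counts pulls of $(a,i)$ done after a hypothesis test for $\theta_{a,i}$ has been made and is wrong. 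Using a Pinsker-type argument on the posterior for a single coordinate $\theta_{a,i}$ (as in the classical $\Omega(\sqrt{n T})$ MAB proof of Bubeck--Cesa-Bianchi, or directly through KL between two product measures $P_{+}$ and $P_{-}$ differing only in the sign of $\theta_{a,i}$), the Bayesian probability of a wrong decision after $\bar N_{a,i}$ pulls is at least $\tfrac12\p{1-\sqrt{2\epsilon^2\bar N_{a,i}}}$. Hence, applying Cauchy--Schwarz to $\sum_{a,i}\sqrt{\bar N_{a,i}}\le\sqrt{nS\cdot K/(1-\beta)}$ and optimising over $\epsilon$, I choose $\epsilon=\tfrac12\sqrt{nS(1-\beta)/K}$ (legitimate as soon as $K\ge 16S$ so that $\epsilon\le1/2$ gives a valid Bernoulli parameter) and obtain $\BayReg\ge \tfrac1{60}\sqrt{SnK/(1-\beta)}$ after tracking the constants.

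The step I expect to be hardest is making the ``regret per suboptimal pull is $\ge 2\epsilon$'' claim fully rigorous in the discounted, Gittins-driven world: one must show that on the cyclic construction, the Gittins index policy under $\theta$ prefers meta-arms with $\theta_{a,i}=+1$ by a margin of at least $\Omega(\epsilon)$ uniformly in the state, and that the algorithm's mistakes translate into $\Omega(\epsilon)$ per-step regret in the expected sum of rewards over a geometric horizon rather than only in the discounted value. To bridge the two, I would rely on \eqref{eq:V-vs_R} to work with expected undiscounted sums over $H\sim\mathrm{Geom}(1-\beta)$, which linearises the regret into the MAB form above and makes Pinsker directly applicable. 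The computation of the Gittins margin on the cyclic construction can be done explicitly because each arm is a deterministic cycle of Bernoullis, for which the index is a simple average of future rewards, and flipping one $\theta_{a,i}$ changes this average by $\Theta(\epsilon/S)$ in state $i$ but by at least $\Omega(\epsilon)$ \emph{at the moment that state is visited}; it is the latter quantity that enters the per-pull regret. Finally, the random number of episodes and the fact that $\sum_k H_k$ concentrates around $K/(1-\beta)$ (Chernoff on geometric sums, valid for $K\ge 16S$) allow me to replace random horizons by their expectation and lose only a universal constant, absorbed in the final $1/60$.
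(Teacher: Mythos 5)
Your construction is genuinely different from the paper's, and it has two gaps that I do not think can be repaired without changing the construction. First, the pivotal claim that ``a suboptimal pull incurs instantaneous regret at least $2\epsilon$'' is unjustified and, for cyclic deterministic chains, essentially false. The Gittins index of arm $a$ in state $i$ is a discounted average over a \emph{prefix of the future cycle}, so it depends on $\theta_{a,i},\theta_{a,i+1},\dots$ jointly, not on $\theta_{a,i}$ alone: an arm sitting in a $-1$ state followed by several $+1$ states can legitimately have the highest index, and conversely. Moreover, the optimal policy itself must traverse $-1$ states as it advances each cycle, so its per-step benchmark is not $\tfrac12+\epsilon$; the quantity $V^{\pi_*}-V^{\pi_k}$ does not decompose into (number of wrong hypothesis tests)$\times 2\epsilon$ the way it does in a state-free MAB. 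Establishing a uniform $\Omega(\epsilon)$ margin here is not a technical detail to be ``tracked'' at the end -- it is the heart of the proof, and your plan does not contain an argument for it. Second, and more fatally, the construction cannot produce the $\sqrt{S}$ factor in all parameter regimes covered by the theorem. Every episode restarts all arms in state $1$ and has expected length $1/(1-\beta)$, so arm $a$ is activated about $1/(n(1-\beta))$ times per episode; states $i\gg 1/(n(1-\beta))$ are essentially never reached by \emph{any} policy (optimal or learner) and hence contribute nothing to the regret. When $1/(1-\beta)\ll nS$ the effective number of meta-arms collapses well below $nS$, while the theorem must hold for every $\beta$, $n$, $S$ with only $K\ge 16S$. (Relatedly, your $\epsilon=\tfrac12\sqrt{nS(1-\beta)/K}\le\tfrac12$ needs $K\gtrsim nS(1-\beta)$, which $K\ge16S$ does not guarantee for large $n$.)

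For comparison, the paper sidesteps both issues by making each arm's transition matrix the \emph{identity}: arms never change state within an episode, the initial distribution couples all arms to a common index $i$ drawn uniformly from $\{1,\dots,S\}$, and for each $i$ one arm $a^*_i$ (chosen uniformly and independently across $i$) gets the slightly better Bernoulli reward. Conditioned on the initial index, each episode is literally a standard $n$-armed stochastic bandit, so the $\tfrac1{20}\sqrt{n\tau}$ Bayesian lower bound of Bubeck--Cesa-Bianchi applies verbatim to each of the $S$ independent subproblems; the $\sqrt{S}$ factor comes from summing over the $S$ subproblems, each of which receives $\tau= K/(2S(1-\beta))$ steps with probability at least $1/2$ (a Chebyshev bound on $\sum_k G_kH_k$, which is where $K\ge16S$ is used). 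If you want to keep a reduction-to-MAB strategy, freezing the arms as the paper does is the step that makes the per-pull regret decomposition and the independence across the $S$ subproblems actually hold.
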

The proof is given in Appendix~\ref{apx:sketch_of_proof_lower} and uses a counterexample inspired by the one of \cite{auerNearoptimalRegretBounds2009}. Note that for general MDPs, the minimax lower bound  obtained in \cite{osband2016lower,auerNearoptimalRegretBounds2009} says that  a learning algorithm cannot have a regret smaller than $\Omega\big(\sqrt{\tilde{S}\tilde{A}\tilde{T}}\big)$, where $\tilde{S}$ is the number of states of the MDP, $\tilde{A}$ is the number of actions and $\tilde{T}$ is the number of time steps. 
Yet, the lower bound of \cite{osband2016lower,auerNearoptimalRegretBounds2009} is not  directly applicable to our case with $\tilde{S}=S^n$ because Markovian bandit problems are very specific instances of MDPs and this can be exploited by the  learning algorithm.
Also note that this lower bound on the Bayesian regret is also a lower bound on the expected regret of any non-Bayesian algorithm for any MDP model $M$.

Apart from the logarithmic terms, the lower bound provided by Theorem~\ref{thm:lower_bound} differs from the bound of Theorem~\ref{thm:regret_upper_bound} by a factor $\sqrt{S}/(1-\beta)$. This factor is similar to the one observed for PSRL and UCRL2~\cite{osbandMoreEfficientReinforcement2013a,auerNearoptimalRegretBounds2009}. There are various factors that could explain this. We believe that the extra factor  $1/(1-\beta)$ might be half due to the episodic nature of MB-PSRL and MB-UCRL2 (when $1/(1-\beta)$ is large, algorithms with internal episodic updates might have smaller regret) and half due to the fact that the lower bound of Theorem~\ref{thm:lower_bound} is not optimal and could include a term $1/\sqrt{1-\beta}$ (similar to the term $O(\sqrt{D})$ of the lower bound of \cite{osband2016lower,auerNearoptimalRegretBounds2009}). The factor $\sqrt{S}$ between our two bounds comes from our use of Weissman's inequality. It might be possible that our regret bounds  are not optimal with respect to this term although such an improvement cannot be obtained using the same approach as in  \cite{azar2017minimax}. 

\section{Scalability of Learning Algorithms For Markovian Bandits}

Historically, Problem~{\ref{pb:problem_1}} was considered unresolved until \cite{gittinsBanditProcessesDynamic1979a} proposed Gittins indices. This is because previous solutions were based on Dynamic Programming in the global MDP which are computationally expensive. Hence, after establishing regret guarantees, we are now interested in the computational complexity of our learning algorithms, which is often disregarded in the learning litterature.

\subsection{MB-PSRL and MB-UCBVI are scalable}
\label{sec:OFU}



If one excludes the simulation of the MDP, the computational cost of MB-PSRL and MB-UCBVI of each episode is low. For MB-PSRL, its cost is essentially due to three components: Updating the observations, sampling from the posterior distribution and computing the optimal policy. The first two are relatively fast when the conjugate posterior has a closed form: updating the observation takes $O(1)$ at each time, and sampling from the posterior can be done in $O(nS^2)$ -- more details on posterior distributions are given in Appendix~\ref{apx:algos}. When the conjugate posterior is implicit (\emph{i.e.}, under the integral form), the computation can be higher but remains linear in the number of arms. For MB-UCBVI, the cost is due to two components: computing the bonus terms and computing the Gittins policy for the optimistic MDP. Computing the bonus is linear in the number of bandits and the length of the episode. As explained in Section~\ref{ssec:Gittins}, the computation of the Gittins index policy for a given problem can be done in $O(nS^3)$. Hence, MB-PSRL and MB-UCBVI successfully escape from the curse of dimensionality.

\subsection{MB-UCRL2 is not scalable because it cannot use an Index Policy}
\label{ssec:no-OFU}

While MB-UCRL2 has a regret equivalent to the one of MB-PSRL, its computational complexity, and in particular the complexity of computing an \emph{optimistic} policy that maximizes \eqref{eq:EVI} does not scale with $n$. Such a policy can be computed by using \emph{extended value iteration} \cite{auerNearoptimalRegretBounds2009}. This computation is polynomial in the number of states of the global MDP and is therefore exponential in the number of arms, precisely $O(nS^{2n})$. 
For MB-PSRL (or MB-UCBVI), the computation is easier because the sampled (optimistic) MDP is a Markovian bandit problem. Hence, using Gittins Theorem, computing the optimal policy can be done by computing local indices. In the following, we show that it is not possible to solve \eqref{eq:EVI} by using local indices. This suggests that MB-UCRL2 (nor any of the modifications of UCRL2's variants that would use extended value iteration) cannot be implemented efficiently.

\medskip

More precisely, to find an optimistic policy (that satisfies \eqref{eq:optimism}), UCRL2 and its variants, \eg, KL-UCRL \cite{filippiOptimismReinforcementLearning2010a}, compute a policy $\pi_k$ that is optimal for the most optimistic MDP in $\M_k$. This can be done by using extended value iteration. We now show that this cannot be replaced by the computation of local indices.

Let us consider that the estimates and confidence bounds for a given arm $a$ are $\hatB^a=(\hat{\br}^a,\hat{Q}^a,b^r_a,b^{Q}_a)$. We say that an algorithm computes indices locally for  Arm $a$ if for each $x_a\in\calS^a$, it computes an index $I^{\hatB^a}(x_a)$ by using only $\hatB^a$ but not $\hatB^b$ for any $b\ne a$. We denote by $\pi^{I(\hatB)}$ the index policy that uses index $I^{\hatB^a}$ for arm $a$ and by $\M(\hatB)$ the set of Markovian bandit problems $M'$ that satisfy \eqref{eq:conf_rq}.
\begin{theorem}
    \label{thm:no_OFU}
    For any algorithm that computes indices locally, there exists a Markovian bandit problem $M$, an initial state $\bx$ and estimates $\hatB^a=(\hat{\br}^a,\hat{Q}^a,b^r_a, b^Q_a)$ such that $M\in\mathbb{M}(\hatB)$ and
    \begin{align*}
        \sup_{M'\in\mathbb{M}(\hatB)}V^{\pi^{I(\hatB)}}_{M'}(\bx) < \sup_\pi V^{\pi}_{M}(\bx).
    \end{align*}
\end{theorem}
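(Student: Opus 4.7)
I would prove the theorem by exhibiting an explicit counterexample. The guiding intuition is that the map from $\hatB$ to the policy $\pi^{I(\hatB)}$ factors across arms: each arm's index depends only on that arm's data, so the policy cannot take into account the relative standing of the arms under any specific model in the confidence set. Once $\pi^{I(\hatB)}$ is committed, $\sup_{M'\in\mathbb{M}(\hatB)} V^{\pi^{I(\hatB)}}_{M'}$ can only redistribute parameters within each arm's confidence set, and this redistribution benefits only the arms the policy actually plays. By contrast, $\sup_\pi V^\pi_M$ freely exploits the true parameters of every arm in $M$, including those the local policy ignores, which is where the gap comes from.

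Concretely, my plan is to use a two-arm template. Arm~$1$ is a single-state arm with known reward $r^1$, so its Gittins index is exactly $r^1$ and the constant-play value is $r^1/(1-\beta)$. Arm~$2$ has two states $\{s,t\}$ with $r(s)=0$, $r(t)=1$, uncertain transition $q=Q(s,t)$ constrained to a confidence interval $[q_{\min},q_{\max}]$, and $t$ absorbing. A direct recursion gives
\[
V^{\text{play arm 2}}(s;q) \;=\; \frac{\beta q}{(1-\beta)(1-\beta(1-q))},
\]
strictly increasing in $q$. Given the local index function $I$, I would choose $\hatB^1,\hatB^2$ together with $r^1$ and the true $q\in[q_{\min},q_{\max}]$ so that: (i)~$I(\hatB^2)$ strictly exceeds $I(\hatB^1)$ at every reachable state of arm~$2$, forcing $\pi^{I(\hatB)}$ to commit to arm~$2$; (ii)~the confidence interval of arm~$2$ is tight enough that $r^1 > \beta q_{\max}/(1-\beta(1-q_{\max}))$; and (iii)~in the true model the Gittins-optimal action is to play arm~$1$, giving $\sup_\pi V^\pi_M(\bx)=r^1/(1-\beta)$. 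Combining (ii) and (iii) yields
\[
\sup_{M'} V^{\pi^{I(\hatB)}}_{M'}(\bx) \;=\; \frac{\beta q_{\max}}{(1-\beta)(1-\beta(1-q_{\max}))} \;<\; \frac{r^1}{1-\beta} \;=\; \sup_\pi V^\pi_M(\bx),
\]
which is the claim.

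The main obstacle is the first calibration step: for an arbitrary local index function~$I$, producing $\hatB^1,\hatB^2$ so that $I$ strictly ranks arm~$2$ above arm~$1$ while arm~$2$'s confidence bounds remain tight enough to keep the achievable value of arm~$2$ below $r^1/(1-\beta)$. The freedom available is the choice of empirical entries $(\hat r^a,\hat Q^a)$ and bonuses $(b^r_a,b^Q_a)$ in $\hatB^a$; these are unconstrained by any real sample size. The delicate part is that the value of $I(\hatB^2)$ and the effective confidence upper bound $q_{\max}$ are both functions of $\hatB^2$, so one must separate them, which generically requires enriching arm~$2$ to several states so that locally computed Gittins-type indices at different states can be inflated independently of the arm's overall realized value. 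A short case analysis on the structure of $I$ (reward-monotone versus optimism-monotone) finishes the argument, reducing to a single-state construction with a mismatched point estimate whenever the multi-state mechanism is unnecessary.
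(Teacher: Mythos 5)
There is a genuine gap in the central calibration step, and it is not a technicality that a ``short case analysis'' can repair: your two-arm gadget is provably defeated by a legitimate local-index algorithm, namely the one that assigns to each state of Arm~2 the Gittins index of the \emph{locally most optimistic} parameters in $\hatB^2$ (here $I(s)=\beta q_{\max}/(1-\beta(1-q_{\max}))$, $I(t)=1$) and to Arm~1 the index $r^1$. Check it on your own template: if $\beta q_{\max}/(1-\beta(1-q_{\max}))>r^1$ this index plays Arm~2 forever and $\sup_{M'}V^{\pi^I}_{M'}(\bx)=\frac{\beta q_{\max}}{(1-\beta)(1-\beta(1-q_{\max}))}\ge\max\bigl(\frac{r^1}{1-\beta},V^{\mathrm{play\,2}}(s;q_M)\bigr)=\sup_\pi V^\pi_M(\bx)$ for every true $q_M\le q_{\max}$; if the inequality is reversed it plays Arm~1 forever and again matches $\sup_\pi V^\pi_M(\bx)=r^1/(1-\beta)$. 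So \eqref{eq:optimism} holds on \emph{every} instance of your gadget for this algorithm. The root cause is that your conditions for the two branches are mutually exclusive for this index: branch (i)--(ii) needs Arm~2 ranked above Arm~1 while its whole confidence set stays below $r^1/(1-\beta)$, and the complementary branch needs the opposite, and the locally-optimistic index always lands on the correct side of the single threshold $r^1$. A single known one-state companion arm provides only one priority threshold, which is too little structure to force a mistake.

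This is exactly why the paper's proof (Appendix~\ref{apx:proof_OFU}) is more elaborate. It first plays a perfectly known three-state arm $b$ against a one-state arm of variable reward $\mu$ to \emph{force} any correct local index to satisfy $I(B_1)=I(B_3)>I(B_2)$; it then introduces an uncertain three-state arm $a$ whose confidence set contains two models $M^1,M^2$ that require $A_1$ to be ranked respectively above and below the tied pair $\{B_1,B_3\}$; and it verifies via extended value iteration that whichever of the two orderings the local index commits to, one of $M^1,M^2$ violates optimism even after taking $\sup_{M'\in\M}$ on the left-hand side (Equation~\eqref{eq:counter-example1}). Your closing remarks about ``enriching arm~2 to several states'' and separating the index from the achievable value do point at the right obstruction, but the proposal as written neither constructs such an enrichment nor shows that the reward-monotone/optimism-monotone dichotomy exhausts all local index functions, so the proof does not go through.
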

\begin{proof}
    The proof presented in Appendix~\ref{apx:proof_OFU} is obtained by constructing a set $\M$ and two MDPs $M_1$ and $M_2$ in $\M$ such that \eqref{eq:optimism} cannot hold simultaneously for both $M_1$ and $M_2$. 
\end{proof}
This theorem implies that one cannot define local indices such that \eqref{eq:optimism} holds for all bandit problems $M\in\M_k$. Yet, the use of this inequality is central in the regret analysis of UCRL2 (see the proof of UCRL2 in \cite{auerNearoptimalRegretBounds2009}). This implies that the current methodology to obtain regret bounds for UCRL2 and its variants, \eg,\cite{bourel2020tightening,fruitEfficientBiasSpanConstrainedExplorationExploitation2018,talebiVarianceAwareRegretBounds2018,filippiOptimismReinforcementLearning2010a}, that use Extended Value Iteration is not applicable to bound the regret of their modified version that computes indices locally. 

Note that for any set $\M$ such that $M \in \M$, there still exists an index policy $\pi^{\mathrm{ind}}$ that is optimistic because all MDPs in $\M$ are Markovian bandit problems. This optimistic index policy satisfies
\begin{align*}
  \sup_{M'\in\M}V^{\pi^{\mathrm{ind}}}_{M'} \ge \sup_\pi V^{\pi}_{M}.
\end{align*}
This means that restricting to index policies is not a restriction for optimism. What Theorem~\ref{thm:no_OFU} shows is that an optimistic index policy can be defined only after the most optimistic MDP $M\in\M$ is computed and computing optimistic policy and $M$ simultaneously depends on the confidence sets of all arms.

Therefore, we  believe that UCRL2 and its variants cannot compute optimistic  policy locally: they should all require the joint knowledge of all $(\hatB^a)_{a\in[n]}$.
\section{Numerical Experiments}
\label{sec:numerical}

In complement to our theoretical analysis, we report, in this section, the performance of our three algorithms in a model taken from the literature. 
The model is an environment with 3 arms, all following a Markov chain that is obtained by applying the optimal policy on the river swim MDP. A detailed description is given in Appendix~\ref{apx:algos}, along with all hyperparameters that we used. Our numerical experiments suggest that MB-PSRL outperforms other algorithms in term of average regret and is computationally less expensive than other algorithms. To ensure reproducibility, the code and data of our experiments are available (link to  GitHub repository hidden for double blind review).

\paragraph{Performance Result}
We investigate the average regret and policy computation time of each algorithm.
To do so, we run each algorithm for $80$ simulations and for $K=3000$ episodes per simulation. We arbitrarily choose the discount factor $\beta=0.99$. In \figurename~\ref{fig:randomwalk_3b4s}, we show the average cumulative regret of the 3 algorithms. We observe that the average regret of MB-UCBVI is larger than those of MB-PSRL and MB-UCRL2.
Moreover, we observe that MB-PSRL obtains the best performance and that its regret seems to grow slower than $O(\sqrt{K})$. This is in accordance to what was observed for PSRL in \cite{osbandMoreEfficientReinforcement2013a}. 
Note that the expected number of time steps after $K$ episodes is $K/(1-\beta)$ which means that in our setting with $K=3000$ episodes there are $300\,000$ time steps in average. 
In \figurename~\ref{fig:randomwalk_cpt_3b4s}, we compare the computation time of the various algorithms. We observe that the computation time (the $y$-axis is in log-scale) of MB-PSRL and MB-UCBVI, the index-based algorithms, are the fastest by far. 
Moreover, the computation time of these algorithms seem to be independent of the number of episodes. 
These two figures show that MB-PSRL has the smallest regret and computation time among all compared algorithms.

\begin{figure}
    \centering
    \subfigure[Average cumulative regret in function of the number of episodes.]{
        \label{fig:randomwalk_3b4s}
        \includegraphics[width=0.47\linewidth]{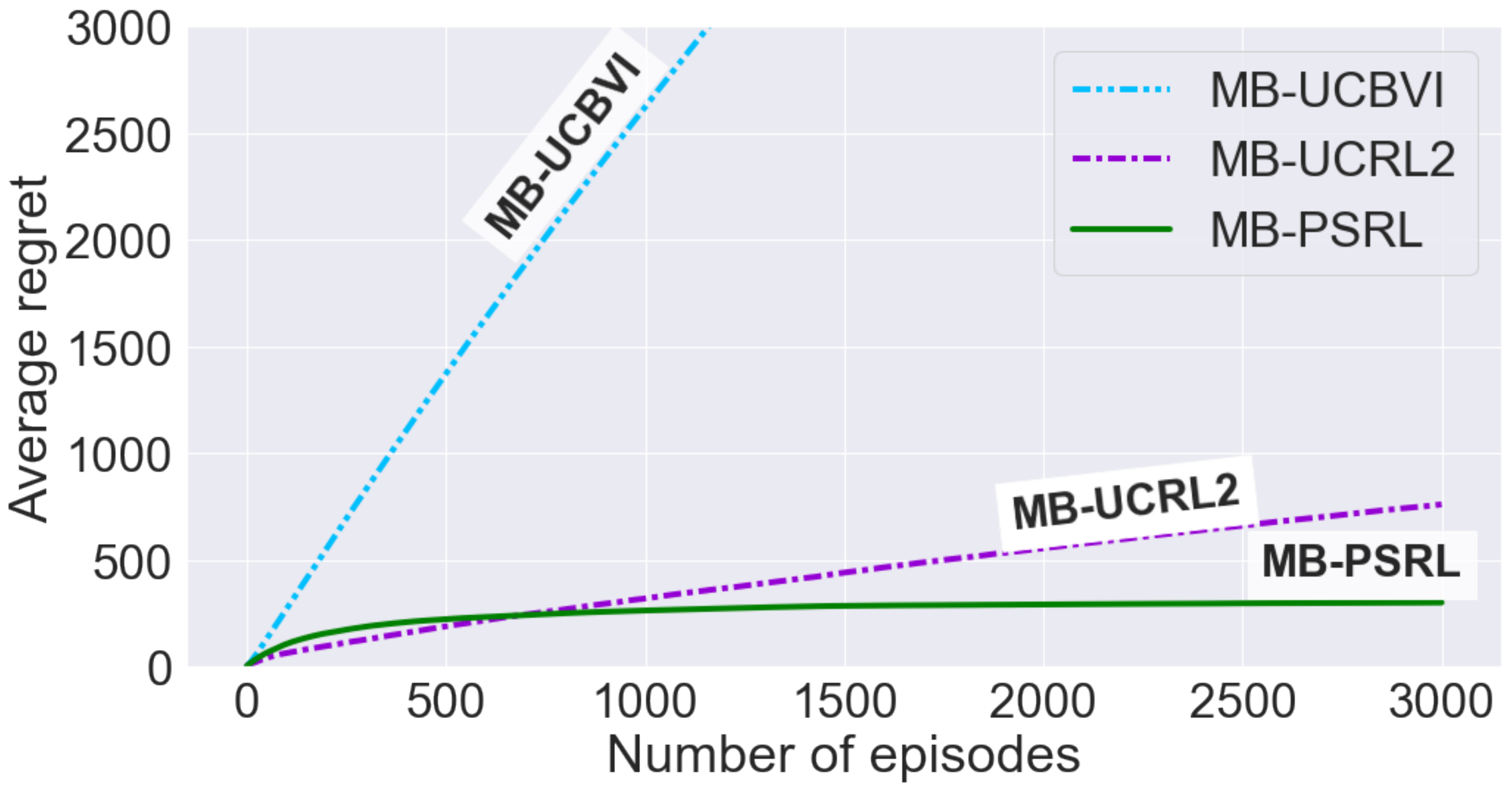} 
    }\hfill
    \subfigure[Average runtime per episode. The vertical axis is in log-scale.]{
        \label{fig:randomwalk_cpt_3b4s}
        \includegraphics[width=0.47\linewidth]{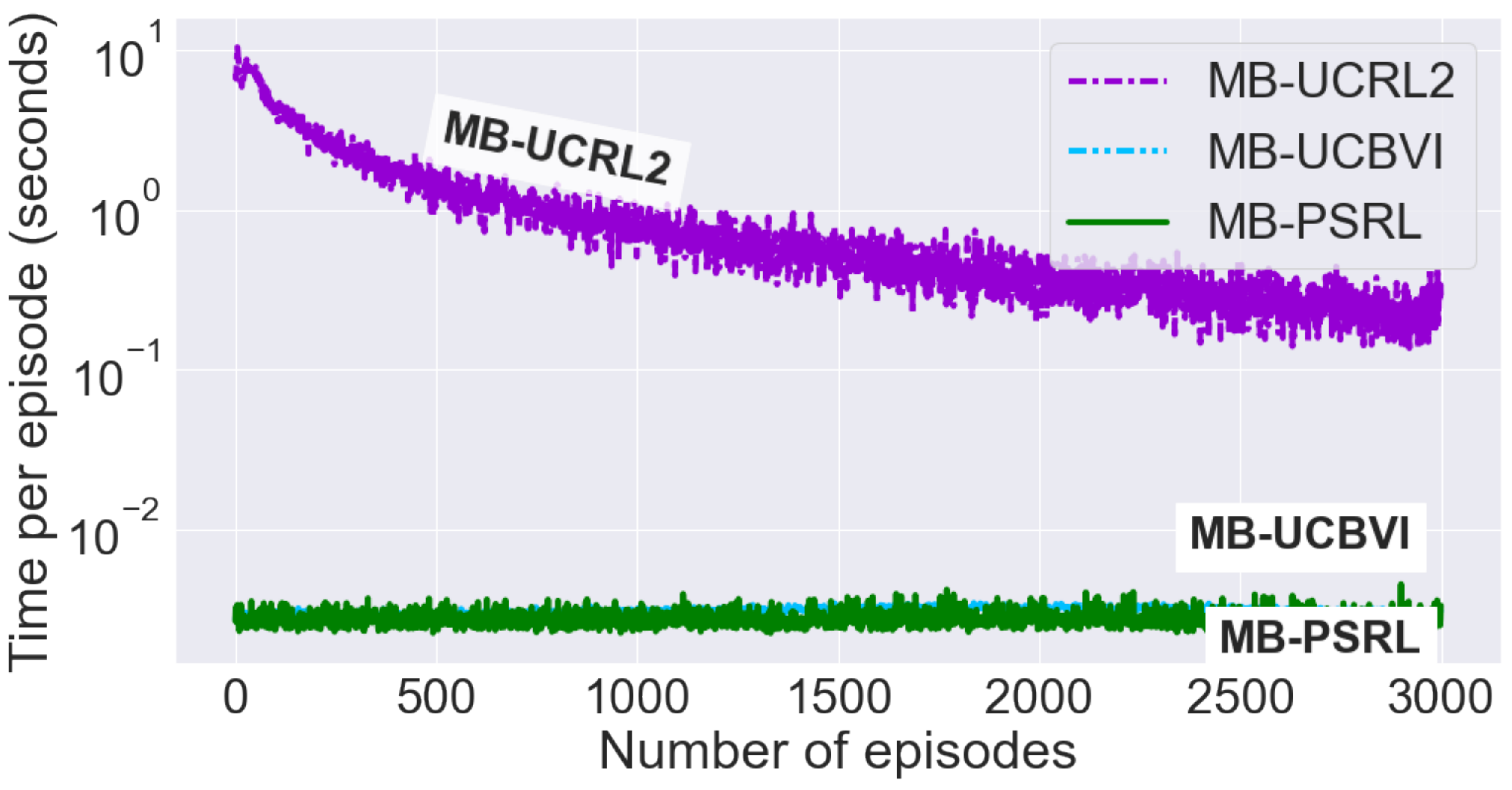}
    }
    \caption{Experimental result for the three 4-state random walk arms given in Table~\ref{fig:randomwalk}. The $x$-axis is the number of episodes. Each algorithm is identified by a unique color for all figures.}
\end{figure}

\paragraph{Robustness (Larger Models and Different Priors)}
To test the robustness of MB-PSRL, we conduct two more sets of experiments that are reported in Appendix~\ref{apx:add_numerical}. They confirm the superiority of MB-PSRL.
The first experiment is an example from \cite{duffQLearningBanditProblems1995a} with $9$ arms each having $11$ states. This model illustrates the effect of the curse of dimensionality: the global  MDP has $11^9$ states which implies that the runtime of MB-UCRL2 makes it impossible to use, while MB-PSRL and MB-UCBVI take a few minutes to complete 3000 episodes.  Also in this example, MB-PSRL seems to converge faster to the optimal policy than MB-UCBVI. The second experiment tests the robustness of MB-PSRL to the choice of prior distribution. We provide numerical evidences that show that, even when MB-PSRL is run with a prior $\phi$ that is not the one from which $M$ is drawn, the regret of MB-PSRL remains acceptable (around twice the regret obtained with a correct prior). 

\section{Conclusion}
\label{sec:conclusion}



In this paper, we present MB-PSRL, a modification of PSRL for Markovian bandit problems. We show that its regret is close to the lower bound that we derive for this problem while its runtime scales linearly with the number of arms. Furthermore, and unlike what is usually the case, MB-PSRL does not have an optimistic counterpart that scales well: we prove that  MB-UCRL2 also has a sub-linear regret but has a computational complexity exponential in the number of arms. This result generalizes to all  the variants of UCRL2 that rely on extended value iteration. We nevertheless show that OFU approach may still be pertinent for Markovian bandit problem:  MB-UCBVI, a version of UCBVI can use Gittins indices and  does not suffer from the dimensionality curse: it has a sub-linear regret in terms of the number of episodes and number of arms as well as a linear time complexity. However its regret remains larger than with MB-PSRL.



\bibliographystyle{splncs04}
\bibliography{refGittinsPS}

\appendix



\vfill
\textbf{All appendix are given in the supplementary material.}

The appendix are organized as follows:
\begin{itemize}
    \item In Appendix~\ref{apx:proof_thm1}, we prove Theorem~\ref{thm:regret_upper_bound}. 
    \item In Appendix~\ref{apx:sketch_of_proof_lower}, we obtain a lower bound of the regret of any reinforcement learning algorithm for Markovian bandits (Theorem~\ref{thm:lower_bound}).
    \item In Appendix~\ref{apx:proof_OFU}, we show that \eqref{eq:EVI} cannot be solved by local indices (Theorem~\ref{thm:no_OFU}).
    \item In Appendix~\ref{apx:algos}, we provide a detailed description of the algorithms that we use in our numerical comparisons. 
    \item In Appendix~\ref{apx:add_numerical}, we provide additional numerical experiments that show the good behavior of MB-PSRL. 
    \item In Appendix~\ref{apx:envi}, we provide details about the experimental environment and the computation time needed. 
\end{itemize}

\newpage


\section{Proof of Theorem~\ref{thm:regret_upper_bound}}
\label{apx:proof_thm1}

The proof of the regret bounds for our three algorithms share a common structure but with different technical details.  In this section, we do a detailed proof of the three algorithms by factorizing as much as possible what can be factorized in the different proofs. This proof is organized as follows:
\begin{itemize}
    \item In Section~\ref{ssec:overview}, we give an overview of the proof that is common to all algorithms. 
    \item In Section~\ref{ssec:technical_lemmas}, we provide technical lemmas that are used in the detailed proofs of each algorithms. 
    \item In Section~\ref{ssec:proof_PSRL}, \ref{ssec:proof_UCRL2} and \ref{ssec:proof_UCBVI}, we provide detailed analysis of MB-PSRL, MB-UCRL2, and MB-UCBVI. 
\end{itemize}

\subsection{Overview of the Proof}
\label{ssec:overview}

Let $\pi_*$ be the optimal policy of the true MDP $M$ and $\pi_k$ the optimal policy for $M_k$, the sampled MDP at episode $k$. Recall that the expected regret is $\sum_{k=1}^K \esp{\Delta_k}$, where $\Delta_k {=} W_{M,1:H_k}^{\pi_*}(\bX_{t_k}) {-} W_{M,1:H_k}^{\pi_k}(\bX_{t_k})$.
For each of the three algorithms, we will define an event $\calE^\text{Algo}_{k-1}$ that is $\calO_{k-1}$-measurable. $\calE^\text{Algo}_{k-1}$ is true with high probability and guarantees that $M$ and $M_k$ are close.
We have:
\begin{align}
    \esp{\Delta_k}
    &= \esp{\Delta_k\ind{\lnot\calE_{k-1}^\text{Algo}} +\Delta_k\ind{\calE_{k-1}^\text{Algo}} } \nonumber\\
    &\le \esp{H_k}\Proba{\lnot\calE_{k-1}^\text{Algo}} + \esp{\Delta_k \ind{\calE_{k-1}^\text{Algo}}} \label{eq:gap_overview}
\end{align}
because $\Delta_k\le H_k$ and the random variables $H_k$ and $\ind{\calE_{k-1}^\text{Algo}}$ are independent.
For each of the three algorithms, the policy $\pi_k$ used at episode $k$ is optimal for a model $M_k$, that is either sampled from the posterior distribution for MB-PSRL, or computed by extended value iteration for MB-UCRL2, or equal to the model with the bonus for MB-UCBVI. We have
\begin{align*}
    \Delta_k=\underbrace{W_{M,1:H_k}^{\pi_*}(\bX_{t_k}) - W_{M_k,1:H_k}^{\pi_k}(\bX_{t_k})}_{:=\Delta^{model}_k}%
    + \underbrace{W_{M_k,1:H_k}^{\pi_k}(\bX_{t_k}) - W_{M,1:H_k}^{\pi_k}(\bX_{t_k})}_{:=\Delta^{conc}_k}.
\end{align*}
As we deal with the expected regret and $H_k$ is independent of the model $M_k$ and of the policy $\pi_k$, we have:
\begin{align}
    \esp{\Delta^{model}_k} = V_{M}^{\pi_*}(\bX_{t_k}) - V_{M_k}^{\pi_k}(\bX_{t_k}) \label{eq:def_model}
\end{align}
As we see later, the above equation can be used to show that $\esp{\Delta^{model}_k \ind{\calE_{k-1}^\text{Algo}}}$ is either $0$ (for MB-PSRL) or non-positive (for MB-UCRL2 or MB-UCBVI). 

We are then left with $\esp{\Delta^{conc}_k \ind{\calE_{k-1}^\text{Algo}}}$. To do so, we use Lemma~\ref{lem:regret_decomposition} to show that there exists a constant $B_k$ (equal to $H_k$ for MB-PSRL and MB-UCRL2 and $H_kL_{k-1}/(2(1-\beta))$ for MB-UCBVI) such that 
\begin{align}
    &\esp{\Delta^{conc}_k\ind{\calE_{k-1}^\text{Algo}}}=\esp{\ind{\calE_{k-1}^\text{Algo}} \Big(W_{M_k,1:H_k}^{\pi_k}(\bX_{t_k}){-}W_{M,1:H_k}^{\pi_k}(\bX_{t_k})\Big)} \nonumber\\
    & {\le} \mathbb{E}\left[\ind{\calE_{k-1}^\text{Algo}} \sum_{t=t_k}^{t_{k+1}{-}1}\abs{r_k(X_{t,A_t}){-}r(X_{t,A_t})}{+}B_k\norm{Q_{k}(X_{t,A_t},\cdot){-}Q(X_{t,A_t},\cdot)}_1\right] \label{eq:proof2}
\end{align}
where ${\norm{Q_{k}(x_a,\cdot)-Q(x_a,\cdot)}_1=\sum_{y_a}\abs{Q_{k}(x_a,y_a)-Q(x_a,y_a)}}$. 
For an arm $a$ and a state $x_a\in\calS^a$, we denote\footnote{In the paper, we use the notation $\ind{E}$ to denote a random variable that equals $1$ if $E$ is true and $0$ otherwise. For instance, $\ind{Y_i=y}=1$ if $Y_i=y$ and $0$ otherwise.} by $N_{k-1}(x_a){=} \sum_{t=1}^{t_{k}-1} \ind{X_{t,A_t}=x_a}$ the number of times that Arm~$a$ is activated before episode $k$ while being in state $x_a$. Equation~\eqref{eq:proof2} relates the performance gap to the distance between the reward functions and transition matrices of the MDPs $M$ and $M_k$. 
With $L_K{=}\sqrt{2\log\frac{4SnK^2\log K}{1-\beta}}$, the event $\calE_{k-1}^\text{Algo}$ guarantees that for all $a, x_a$ and $ k\ge1$, 
\begin{align}
    \label{eq:concentration}
    \abs{r_k(x_a){-}r(x_a)} {\le} \frac{L_K}{\sqrt{\max\{1,N_{k-1}(x_a)\}}} \text{ and }
    \norm{Q_{k}(x_a,\cdot){-}Q(x_a, .)}_1 {\le} \frac{2L_K {+}3\sqrt{S}}{\sqrt{\max\{1,N_{k-1}(x_a)\}}}
\end{align}
We use this with Equation~\eqref{eq:proof2} to show that:
\begin{align}
    \label{eq:proof3}
    \sum_{k=1}^K\esp{\Delta^{conc}_k \ind{\calE^\text{Algo}_{k-1}}} &\le  \esp{C^\text{Algo}_K\sum_{k=1}^K\sum_{t=t_k}^{t_{k+1}-1}\frac{1}{\sqrt{\max\{1,N_{k-1}(x_a)\}}}},
\end{align}
where $C^\text{Algo}_K$ is a random variable that depends on the algorithm studied. 

The final analysis takes care of the right term of \eqref{eq:proof3} and is more technical. It uses the fact that there cannot be too many large terms in this sum because if an arm is activated many times, then $1/\sqrt{N_{k-1}(X_{t,A_t})}$ is small. 
The main technical hurdle here is to deal with the $K$ random episodes $H_1,\ldots, H_K$. 
This is specific to our approach compared to the analysis of finite horizons. 
To bound this, one needs to bound terms of the form $\esp{\max_{1\leq k\leq K} (H_k)^\alpha}$ with $\alpha\in\{1.5,2\}$ (see Equation~\eqref{eq:proof_bound}). 
To bound this, we use the geometric distribution of $H_k$ to show that $\esp{\max_{1\leq k\leq K} (H_k)^{\alpha}}=O((\frac{\log K}{1-\beta})^{\alpha})$ (see Lemma~\ref{lem:moment}).

\subsection{Technical lemmas common to the three algorithms}
\label{ssec:technical_lemmas}

In this section, we establish a series of lemmas that are true for any learning algorithm used. They show that:
\begin{itemize}
    \item The estimates $\hat{r}$ and $\hat{Q}$ concentrates on their true values (Lemma~\ref{lem:concentration};
    \item One can transform $\Delta^{conc}_k$ into Equation~\eqref{eq:proof2} (Lemma~\ref{lem:regret_decomposition});
    \item The sum \eqref{eq:proof3} can be analyzed (Lemma~\ref{lem:sum}).
\end{itemize}

\subsubsection{High Probability Events}

Recall that $\calO_{k-1}$ are the observations collected by the decision maker before episode $k$. Based on $\calO_{k-1}$, we compute the empirical estimators of reward vector and transition matrix as the following: For all $a\in[n]$ and any $x_a\in\calS^a$, let $N_{k-1}(x_a)=\sum_{t=1}^{t_{k}-1}\ind{X_{t,A_t}=x_a}$ be the number of times so far that an arm $a$ was activated in state $x_a$ (at episode $1$, we have ${N_0(x_a)=0}$).
Recall that $t_{k}{:=} 1 {+}\sum_{i=1}^{k-1}H_i$, and that $\hat{r}_{k-1}$ and $\hat{Q}_{k-1}$ are the empirical mean reward vector and transition matrix. More precisely, $\hat{r}_{k-1}(x_a)$ is the empirical mean reward earned when arm $a$ is chosen while being in state $x_a$:
\begin{align*}
    \hat{r}_{k-1}(x_a) = \frac{1}{N_{k-1}(x_a)}\sum_{t=1}^{t_{k}-1} R_t\ind{A_t=a \land X_{t,A_t}=x_a},
\end{align*}
and $\hat{Q}_{k-1}(x_a,y_a)$ is the fraction of times that arm $a$ moved from $x_a$ to $y_a$:
\begin{align*}
    \hat{Q}_{k-1}(x_a,y_a) = \frac{1}{N_{k-1}(x_a)}\sum_{t=1}^{t_{k}-1} \ind{A_t=a \land X_{t,A_t}=x_a \land X_{t+1,A_t}=y_a}.
\end{align*}

We design confidence sets similar to \cite{auerNearoptimalRegretBounds2009,bartlettREGALRegularizationBased2009}.

\begin{lemma}
    \label{lem:concentration}
    For any $k\le K$, let $L_{k-1}=\sqrt{2\log(\frac{2SnK(k-1)\log(K(k-1))}{1-\beta})}$. Let
    \begin{align}
        \calE_{k-1}^H&:= \bigg\{\forall k'\le k{-}1{:} H_{k'}\le \frac{\log(K(k-1))}{1-\beta}\bigg\} \label{eq:conc_h}\\
        \calE_{k-1}^r&:= \bigg\{\forall a\in[n], x_a\in\calS^a, k'\le k{-}1{:} \abs{\rhat_{k'}(x_a){-}r(x_a)} \le \frac{L_{k-1}}{2\sqrt{\max\{1,N_{k'}(x_a)\}}}\bigg\} \label{eq:conc_r} \\
        \calE_{k-1}^Q&:= \bigg\{\forall a\in[n], x_a\in\calS^a, k'\le k{-}1{:} \norm{\Qhat_{k'}(x_a,\cdot){-}Q(x_a, \cdot)}_1 \le \frac{L_{k-1}{+}1.5\sqrt{S}}{\sqrt{\max\{1,N_{k'}(x_a)\}}}\bigg\}. \label{eq:conc_q} \\
        \calE_{k-1}^V&:= \bigg\{\forall a\in[n], \bx\in\mSpace, k'\le k{-}1{:} |\rhat_{k'}(x_a){-}r(x_a) \nonumber \\
        &\qquad +\beta \sum_{\by}(\Phat_{k'}^a(\bx,\by) {-}P^a(\bx,\by))V_M^{\pi_*}(\by)|
        \le \frac{L_{k-1}}{2(1-\beta)\sqrt{\max\{1,N_{k'}(x_a)\}}}\bigg\} \label{eq:conc_V}
    \end{align}
    Then, the above events are all $\calO_{k-1}$-measurable. Moreover:
    \begin{align*}
        &\Proba{\lnot\calE_{k-1}^H}\le1/K \\ 
        &\Proba{\lnot \calE_{k-1}^r }\le 2/K \\
        &\Proba{\lnot \calE_{k-1}^Q }\le 2/K \\
        &\Proba{\lnot \calE_{k-1}^V }\le 2/K.
    \end{align*} 
\end{lemma}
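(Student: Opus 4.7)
The plan is to prove each of the four tail bounds separately, combining a concentration inequality with an appropriate union bound. Measurability is immediate in all four cases: $\calE_{k-1}^H$ is a deterministic function of $H_1,\dots,H_{k-1}$, and the other three events depend only on the counts $N_{k'}(x_a)$ and the empirical estimates $\hat r_{k'}$, $\hat Q_{k'}$ for $k'\le k{-}1$, all of which are functions of $\calO_{k-1}$. For $\calE_{k-1}^H$ itself, each $H_{k'}$ is geometric with parameter $1{-}\beta$, so $\Proba{H_{k'}>h}\le\beta^h\le e^{-(1-\beta)h}$; choosing $h=\log(K(k{-}1))/(1{-}\beta)$ gives a per-episode tail of $1/(K(k{-}1))$, and a union bound over $k'\le k{-}1$ yields $1/K$.

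For $\calE_{k-1}^r$ and $\calE_{k-1}^Q$, I would fix $(a,x_a)$ and apply Hoeffding (respectively Weissman) to the empirical mean of the first $N$ i.i.d.\ observations at $(a,x_a)$ for every candidate $N$, thereby dealing with the random sample size $N_{k'}(x_a)$ by a stratification over $N$. The definition of $L_{k-1}$ is calibrated so that the per-$(a,x_a,N)$ failure probability is of order $(1{-}\beta)/[SnK(k{-}1)\log(K(k{-}1))]$. Conditioning on $\calE_{k-1}^H$ bounds the effective range of $N$ by $t_k-1\le(k{-}1)\log(K(k{-}1))/(1{-}\beta)$, and the resulting triple union bound over $(a,x_a,N)$ cancels the denominator exactly, producing a conditional probability $\le 1/K$; adding $\Proba{\lnot\calE_{k-1}^H}\le 1/K$ then gives the announced $2/K$. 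For $\calE_{k-1}^Q$, the extra $2^S$ factor from Weissman's inequality is absorbed by the $1.5\sqrt{S}$ offset, since $(L_{k-1}+1.5\sqrt{S})^2/2\ge L_{k-1}^2/2+S\log 2$.

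For $\calE_{k-1}^V$, the starting observation is that at each activation of arm $a$ in state $x_a$, the one-step lookahead $R_t+\beta V_M^{\pi_*}(\bx_{-a},X_{t+1,a})$ (with $\bx_{-a}$ treated as a fixed parameter and $V_M^{\pi_*}$ being deterministic) is i.i.d.\ in $[0,1/(1{-}\beta)]$ with mean exactly $r(x_a)+\beta\sum_\by P^a(\bx,\by)V_M^{\pi_*}(\by)$, so that Hoeffding on this compound variable yields a per-$(a,x_a,N,\bx)$ failure probability of $2e^{-L_{k-1}^2/2}$ at the announced scale $L_{k-1}/(2(1{-}\beta)\sqrt{N})$. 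The main obstacle is the union bound over $\bx\in\mSpace$, which is a priori of size $S^n$ and would overwhelm $L_{k-1}^2=O(\log(SnK/(1{-}\beta)))$. My plan is to reduce this enumeration to the $\bx_{-a}$'s that are actually reachable given the observations prior to episode $k$ --- at most $t_k{-}1$ such values, hence polynomial in $S,n,K,1/(1{-}\beta)$ under $\calE_{k-1}^H$ --- which is enough for every invocation of $\calE_{k-1}^V$ in the regret proof. With this restriction, the same stratification strategy as above carries through, and combining with $\Proba{\lnot\calE_{k-1}^H}\le 1/K$ yields $\Proba{\lnot\calE_{k-1}^V}\le 2/K$.
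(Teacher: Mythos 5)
Your treatment of $\calE_{k-1}^H$, $\calE_{k-1}^r$ and $\calE_{k-1}^Q$ is correct and is essentially the paper's own argument: bound the geometric tails and union over episodes; then, under $\calE_{k-1}^H$, replace the random $t_k$ by the deterministic $\tau_k=(k-1)\log(K(k-1))/(1-\beta)$, stratify over the possible values of $N_{k'}(x_a)$, apply Hoeffding (resp.\ Weissman), and take the union bound over $(a,x_a,N)$; the calibration of $L_{k-1}$ and the absorption of the $2^S$ factor by $1.5\sqrt{S}$ via $(L_{k-1}+1.5\sqrt S)^2/2\ge L_{k-1}^2/2+(S-1)\log 2$ are exactly as in the paper. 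No issues there.

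The gap is in your handling of $\calE_{k-1}^V$. The event as stated quantifies over \emph{all} $\bx\in\mSpace$, and your proposed repair --- running the union bound only over the $\bx_{-a}$'s appearing in $\calO_{k-1}$, of which there are at most $t_k-1$ --- proves a strictly weaker event. Your justification that this weaker event ``is enough for every invocation of $\calE_{k-1}^V$'' is where the argument fails: $\calE_{k-1}^V$ is used in Lemma~\ref{lem:ucbvi_optim} to establish the componentwise vector inequality $b_{k-1}^{\pi_*}+\rhat_{k-1}^{\pi_*}-r^{\pi_*}+\beta(\Phat_{k-1}^{\pi_*}-P^{\pi_*})V_M^{\pi_*}\ge 0$, which is then multiplied by the nonnegative matrix $(I-\beta\Phat_{k-1}^{\pi_*})^{-1}=\sum_{j\ge0}\beta^j(\Phat_{k-1}^{\pi_*})^j$ to conclude $V_{M_k}^{\pi_k}\ge V_M^{\pi_*}$. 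The Neumann series propagates through every state reachable under the \emph{empirical} dynamics $\Phat_{k-1}^{\pi_*}$ from $\bX_{t_k}$, and since $\Phat^a(\bx,\cdot)$ is assembled from the local estimates $\Qhat(x_a,\cdot)$, the reachable set is generically the full product of locally visited states --- exponentially many global states, not the at most $t_k-1$ states that actually occur in the trajectory. So the restricted event does not support the optimism argument, and the inequality $\Proba{\lnot\calE_{k-1}^V}\le 2/K$ for the event \emph{as stated} is not established by your argument. To your credit, you have put your finger on a real subtlety that the paper's one-line proof (``following the same approach above'') glosses over: a literal union bound over $\bx\in\mSpace$ costs an extra $(n-1)\log S$ inside $L_{k-1}^2/2$ (for fixed $(a,x_a,k')$ the randomness is a single sample of size $N_{k'}(x_a)$, but the $S^{n-1}$ test functions $V_M^{\pi_*}(\bx_{-a},\cdot)$ give $S^{n-1}$ distinct bad events), and the stated $L_{k-1}$ does not contain such a term. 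The correct response, however, is to enlarge $L_{k-1}$ (or otherwise exploit structure), not to shrink the event.
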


\begin{proof}
    For event $\calE_{k-1}^H$, since $\{H_{k'}\}_{k'\le k{-}1}$ are i.i.d. and geometrically distributed with parameter $(1-\beta)$, we have that
    \begin{align*}
        \proba{\exists k'\le k-1: H_{k'}>\epsilon} \le \sum_{k'=1}^{k-1} \proba{H_{k'}>\epsilon} =(k-1)\beta^{\lfloor \epsilon\rfloor}.
    \end{align*}
    Then, with $\epsilon=\frac{\log(1/(K(k-1)))}{\log(\beta)}$, we get $\proba{\exists k'\le k{-}1: H_{k'}>\epsilon}\le 1/K$.
    Moreover,
    \begin{align*}
        \epsilon = \frac{\log(1/(K(k-1)))}{\log(\beta)}= \frac{\log(K(k-1))}{\log(1/\beta)} < \frac{\log(K(k-1))}{1-\beta}.
    \end{align*}
    Then, $\proba{\exists k'\le k{-}1: H_{k'}>\frac{\log(K(k-1))}{1-\beta}}\le 1/K$.

    Let $\tau_k = (k-1)\frac{\log(K(k-1))}{1-\beta}$. Under event $\calE_{k-1}^H$, the random variable $t_k$ is upper bounded by the deterministic quantity $\tau_k$. In what follows, we assume that event $\calE_{k-1}^H$ holds.

    For event $\calE_{k-1}^r$, let $\tilde{r}_\ell(x_a)$ be a random variable that is the empirical mean of $\ell$ i.i.d. realization of the reward when the arm in state $x_a$ is chosen. In particular, ${\hat{r}_{k-1}(x_a) = \tilde{r}_{N_{k{-}1}(x_a)}(x_a)}$. By Hoeffding's inequality, for any $\epsilon>0$, one has:
    \begin{align*}
        \Proba{\abs{\tilde{r}_\ell(x_a) - r(x_a)} \ge \epsilon  } \le 2 e^{-2 \ell\epsilon^2}.
    \end{align*}
    In particular, this holds for $\epsilon=\sqrt{\frac{\log (2SnK\tau_k)}{2\ell}}$. As $N_{k-1}(x_a)< \tau_k$, by using the union-bound, this implies that: 
    \begin{align}
        &\Proba{\calE_{k-1}^H \land \exists a, x_a, k'\le k{-}1: \abs{\hat{r}_{k'}(x_a) - r(x_a)} \ge \sqrt{\frac{\log (2SnK\tau_k)}{2N_{k'}(x_a)}}}\label{eq:estR_k}\\
        &\qquad\le \sum_{a}\sum_{x_a}\Proba{\exists \ell\in\{1,\dots,\tau_k-1\}: \abs{\tilde{r}_{\ell}(x_a) - r(x_a)} \ge \sqrt{\frac{\log (2SnK\tau_k)}{2\ell}}}\nonumber\\
        &\qquad\le \sum_{\ell=1}^{\tau_k}\sum_a\sum_{x_a}\Proba{\abs{\tilde{r}_{\ell}(x_a) - r(x_a)} \ge \sqrt{\frac{\log (2SnK\tau_k)}{2\ell}}}\nonumber\\
        &\qquad\le  n S \sum_{\ell=1}^{\tau_k} 2e^{-2 \ell \frac{\log (2SnK\tau_k)}{2\ell}} =1/K,\nonumber
    \end{align}
    where the second and third  line is the union on all possible events $N_{k'}(x_a){=}\ell$ for all ${\ell{\in}\{1,\dots, \tau_k-1\}}$.
    In total this says $\proba{\calE_{k-1}^H\land \lnot \calE_{k-1}^r}\le 1/K$.
    Now, $\lnot \calE_{k-1}^r {=}(\calE_{k-1}^H \land \lnot \calE_{k-1}^r) \lor (\lnot \calE_{k-1}^H \land \lnot \calE_{k-1}^r)$.
    Then, using union bound,
    \begin{align*}
        \proba{\lnot \calE_{k-1}^r}
        &\le \proba{\lnot \calE_{k-1}^r \land \calE_{k-1}^H} + \proba{\lnot \calE_{k-1}^r \land \lnot \calE_{k-1}^H} \\
        &\le \proba{\lnot \calE_{k-1}^r \land \calE_{k-1}^H} + \proba{\lnot \calE_{k-1}^H} \le 2/K
    \end{align*}

    The event $\calE_{k-1}^Q$ is similar but by using Weissman's inequality \cite{weissman2003inequalities} instead of Hoeffding's bound. Indeed, by using Equation~(8) in Theorem~2.1 of \cite{weissman2003inequalities}, if $N_{k-1}(x_a)$ was not a random variable, one would have 
    \begin{align*}
        \Proba{\norm{\hat{Q}_{k-1}(x_a,\cdot){-}Q(x_a,\cdot)}_1\ge \epsilon} \le 2^S e^{- N_{k-1}(x_a) \epsilon^2/2}.
    \end{align*}
    Following the same approach as for Equation~\eqref{eq:estR_k} with $\epsilon=\sqrt{2\log(SnK\tau_k2^S)/N_{k-1}(x_a)}$, we use the union-bound to show that:
    \begin{align*}
        &\Proba{\calE_{k-1}^H\land\exists a, x_a, k'{\le}k{-}1: \norm{\hat{Q}_{k'}(x_a,\cdot){-}Q(x_a,\cdot)}_1 {\ge} \sqrt{\frac{2\log(SnK\tau_k 2^S)}{N_{k'}(x_a)}}} \\
        &\qquad \le \tau_k n S 2^Se^{ -N_{k'}(x_a)\frac{2\log(SnK\tau_k2^S)}{2N_{k'}(x_a)}} = 1/K.
    \end{align*}
    By definition of $L_{k-1}=\sqrt{2\log(2SnK\tau_k)}$ and since $\sqrt{x+y}\le\sqrt{x}+\sqrt{y}$, we have
    \begin{align*}
      \sqrt{2\log(SnK\tau_k2^S)}
      &=\sqrt{2\log(2SnK\tau_k) {+} 2(S-1)\log 2}\\
      & \le L_{k-1} +\sqrt{2(S-1)\log2} \le L_{k-1} + 1.5\sqrt{S}.
    \end{align*}
    Hence: 
    \begin{align*}
        &\Proba{\calE_{k-1}^H\land\exists a, x_a, k'\le k{-}1: \norm{\hat{Q}_{k'}(x_a,\cdot)-Q_{}(x_a,\cdot)}_1 \ge \frac{L_{k-1}+1.5\sqrt{S}}{\sqrt{N_{k'}(x_a)}}} \le 1/K.
    \end{align*}
    As done for $\calE^r_{k-1}$, we have $\lnot \calE_{k-1}^Q {=}(\calE_{k-1}^H \land \lnot \calE_{k-1}^Q) \lor (\lnot \calE_{k-1}^H \land \lnot \calE_{k-1}^Q)$. With the same process, we get $\proba{\lnot \calE_{k-1}^Q}\le 2/K$.


    For event $\calE_{k-1}^V$, we have that $\rhat_{k-1}+\Phat_{k-1}V_M^{\pi_*}$ is the empirical mean of $r+PV_M^{\pi_*}$.
    This is because $V_M^{\pi_*}$ is deterministic and $\rhat_{k-1}$ and $\Phat_{k-1}$ are empirical mean of $r$ and $P$ respectively.
    Using Hoeffding's inequality and following the same approach above, we have $\proba{\neg \calE_{k-1}^V}\le 2/K$.
\end{proof}
Note that Lemma~\ref{lem:concentration} is about the statistical properties of the observations $\calO_{k-1}$ in the observation space.
These properties are true for any learning algorithms.
In fact, we will combine different events of this lemma to bound the regret of our algorithm accordingly.

\subsubsection{Concentration Gap}

At episode $k$, our algorithms believe that the unknown MDP $M$ is the MDP $M_k$.
For Bayesian algorithms, $M_k$ is sampled from posterior distribution while for optimistic algorithms, $M_k$ is chosen with respect to optimism principle.
The algorithms follow the policy $\pi_k$ that is optimal for $M_k$.
Recall that $W^{\pi_k}_{M,1:H_k}(\bx)$ is the expected reward of the MDP $M$ under policy $\pi_k$, starts in state $\bx$ and lasts for $H_k$ time steps and the expected cumulative discounted reward in $M$ starting from state $\bx$ under policy $\pi_k$ is $V_{M}^{\pi_k}(\bx){=}\mathbb{E}[W^{\pi_k}_{M, 1:H_k}(\bx)]$ where $H_k\sim \mathrm{Geom}(1-\beta)$ is the horizon of episode $k$.

\begin{lemma}
    \label{lem:regret_decomposition}
    For episode $k$, let $B_k\in\R^+$ be an upper bound\footnote{We will use $B_k=H_k$ for MB-PSRL and MB-UCRL2 and $B_k=H_kL_{k-1}/(2(1-\beta))$ for MB-UCBVI.} of $W_{M_k,1:H_k}^{\pi_k}(\bx)$, i.e., a constant $B_k$ such that for any $\bx\in\mSpace$, $W_{M_k,1:H_k}^{\pi_k}(\bx)\le B_k$. We have,
    \begin{align}
        \label{eq:lem_decompo}
        &\esp{ \Delta_k^{conc} {\mid} \calO_{k-1}, H_k, M_k, M} {=} \esp{ W_{M_k,1:H_k}^{\pi_k}(\bX_{t_k}){-}W_{M,1:H_k}^{\pi_k}(\bX_{t_k}) {\mid} \calO_{k-1}, H_k, M_k, M}\nonumber \\
        &{\le}\mathbb{E}\bigg[\sum_{t=t_k}^{t_{k+1}-1}\abs{r_k(X_{t,A_t}){-}r(X_{t,A_t})}
        {+}B_k\norm{Q_{k}(X_{t,A_t},\cdot){-}Q(X_{t,A_t},\cdot)}_1 {\mid} \calO_{k-1}, H_k, M_k, M\bigg]
    \end{align}
\end{lemma}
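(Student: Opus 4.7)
The proof is an instance of the standard simulation lemma for comparing the values of one fixed policy under two MDPs. Conditionally on $(\calO_{k-1}, H_k, M_k, M)$, both the policy $\pi_k$ (optimal for $M_k$) and the horizon $H_k$ are deterministic, so the only remaining randomness on both sides of \eqref{eq:lem_decompo} is the trajectory $(\bX_{t_k}, \ldots, \bX_{t_{k+1}-1})$ generated by running $\pi_k$ in the true model $M$.

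The first step is a one-step decomposition. Applying the Bellman recursion \eqref{eq:discounted_reward} to both $W_{M_k, h:H_k}^{\pi_k}$ and $W_{M, h:H_k}^{\pi_k}$ and inserting $\pm\sum_\by P^{\pi_k(\bx)}(\bx,\by)\,W_{M_k, h+1:H_k}^{\pi_k}(\by)$ yields, for each $\bx\in\mSpace$ and each $h\le H_k$,
\begin{align*}
W_{M_k, h:H_k}^{\pi_k}(\bx) - W_{M, h:H_k}^{\pi_k}(\bx)
&= (r_k - r)(x_{\pi_k(\bx)}) + \sum_\by (P_k^{\pi_k(\bx)} - P^{\pi_k(\bx)})(\bx,\by)\,W_{M_k, h+1:H_k}^{\pi_k}(\by) \\
&\quad{}+ \sum_\by P^{\pi_k(\bx)}(\bx,\by)\bigl[W_{M_k, h+1:H_k}^{\pi_k}(\by) - W_{M, h+1:H_k}^{\pi_k}(\by)\bigr],
\end{align*}
with the convention that $W$ vanishes at horizon $H_k+1$.

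Next I iterate this identity from $h=1$ to $h=H_k$. Conditionally on $(\calO_{k-1}, H_k, M_k, M)$, the trajectory $(\bX_{t_k}, \bX_{t_k+1}, \ldots)$ is the Markov chain driven by $P^{\pi_k}$ under $M$ starting from $\bX_{t_k}$, so the continuation terms (the third summand above) telescope into an expectation over this actual trajectory, giving
\begin{align*}
&W_{M_k, 1:H_k}^{\pi_k}(\bX_{t_k}) - W_{M, 1:H_k}^{\pi_k}(\bX_{t_k}) \\
&\quad= \esp{\sum_{t=t_k}^{t_{k+1}-1}\Bigl((r_k - r)(X_{t, A_t}) + \sum_\by (P_k^{A_t} - P^{A_t})(\bX_t, \by)\,W_{M_k, t - t_k + 2:H_k}^{\pi_k}(\by)\Bigr) \mid \calO_{k-1}, H_k, M_k, M}.
\end{align*}

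Finally, I bound each summand inside the expectation. The reward term is controlled trivially in absolute value by $|(r_k-r)(X_{t,A_t})|$. For the transition term, the hypothesis $W_{M_k,\cdot:H_k}^{\pi_k}\le B_k$ together with Hölder's inequality gives $\bigl|\sum_\by(P_k^{A_t}-P^{A_t})(\bX_t,\by)\,W_{M_k}(\by)\bigr|\le B_k\,\|P_k^{A_t}(\bX_t,\cdot)-P^{A_t}(\bX_t,\cdot)\|_1$, and the Markovian bandit structure \eqref{eq:defP} forces both $P_k^{A_t}(\bX_t,\cdot)$ and $P^{A_t}(\bX_t,\cdot)$ to be supported on the one-coordinate-changing fibre of $\bX_t$, whence $\|P_k^{A_t}(\bX_t,\cdot)-P^{A_t}(\bX_t,\cdot)\|_1 = \|Q_k(X_{t,A_t},\cdot) - Q(X_{t,A_t},\cdot)\|_1$. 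Substituting inside the conditional expectation and taking absolute values termwise yields \eqref{eq:lem_decompo}. The only real obstacle is bookkeeping: properly tracking the random horizon $H_k$ through the telescoping (which is made clean by conditioning on $H_k$ from the outset) and invoking the bandit product structure to replace the global transition distance by its arm-local counterpart.
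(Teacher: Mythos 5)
Your proposal is correct and follows essentially the same route as the paper's proof: a one-step Bellman decomposition of $W_{M_k}^{\pi_k}-W_{M}^{\pi_k}$ into a reward gap plus $(P_k^{\pi_k}-P^{\pi_k})W_{M_k}^{\pi_k}$ plus a propagated difference, bounded via $B_k$ and the $L^1$ norm, iterated along the trajectory, and closed using the bandit structure to identify $\|P_k^{A_t}(\bX_t,\cdot)-P^{A_t}(\bX_t,\cdot)\|_1$ with $\|Q_k(X_{t,A_t},\cdot)-Q(X_{t,A_t},\cdot)\|_1$. The only cosmetic difference is that the paper phrases the telescoping via an explicit martingale-difference term whose conditional expectation vanishes, whereas you keep the conditional expectation over the trajectory throughout; these are the same argument.
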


\begin{proof}
    From \eqref{eq:discounted_reward} with $a=\pi_k(\bx)$,
    \begin{align}
        W^{\pi_k}_{M, 1:H_k}(\bx) 
        &=r(x_a) + \sum_{\by} P^{\pi_k}(\bx, \by)W^{\pi_k}_{M,2:H_k}(\by) \label{eq:decompo2}
    \end{align}
    where $P^{\pi_k}$ is the state transition dynamic of the system when following the policy $\pi_k$. Comparing the sampled MDP $M_k$ with the original $M$ and using \eqref{eq:decompo2}, one has
    \begin{align*}
        W^{\pi_k}_{M_k, 1:H_k}(\bx) {-} W^{\pi_k}_{M, 1:H_k}(\bx)
        &= r_k(x_a) {-} r(x_a) \\
        &{+} \sum_{\by} P_k^{\pi_k}(\bx, \by)W^{\pi_k}_{M_k, 2:H_k}(\by) {-} \sum_{\by} P^{\pi_k}(\bx, \by)W^{\pi_k}_{M, 2:H_k}(\by).
    \end{align*}
    Note that in the above equation, the last term is of the form $P_k^{\pi_k}W^{\pi_k}_{M_k} {-}P^{\pi_k}W^{\pi_k}_{M}$, which is equal to $(P_k^{\pi_k} {-}P^{\pi_k})W^{\pi_k}_{M_k} {+}P^{\pi_k}(W^{\pi_k}_{M_k} {-}W^{\pi_k}_{M})$. Moreover, $W_{M_k}^{\pi_k}$ is less than $B_k$.
    Plugging this to the above equation shows that:
    \begin{align*}
        &W^{\pi_k}_{M_k, 1:H_k}(\bx) {-} W^{\pi_k}_{M, 1:H_k}(\bx) \\
        &\quad\le \abs{r_k(x_a) {-} r(x_a)} {+} B_k\sum_{\by}\abs{P_k^{\pi_k}(\bx, \by) {-} P^{\pi_k}(\bx, \by)}\\
        &\qquad {+} \sum_{\by} P^{\pi_k}(\bx, \by)(W^{\pi_k}_{M_k, 2:H_k}(\by) {-} W^{\pi_k}_{M, 2:H_k}(\by))\\
        &\quad {=}\abs{r_k(x_a) {-} r(x_a)} {+} B_k\norm{P_k^{\pi_k}(\bx,\cdot){-}P^{\pi_k}(\bx,\cdot)}_1 {+} D^{M_k,M}_{H_k}(\bx) \\
        &\qquad {+}W^{\pi_k}_{M_k, 2:H_k}(\bX_1) {-} W^{\pi_k}_{M, 2:H_k}(\bX_1)
    \end{align*}
    where $D^{M_k,M}_{H_k}(\bx){:=} \sum_{\by}P^{\pi_k}(\bx, \by)(W^{\pi_k}_{M_k, 2:H_k}(\by) {-} W^{\pi_k}_{M, 2:H_k}(\by)) {-}(W^{\pi_k}_{M_k, 2:H_k}(\bX_1) {-} W^{\pi_k}_{M, 2:H_k}(\bX_1))$.
    Note that in the equation above, $D^{M_k,M}_{H_k}(\bx)$ is a martingale difference with ${\bX_1\sim P^{\pi_k}(\bx,\cdot)}$.
    Hence, the expected value of the martingale difference sequence  is zero.
    As only arm $a$ makes a transition, we have $\norm{P_k^{\pi_k}(\bx,\cdot)-P^{\pi_k}(\bx,\cdot)}_1=\norm{Q_{k}(x_a,\cdot)-Q(x_a,\cdot)}_1$. Hence, a direct induction shows that \eqref{eq:lem_decompo} holds.
\end{proof}

\subsubsection{Bound on  the double sum}

Recall that for $k\le K$, any $a\in[n]$ and any $x_a\in\calS^a$, $N_{k-1}(x_a)=\sum_{t=1}^{t_{k}-1}\ind{X_{t,A_t}=x_a}$ is the number of times so far that an arm $a$ was activated in state $x_a$ (at episode $1$, we have ${N_0(x_a)=0}$) and $\{H_k\}_{k\le K}$ be the sequence of episode horizons.

\begin{lemma}
    \label{lem:sum}
    For any learning algorithms, we have
    \begin{align*}
        \sum_{k=1}^K \sum_{t=t_k}^{t_{k+1}-1}\frac1{\sqrt{\max\{1,N_{k-1}(X_{t,A_t})\} }} \le Sn\max_{k\le K}H_k+2\sqrt{SnK\max_{k\le K}H_k}
    \end{align*}
\end{lemma}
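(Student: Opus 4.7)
The plan is to reduce the double sum to a per-state-action analysis and then apply a standard square-root integral bound. Set $H_{\max}:=\max_{k\le K}H_k$ and observe that, within episode $k$, the count $N_{k-1}(x_a)$ does not depend on $t$. Let $m_k(x_a):=\sum_{t=t_k}^{t_{k+1}-1}\ind{X_{t,A_t}=x_a}$ be the number of visits of arm $a$ in state $x_a$ during episode $k$, so that $\sum_{k=1}^K m_k(x_a)=N_K(x_a)$ and $m_k(x_a)\le H_k\le H_{\max}$. Regrouping terms yields
\[
\sum_{k=1}^K\sum_{t=t_k}^{t_{k+1}-1}\frac{1}{\sqrt{\max\{1,N_{k-1}(X_{t,A_t})\}}}
\;=\;\sum_{a\in[n]}\sum_{x_a\in\calS^a}\sum_{k=1}^K\frac{m_k(x_a)}{\sqrt{\max\{1,N_{k-1}(x_a)\}}}.
\]

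Next, I would fix a pair $(a,x_a)$, set $N:=N_K(x_a)$, and index the successive visits by $i=1,\ldots,N$. Write $\nu_i$ for the value of $N_{k-1}(x_a)$ during the episode $k$ that contains the $i$-th visit. Since at most $m_k(x_a)\le H_{\max}$ of the first $i$ visits occur within the same episode as visit $i$, one has $\nu_i\ge i-H_{\max}$. Summing by visit instead of by episode therefore gives
\[
\sum_{k=1}^K\frac{m_k(x_a)}{\sqrt{\max\{1,N_{k-1}(x_a)\}}}
\;=\;\sum_{i=1}^N\frac{1}{\sqrt{\max\{1,\nu_i\}}}
\;\le\;\sum_{i=1}^N\frac{1}{\sqrt{\max\{1,i-H_{\max}\}}}
\;\le\;H_{\max}+2\sqrt{N},
\]
where the last inequality separates the first $H_{\max}$ terms (each $\le 1$) from the tail, and uses $\sum_{j=1}^{N-H_{\max}}j^{-1/2}\le 2\sqrt{N-H_{\max}}\le 2\sqrt{N}$ (with the convention that the tail is empty when $N\le H_{\max}$).

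Finally, I would sum over $(a,x_a)$ and apply Cauchy--Schwarz to
\[
\sum_{a,x_a}\sqrt{N_K(x_a)}\;\le\;\sqrt{Sn\sum_{a,x_a}N_K(x_a)}
\;=\;\sqrt{Sn\sum_{k=1}^KH_k}\;\le\;\sqrt{SnK\,H_{\max}},
\]
which, combined with the previous display, delivers the stated bound $SnH_{\max}+2\sqrt{SnKH_{\max}}$.

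The only real subtlety is the inequality $\nu_i\ge i-H_{\max}$. The classical square-root bound for finite-horizon RL exploits that the count strictly increases between consecutive visits; here, visits inside a single episode share the same value of $N_{k-1}(x_a)$, so one loses the contribution of at most $H_{\max}$ initial visits per arm-state pair. It is precisely this loss that produces the additive $SnH_{\max}$ term alongside the usual $\sqrt{SnKH_{\max}}$, rather than a purely $\sqrt{\,\cdot\,}$ bound.
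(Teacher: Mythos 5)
Your proof is correct and takes essentially the same route as the paper's: both reindex the double sum by successive visits to each state--arm pair, observe that the count $N_{k-1}(x_a)$ lags the visit index by at most $\max_{k\le K}H_k$ (which produces the additive $Sn\max_{k\le K}H_k$ term), bound the remaining tail by $2\sqrt{N_K(x_a)}$, and conclude with Cauchy--Schwarz. No gaps.
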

\begin{proof}
Let $\tilde{N}_t(x_a)$ be the number of times that arm $a$ has been activated before time $t$ while being in state $x_a$. By definition, $\tilde{N}_{t_k}(x_a)=N_{k-1}(x_a)$. Moreover, if $t\in\{t_k,\dots, t_{k+1}-1\}$, then $\tilde{N}_t(x_a)\le N_{k-1}(x_a) + H_k$. This shows that 
\begin{align*}
    \sum_{k=1}^K \sum_{t=t_k}^{t_{k+1}-1}\frac1{\sqrt{\max\{1,N_{k-1}(X_{t,A_t})\} }}
    &\le \sum_{k=1}^K \sum_{t=t_k}^{t_{k+1}-1}\frac{1}{\sqrt{\max\{1, \tilde{N}_t(X_{t,A_t})-H_k\} }}\\
    &\le \sum_{t=1}^{t_{K+1}-1} \frac{1}{\sqrt{\max\{1,\tilde{N}_t(X_{t,A_t})-\max_k H_k\} }}.
\end{align*}
The above sum can be reordered to group terms by state: The above sum equals
\begin{align*}        
    \sum_{a, x_a} \sum_{m=1}^{\tilde{N}_{t_{K+1}}(x_a)} \frac{1}{\sqrt{\max\{1, m-\max_k H_k\} }}
    &\le \sum_{a, x_a} \left[\max_k H_k + \sum_{m=1}^{\max\{1,\tilde{N}_{t_{K+1}}(x_a)-\max_k H_k\}} \frac{1}{\sqrt{m}}\right],\\
    &\le Sn\max_k H_k + \sum_{a, x_a} \sum_{m=1}^{\tilde{N}_{t_{K+1}}(x_a)} \frac{1}{\sqrt{m}},\\
    &\le Sn\max_k H_k + 2\sum_{a, x_a} \sqrt{\tilde{N}_{t_{K+1}}(x_a)},
\end{align*}
where the last inequality holds because $\sum_{m=1}^{t_{K+1}}1/\sqrt{m}\le\int_1^{t_{K+1}}1/\sqrt{x}dx\le2\sqrt{{t_{K+1}}}$. 

Now, by Cauchy-Schwartz inequality, and because $\sum_{a,x_a}\tilde{N}_{t_{K+1}}(x_a) =t_{K+1}-1 {=}\sum_{k=1}^{K}H_k$, we have:
\begin{align*}
    \sum_{a,x_a}\sqrt{\tilde{N}_{t_{K+1}}(x_a)} \le \Big(\sum_{a,x_a}\tilde{N}_{t_{K+1}}(x_a)\Big)^{1/2}\Big(\sum_{a,x_a}1\Big)^{1/2}=\sqrt{Sn\sum_{k=1}^{K}H_k}\le \sqrt{SnK \max_{k\le K}H_k}.
\end{align*}
\end{proof}

\subsubsection{Bound on the expectation of $\esp{\max_{k\le K}H_k}$}

\begin{lemma}
    \label{lem:moment}
    Let $\alpha\in[1,2.5]$. Then,
    \begin{align}
        \label{eq:lemma_HK}
        \esp{\max_{k\le K}(H_k)^\alpha} &\le 5+5\left(\frac{\log K}{1-\beta}\right)^\alpha.
    \end{align}
 \end{lemma}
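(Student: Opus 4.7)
The plan is to combine the moment identity
\[
\esp{Z^\alpha} = \alpha \int_0^\infty z^{\alpha-1} \proba{Z > z}\, dz,
\]
applied to $Z := \max_{k \le K} H_k$, with a sharp union-bound estimate on the tail.  Since each $H_k$ is geometric with parameter $1-\beta$, and using the elementary inequality $-\log\beta \ge 1-\beta$, a union bound gives
\[
\proba{Z > z} \le \min\bigl(1,\, K \beta^z\bigr) \le \min\bigl(1,\, K e^{-(1-\beta) z}\bigr).
\]

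I would then split the integral at the natural crossover point $z_0 := \log K / (1-\beta)$, at which $K e^{-(1-\beta) z_0} = 1$.  The low range contributes at most
\[
\alpha \int_0^{z_0} z^{\alpha-1}\, dz = z_0^\alpha = \left(\frac{\log K}{1-\beta}\right)^\alpha.
\]
For the high range, the substitution $u = (1-\beta) z$ turns the integral into an upper-incomplete Gamma,
\[
\alpha K \int_{z_0}^\infty z^{\alpha-1} e^{-(1-\beta) z}\, dz = \frac{\alpha K}{(1-\beta)^\alpha} \int_{\log K}^\infty u^{\alpha-1} e^{-u}\, du.
\]
Shifting $u = \log K + v$ and using $e^{-\log K} = 1/K$ together with $(\log K + v)^{\alpha-1} \le 2^{(\alpha-2)_+}\bigl((\log K)^{\alpha-1} + v^{\alpha-1}\bigr)$, this bounds the high-range contribution by $C_\alpha\bigl[(\log K)^{\alpha-1} + \Gamma(\alpha)\bigr]/(1-\beta)^\alpha$ for some constant $C_\alpha$ depending only on $\alpha$.

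The main obstacle is tracking constants carefully enough to match the factor $5$ in the statement.  For $\alpha \in [1, 2.5]$ the multiplicative factors $\alpha \cdot 2^{(\alpha-2)_+}$ and the values $\Gamma(\alpha) \le \Gamma(2.5) < 2$ are all uniformly bounded; moreover, as soon as $\log K \ge 1$ the cross term $(\log K)^{\alpha-1}/(1-\beta)^\alpha$ and the residual $\Gamma(\alpha)/(1-\beta)^\alpha$ are both dominated by $\bigl(\log K/(1-\beta)\bigr)^\alpha$ (using $(\log K)^{\alpha-1} \le (\log K)^\alpha$ and $1 \le (\log K)^\alpha$), so both can be absorbed into the main term.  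Combining the low- and high-range bounds and using the additive $5$ to cover the edge cases where $\log K$ is small then yields
\[
\esp{Z^\alpha} \le 5 + 5\left(\frac{\log K}{1-\beta}\right)^\alpha,
\]
which is the claimed inequality.
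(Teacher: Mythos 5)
Your overall route --- the tail-integral identity for $\esp{Z^\alpha}$, a union bound giving $\Proba{Z>z}\le\min(1,Ke^{-(1-\beta)z})$, and a split at $z_0=\log K/(1-\beta)$ --- is a legitimate alternative to the paper's argument and would prove the statement with \emph{some} universal constant. The genuine gap is exactly the point you flag and then wave away: the constants do not come out to $5$. Your high-range bound is $\frac{\alpha}{(1-\beta)^\alpha}2^{(\alpha-2)_+}\bigl[(\log K)^{\alpha-1}+\Gamma(\alpha)\bigr]$, and at $\alpha=2.5$ the coefficient $\alpha\,2^{(\alpha-2)_+}=2.5\sqrt2\approx3.54$ together with $\Gamma(2.5)\approx1.33$ gives, after absorbing everything into the main term when $\log K\ge1$, a total coefficient of roughly $1+3.54+3.54\cdot1.33\approx9$ in front of $\p{\log K/(1-\beta)}^\alpha$ --- not $5$. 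Moreover the claim that ``the additive $5$ covers the edge cases where $\log K$ is small'' fails: the residual term $\alpha\Gamma(\alpha)/(1-\beta)^\alpha$ is not bounded by any constant as $\beta\to1$, so it cannot be hidden in the additive $5$; it must be dominated by the main term, which again requires $\log K\ge1$ and still leaves the coefficient near $9$. (A second, smaller, slip: for an integer-valued geometric variable, $\Proba{H_k>z}=\beta^{\lfloor z\rfloor}\le\beta^{z-1}$, not $\beta^z$, which costs a further factor of at most $e$ in the high range; the paper makes a similar elision, so I would not insist on it.)

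The paper reaches the constant $5$ by a different device: it keeps the discrete sum $\sum_{i\ge1}\min(1,K\beta^{i^{1/\alpha}})$, sets $A=\min\{i:K\beta^{i^{1/\alpha}}\le1\}$ (so that $A\le1+\p{\log K/(1-\beta)}^\alpha$), groups the sum into consecutive blocks of length $A$, and bounds the tail blocks by $A\sum_{j\ge1}K^{1-j^{1/\alpha}}\le4A$ for $K\ge5$, giving $5A$ in total, with $K\le4$ handled separately. If you want the stated constant you need either this blocking argument or an equally sharp evaluation of your incomplete Gamma integral; otherwise you should restate the lemma with a larger universal constant, which is in fact all that Theorem~\ref{thm:regret_upper_bound} uses, since the $5$ is ultimately absorbed into the constants $C$, $C'$, $C''$.
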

 \begin{proof}
    By definition, we have
    \begin{align*}
        \esp{\max_{k\le K}(H_k)^\alpha} &=  \sum_{i=1}^\infty \Proba{ \max_{k\le K}(H_k)^\alpha \ge i}\\
        &\le \sum_{i=1}^\infty \min(1, K \Proba{ (H_k)^\alpha \ge i})\\
        &= \sum_{i=1}^\infty \min(1, K \beta^{i^{1/\alpha}}),
    \end{align*}
    where the inequality comes from the union bound and the last equality is because the random variables $H_k$ are geometrically distributed.
 
    Let $A=\min\{i : K \beta^{i^{1/\alpha}}\le 1\}$. Decomposing the above sum by group of size $A$, we have
    \begin{align}
        \sum_{i=1}^\infty \min(1, K \beta^{i^{1/\alpha}})
        &= \sum_{j=0}^\infty \sum_{i=Aj+1}^{A(j+1)}\min(1, K \beta^{i^{1/\alpha}})\nonumber\\
        &\le \sum_{j=0}^\infty A \min(1, K \beta^{(Aj)^{1/\alpha}})\nonumber\\
        &= A + A\sum_{j=1}^\infty K (\beta^{A^{1/\alpha}})^{j^{1/\alpha}},
        \label{eq:sum_A}
    \end{align}
    where the inequality holds because $\beta^{i^{1/\alpha}}$ is decreasing in $i$. 
 
    By definition of $A$, we have $\beta^{A^{1/\alpha}}\le 1/K$. This implies that the second term of Equation~\eqref{eq:sum_A} is smaller than $\sum_{j=1}^\infty K (1/K)^{j^{1/\alpha}}=\sum_{j=1}^\infty K^{1-j^{1/\alpha}}$. As $\alpha\le 2.5$, if  $K\ge5$, this is smaller than $\sum_{j=1}^\infty 5^{1-j^{1/2.5}}\approx 3.92<4$.
 
    This shows that for $K\ge5$, we have:
    \begin{align*}
        \esp{\max_{k\le K}(H_k)^\alpha} \le 5A,
    \end{align*}
    where $A=\ceil{(-\log K/\log \beta)^{\alpha}}\le 1+(\log K/(1-\beta ))^\alpha$.
 
    As for the case where $K\le4$, we have $\esp{\max_{k\le K}(H_k)^\alpha}\le K\esp{H_1^\alpha}\le \frac{K}{(1-\beta)^{\alpha}}$. This term is smaller than \eqref{eq:lemma_HK} for $K\le4$.
 \end{proof}

\subsection{Detailed analysis of MB-PSRL}
\label{ssec:proof_PSRL}

We decompose the analysis of PSRL in three steps: 
\begin{itemize}
    \item We define the high-probability event $\calE^\text{PSRL}_{k-1}$.
    \item We analyze $\sum_{k=1}^K\esp{\Delta^{model}_k \ind{\calE^\text{PSRL}_{k-1}}}$ (which equals $0$ here because of posterior sampling).
    \item We analyze $\sum_{k=1}^K\esp{\Delta^{conc}_k \ind{\calE^\text{PSRL}_{k-1}}}$. 
\end{itemize}
We will use the same proof structure for MB-UCRL2 and MB-UCBVI. 

Before doing the proof, we start by a first lemma that that essentially formalizes the fact that the distribution of $M$ given $\calO_{k-1}$ is the same as the distribution of the sampled MDP $M_{k}$ conditioned on $\calO_{k-1}$. 
\begin{lemma}
    \label{lem:bayesian}
    Assume that the MDP $M$ is drawn according to the prior $\phi$ and that $M_k$ is draw according to the posterior $\phi(\cdot \mid \calO_{k-1})$. Then, for any $\calO_{k-1}$-measurable function $g$, one has:
    \begin{align}
        \label{eq:lem_bayesian}
        \esp{g(M)} = \esp{g(M_k)}. 
    \end{align}
\end{lemma}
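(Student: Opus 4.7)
The plan is to condition on $\calO_{k-1}$ and exploit the defining property of the posterior distribution. First I would argue that, by Bayes' rule, the conditional distribution of the true MDP $M$ given the observations $\calO_{k-1}$ is exactly the posterior $\phi(\cdot \mid \calO_{k-1})$. By the construction of MB-PSRL, the sampled MDP $M_k$ is also drawn from $\phi(\cdot \mid \calO_{k-1})$, using fresh algorithmic randomness that is independent of $M$ given $\calO_{k-1}$. Hence $M$ and $M_k$ are identically distributed conditionally on $\calO_{k-1}$ (and in fact conditionally independent, although only the equality in law is needed here).

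Given the identity of conditional laws, I would interpret $g(M)$ as $g_{\calO_{k-1}}(M)$, where for each realisation of $\calO_{k-1}$ the map $g_{\calO_{k-1}}$ is a deterministic measurable real-valued function on the space of MDPs; $\calO_{k-1}$-measurability of $g$ is precisely the statement that such a representation exists. Applying this to both $M$ and $M_k$ yields
\begin{align*}
    \esp{g(M) \mid \calO_{k-1}} = \esp{g_{\calO_{k-1}}(M) \mid \calO_{k-1}} = \esp{g_{\calO_{k-1}}(M_k) \mid \calO_{k-1}} = \esp{g(M_k) \mid \calO_{k-1}},
\end{align*}
where the middle equality uses that the two conditional distributions coincide. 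Taking expectations on both sides and invoking the tower property gives $\esp{g(M)} = \esp{g(M_k)}$.

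I do not expect any serious obstacle: the content of the lemma is the classical posterior sampling identity underlying all PSRL-style analyses. The only subtle point is a measurability bookkeeping step, namely justifying that $g$ can be represented as a jointly measurable function of $\calO_{k-1}$ and of an MDP argument, and that the auxiliary randomness used to produce $M_k$ is independent of $M$ conditionally on $\calO_{k-1}$. Both facts are built into the probability space on which MB-PSRL is defined, so once they are written down explicitly the proof reduces to one line of conditioning followed by the tower property.
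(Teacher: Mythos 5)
Your argument is correct and follows essentially the same route as the paper's proof: both establish that $M$ and $M_k$ are identically distributed conditionally on $\calO_{k-1}$ (since $M_k$ is sampled from the posterior, which is the conditional law of $M$), deduce equality of the conditional expectations of the $\calO_{k-1}$-measurable function $g$, and conclude by the tower rule. The additional measurability bookkeeping you mention is implicit in the paper's one-line version but does not change the argument.
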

\begin{proof}
    At the start of each episode $k$, MB-PSRL computes the posterior distribution of $M$ conditioned on the observations $\calO_{k-1}$, and draws $M_k$ from it. This implies that $M$ and $M_k$ are identically distributed conditioned on $\calO_{k-1}$. Consequently, if $g$ is a $\calO_{k-1}$-measurable function, one has:
    \begin{align*}
        \esp{g(M) \mid \calO_{k-1}} = \esp{g(M_k) \mid \calO_{k-1}}. 
    \end{align*}
    Equation~\eqref{eq:lem_bayesian} then follows from the tower rule. 
\end{proof}

\subsubsection{Definition of the high-probability event $\calE^\text{PSRL}_{k-1}$}

\begin{lemma}
    \label{lem:concentration_psrl}
    At episode $k$, the event
    \begin{align*}
        \calE^\text{PSRL}_{k-1} = &\bigg\{ \forall a{\in}[n], x_a{\in}\calS^a, k'\le k{-}1{:}
            \abs{r_{k'+1}(x_a){-}r(x_a)} \le\frac{L_{k-1}}{\sqrt{\max\{1,N_{k'}(x_a)\} }}, \\
            &\quad\norm{Q_{k'+1}(x_a,\cdot) {-}Q(x_a,\cdot)}_1 \le\frac{2L_{k-1}{+}3\sqrt{S}}{\sqrt{\max\{1,N_{k'}(x_a)\} }}, \text{ and } H_{k'}\le \frac{\log(K(k-1))}{1-\beta} \bigg\}
    \end{align*}
    is $\calO_{k-1}$-measurable and true with probability at least $1-9/K$.
\end{lemma}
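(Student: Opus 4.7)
The plan is to combine the concentration events from Lemma~\ref{lem:concentration} with the Bayesian exchange identity of Lemma~\ref{lem:bayesian}, via a triangle-inequality split of the bounds defining $\calE^{\text{PSRL}}_{k-1}$. For each $k'\le k-1$, $a$ and $x_a$, I would write
\begin{align*}
  \abs{r_{k'+1}(x_a){-}r(x_a)} &\le \abs{r_{k'+1}(x_a){-}\rhat_{k'}(x_a)}+\abs{\rhat_{k'}(x_a){-}r(x_a)},\\
  \norm{Q_{k'+1}(x_a,\cdot){-}Q(x_a,\cdot)}_1 &\le \norm{Q_{k'+1}(x_a,\cdot){-}\Qhat_{k'}(x_a,\cdot)}_1{+}\norm{\Qhat_{k'}(x_a,\cdot){-}Q(x_a,\cdot)}_1.
\end{align*}
The thresholds in $\calE^{\text{PSRL}}_{k-1}$ are exactly twice those of $\calE^r_{k-1}$ and $\calE^Q_{k-1}$, so each half of the split needs to satisfy the corresponding Lemma~\ref{lem:concentration}-style bound. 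The ``empirical-vs-true'' halves are immediately controlled by $\calE^r_{k-1}$ and $\calE^Q_{k-1}$, each contributing a failure probability of at most $2/K$.

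For the ``sampled-vs-empirical'' halves, I would introduce the twin events
\begin{align*}
  \tilde\calE^r_{k-1}&:=\Big\{\forall k'\le k{-}1,\,a,\,x_a:\ \abs{r_{k'+1}(x_a){-}\rhat_{k'}(x_a)}\le\tfrac{L_{k-1}}{2\sqrt{\max\{1,N_{k'}(x_a)\}}}\Big\},\\
  \tilde\calE^Q_{k-1}&:=\Big\{\forall k'\le k{-}1,\,a,\,x_a:\ \norm{Q_{k'+1}(x_a,\cdot){-}\Qhat_{k'}(x_a,\cdot)}_1\le\tfrac{L_{k-1}{+}1.5\sqrt{S}}{\sqrt{\max\{1,N_{k'}(x_a)\}}}\Big\}.
\end{align*}
Since $\rhat_{k'}$, $\Qhat_{k'}$ and $N_{k'}$ are $\calO_{k'}$-measurable, Lemma~\ref{lem:bayesian} applied conditionally on $\calO_{k'}$ yields, for every fixed $(k',a,x_a)$, the identity $\proba{|r_{k'+1}(x_a){-}\rhat_{k'}(x_a)|>\epsilon}=\proba{|r(x_a){-}\rhat_{k'}(x_a)|>\epsilon}$ and its $Q$-analogue. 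The right-hand sides are then bounded by conditioning on $M$ and on $N_{k'}(x_a){=}\ell$ and invoking Hoeffding's (resp.\ Weissman's) inequality, yielding the same per-term bound $1/(SnK\tau_k)$ that appears in the proof of Lemma~\ref{lem:concentration}. A straightforward union bound over $(k',a,x_a)$ and the fact that $\tau_k\ge k-1$ then give $\proba{\calE^H_{k-1}\land\lnot\tilde\calE^r_{k-1}}\le 1/K$, hence $\proba{\lnot\tilde\calE^r_{k-1}}\le 2/K$, and identically for $\tilde\calE^Q_{k-1}$. (To preserve the exchange identity when the event is intersected with $\calE^H_{k-1}$, one uses that the geometric horizons $H_{k''}$ for $k''>k'$ are drawn independently of the samples $M_{k'+1}$ and of $\calO_{k'}$, so that $\calE^H_{k-1}$ factors cleanly across the conditioning.)

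On the intersection $\calE^H_{k-1}\cap\calE^r_{k-1}\cap\calE^Q_{k-1}\cap\tilde\calE^r_{k-1}\cap\tilde\calE^Q_{k-1}$, the triangle inequality together with the matching thresholds immediately gives $\calE^{\text{PSRL}}_{k-1}$. A union bound over the five events yields a failure probability of at most $1/K{+}2/K{+}2/K{+}2/K{+}2/K=9/K$, as claimed. The $\calO_{k-1}$-measurability is routine once one notes that $\calE^{\text{PSRL}}_{k-1}$ depends only on quantities observable at the start of episode $k$, namely the empirical statistics $\rhat_{k'}$, $\Qhat_{k'}$, $N_{k'}$, $H_{k'}$ for $k'\le k-1$ together with the posterior samples $M_{k'+1}$ for $k'\le k-1$ that are drawn at the start of each of the preceding episodes.

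The main subtlety is that the elegant ``collapse the union over $k'$ into a union over sample sizes $\ell=N_{k'}(x_a)$'' trick used in the proof of Lemma~\ref{lem:concentration} is unavailable here: because $r_{k'+1}(x_a)$ is genuinely a fresh draw at every episode, different $k'$'s with the same $N_{k'}(x_a)$ no longer give the same event, and one cannot factor them. The fix is to keep an explicit $(k',a,x_a)$ union bound; the saving grace is that $L_{k-1}$ depends on $\tau_k\asymp (k-1)\log(K(k-1))/(1-\beta)$, and the corresponding growth of $\log\tau_k$ inside $L_{k-1}^2$ exactly absorbs the extra factor $k-1$ picked up by summing over episodes.
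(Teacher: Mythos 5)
Your overall architecture is exactly the paper's: the same twin events $\tilde{\calE}^r_{k-1},\tilde{\calE}^Q_{k-1}$ comparing the posterior samples to the empirical means, the same triangle-inequality split explaining why the thresholds in $\calE^{\text{PSRL}}_{k-1}$ are the sums of those of the sub-events, and the same five-way union bound $1/K+4\times 2/K=9/K$. The paper disposes of the twin events in one line, asserting via Lemma~\ref{lem:bayesian} that $\Proba{\lnot\tilde{\calE}^r_{k-1}}=\Proba{\lnot\calE^r_{k-1}}$; you rightly observe that this hides something, since $\lnot\tilde{\calE}^r_{k-1}$ involves a different posterior sample $M_{k'+1}$ for each $k'$, so the collapse of the union over $k'$ into a union over sample sizes $\ell$ (which is what makes the proof of Lemma~\ref{lem:concentration} go through) is no longer available.

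However, your repair does not close the gap you identified. The step ``conditioning on $M$ and on $N_{k'}(x_a)=\ell$ and invoking Hoeffding's inequality'' is precisely the step that the proof of Lemma~\ref{lem:concentration} is engineered to avoid: the visit count $N_{k'}(x_a)$ is determined by an adaptive policy that depends on the observed rewards and transitions, so conditioning on its value biases the sample and Hoeffding does not apply to the conditional law. The legitimate device is the paper's --- replace $\hat r_{k'}(x_a)$ by the auxiliary variables $\tilde r_\ell(x_a)$ and union over $\ell\in\{1,\dots,\tau_k\}$ --- but that costs a factor $\tau_k$ \emph{for each fixed $k'$}, giving a per-$(k',a,x_a)$ bound of $1/(SnK)$ rather than the $1/(SnK\tau_k)$ you claim. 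The subsequent union over the $k-1$ episodes then yields $(k-1)/K$, not $1/K$: the factor $\tau_k\ge k-1$ sits in the denominator of the per-$(a,x_a,\ell)$ term, where it is already consumed by the union over $\ell$, and cannot be spent a second time to absorb the union over $k'$. To make your explicit route deliver the stated $9/K$ one would need to enlarge $L_{k-1}$ to $\sqrt{2\log(2SnK(k-1)\tau_k)}$ (an extra $\log(k-1)$ inside the logarithm), or else accept a bound of order $k/K$ for the twin events; as written, the ``saving grace'' claim in your last paragraph is arithmetically false for the $L_{k-1}$ defined in the paper.
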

 
\begin{proof}
    Recall that for MB-PSRL, at the beginning of episode $k$, we sample a MDP $M_{k}$. We define the two events that are the analogue of the events \eqref{eq:conc_r} and \eqref{eq:conc_q} of Lemma~\ref{lem:concentration} but replacing the true MDP $M$ by the sampled MDP $M_{k}$: 
    \begin{align*}
        &\tilde{\calE}_{k-1}^r:= \bigg\{\forall a\in[n], x_a\in\calS^a, k'\le k{-}1{:} \abs{\rhat_{k'}(x_a){-}r_{k'+1}(x_a)} \le \frac{L_{k-1}}{2\sqrt{\max\{1,N_{k'}(x_a)\} }}\bigg\} \\
        &\tilde{\calE}_{k-1}^Q:= \bigg\{\forall a\in[n], x_a\in\calS^a, k'\le k{-}1{:} \norm{\Qhat_{k'}(x_a,\cdot){-}Q_{k'+1}(x_a, \cdot)}_1 \le \frac{L_{k-1}{+}1.5\sqrt{S}}{\sqrt{\max\{1,N_{k'}(x_a)\} }}\bigg\} 
    \end{align*}
    These events are $\calO_{k-1}$-measurable. Hence, Lemma~\ref{lem:bayesian}, combined with Lemma~\ref{lem:concentration} implies that $\Proba{ \lnot \tilde{\calE}_{k-1}^r} =\Proba{ \lnot \calE_{k-1}^r}\le 2/K$ and $\Proba{\lnot \tilde{\calE}_{k-1}^Q}=\Proba{ \lnot \calE_{k-1}^Q}\le 2/K$. Since the complement of $\calE_{k-1}^\text{PSRL}$ is the union of $\lnot \calE_{k-1}^r, \lnot \tilde{\calE}_{k-1}^r, \lnot \calE_{k-1}^Q, \lnot \tilde{\calE}_{k-1}^Q$ and $\lnot \calE_{k-1}^H$, the union bound implies that $\Proba{\calE^\text{PSRL}_{k-1}}\ge1-9/K$.

\end{proof}

\subsubsection{Analysis of $\esp{\Delta^{model}_k \ind{\calE_{k-1}^\text{PSRL}}}$ for MB-PSRL.}

Lemma~\ref{lem:bayesian} implies that for MB-PSRL, $\esp{\Delta^{model}_k \ind{\calE_{k-1}^\text{PSRL}}}{=}0$ because
$\calE_{k-1}^\text{PSRL}$, $\pi_k$ and $M_k$ are $\calO_{k-1}$-measurable. 

\subsubsection{Analysis of $\esp{\Delta^{conc}_k}$ for MB-PSRL.}
Following \eqref{eq:gap_overview}, the Bayesian regret can be written as:
\begin{align}
    \BayReg(K,\text{MB-PSRL},\phi)
    &= \sum_{k=1}^K \esp{\Delta_k} \le \sum_{k=1}^{K}\esp{H_k}\proba{\neg \calE^\text{PSRL}_{k-1}} {+}\esp{\Delta_k \ind{\calE^\text{PSRL}_{k-1}}} \nonumber \\
    &\le \frac{9}{(1-\beta)} {+} \sum_{k=1}^{K} \esp{\Delta^{model}_k\ind{\calE^\text{PSRL}_{k-1}}} {+} \esp{\Delta^{conc}_k\ind{\calE^\text{PSRL}_{k-1}}}
    \label{eq:proof_PSRL_regret}
\end{align}
where the last inequality holds due to Lemma~\ref{lem:concentration_psrl}.
By the previous section, the second term of \eqref{eq:proof_PSRL_regret} is zero.
As all rewards are bounded by $1$, $W_{M_k,1:H_k}^{\pi_k}(\bX_{t_k})\le H_k$. Hence, by applying Lemma~\ref{lem:regret_decomposition} with the upper bound $B_k=H_k$, and because $\ind{\calE_{k-1}^\text{PSRL}}$ is deterministic given $\calO_{k-1}$, we have
\begin{align}
    \esp{\Delta_k^{conc} \ind{\calE_{k-1}^\text{PSRL}}}
    &= \esp{\esp{\Delta_k^{conc} \ind{\calE_{k-1}^\text{PSRL}} \mid \calO_{k-1}, H_k, M_k, M}} \nonumber \\
    &\le \mathbb{E}\bigg[ \ind{\calE_{k-1}^\text{PSRL}} \sum_{t=t_k}^{t_{k+1}-1}\abs{r_k(X_{t,A_t}){-}r(X_{t,A_t})} \nonumber\\
    &\qquad +H_k\norm{Q_{k}(X_{t,A_t},\cdot){-}Q(X_{t,A_t},\cdot)}_1 \bigg]. \label{eq:conc_with_i}
\end{align}
Let $\calR_k := \sum_{t=t_k}^{t_{k+1}-1}\abs{r_k(X_{t,A_t}){-}r(X_{t,A_t})} {+}H_k\norm{Q_{k}(X_{t,A_t},\cdot){-}Q(X_{t,A_t},\cdot)}_1$. 
By using the definition of $\calE_{k-1}^\text{PSRL}$, we have:
\begin{align}
    \ind{\calE_{k-1}^\text{PSRL}}\calR_k 
    &\le \ind{\calE_{k-1}^\text{PSRL}} \sum_{t=t_k}^{t_{k+1}-1} \frac{L_{k-1}{+}(2L_{k-1} {+}3\sqrt{S})H_k}{\sqrt{\max\{1,N_{k-1}(X_{t,A_t})\} }} \nonumber\\
    &\le \sum_{t=t_k}^{t_{k+1}-1} \frac{L_{k-1}{+}(2L_{k-1} {+}3\sqrt{S})H_k}{\sqrt{\max\{1,N_{k-1}(X_{t,A_t})\} }} \label{eq:PSRL_R_k}
\end{align}
Hence, summing over all $K$ episodes gives us:
\begin{align}
    \sum_{k=1}^{K} \ind{\calE_{k-1}^\text{PSRL}}\calR_k
    &\le  \big(L_K {+}(2L_K {+}3\sqrt{S})\max_{k\le K}H_k\big)\sum_{k=1}^K \sum_{t=t_k}^{t_{k+1}-1}\frac1{\sqrt{\max\{1,N_{k-1}(X_{t,A_t})\} }}
    \nonumber\\
    &\le  3(L_K+\sqrt{S})\max_{k\le K}H_k \sum_{k=1}^K \sum_{t=t_k}^{t_{k+1}-1}\frac1{\sqrt{\max\{1,N_{k-1}(X_{t,A_t})\} }},
    \label{eq:sum_not_E_k}
\end{align}
where the first inequality holds because $L_k\le L_K$ and $\max_{k\le K}H_k\ge1$. Note that the last inequality leads to a slightly worst bound but simplifies the expression. By Lemma~\ref{lem:sum}, we get
\begin{align*}
    \sum_{k=1}^{K} \ind{\calE_{k-1}^\text{PSRL}}\calR_k
    &\le 3(L_K+\sqrt{S})\max_{k\le K}H_k(Sn\max_{k\le K}H_k +2\sqrt{SnK \max_{k\le K}H_k}) \\
    &=3(L_K+\sqrt{S})(Sn\max_{k\le K}(H_k)^2 +2\sqrt{SnK} \max_{k\le K}(H_k)^{3/2})
    \nonumber
\end{align*}
Then, 
\begin{align}
    \sum_{k=1}^K\esp{\Delta_k^{conc} \ind{\calE_{k-1}^\text{PSRL}}}
    &{\le} 3(L_K {+}\sqrt{S})\bigg(Sn\esp{\max_{k\le K}(H_k)^2} {+}2\sqrt{SnK} \esp{\max_{k\le K}(H_k)^{3/2}}\bigg) \label{eq:proof_bound}\\
    &{\le} 3(L_K {+}\sqrt{S})\bigg(Sn\p{5 {+}5\bigg(\frac{\log K}{1-\beta}\bigg)}^2 {+}\sqrt{SnK}\p{5 {+}5\bigg(\frac{\log K}{1-\beta}\bigg)}^{3/2}\bigg) \nonumber
\end{align}
where the last inequality is true due to Lemma~\ref{lem:moment}.
With $L_K{=}\sqrt{2\log\frac{4SnK^2\log K}{1-\beta}}$, this implies that there exists a constant $C$ independent of all problem's parameters such that:
\begin{align*}
    \BayReg(K,\text{MB-PSRL},\phi) {\le} C\p{\sqrt{S}{+}\log\p{\frac{SnK\log K}{1-\beta}}} \p{Sn\p{\frac{\log K}{1-\beta}}^{2}
    {+}\sqrt{SnK}\p{\frac{\log K}{1-\beta}}^{3/2} }.
\end{align*}

\subsubsection{Remark on the dependence on $S$}

Our bound is linear in $S$, the state size of each arm, because our proof follows the approach used in \cite{osbandMoreEfficientReinforcement2013a}. Using another proof methodology, it is argued in \cite{osband2017posterior} that the regret of PSRL grows as the square root of the state space size and not linearly. In our paper, we choose to use the more conservative approach of \cite{osbandMoreEfficientReinforcement2013a} because we believe that the proof used in \cite{osband2017posterior} is not correct (in particular the use of a deterministic $v$ in Equation~(16) of the proof of Lemma~3 in Appendix~A in the arXiv version of \cite{osband2017posterior} seems incompatible with the use of Lemma~4 of the same paper).
In fact, when considering the worst case realization of $v$, the concentration bound in Equation~(16) of the paper is equivalent to the (scaled) L1 norm of transition concentration.
We are not alone to point out this error. Effectively, \cite{agrawal17} used Lemma~C.1 and Lemma~C.3 (equivalence of Lemma~3 of \cite{osband2017posterior}) to get a bound in square root of the state space size. But both lemmas are erroneous as mentioned in the latest arXiv version of \cite{agrawal17}. The validity of Lemma~3 is also questioned on page 87 of \cite{fruit2019}. While it is informal, the recent work of \cite{concentration_ineq_multinoulli} also theoretically contradicts the lemma.



\subsection{Case of MB-UCRL2}
\label{ssec:proof_UCRL2}

The proof follows the same steps as for MB-PSRL. While the high probability event is simpler, the additional complexity is to show that $\sum_{k=1}^K\esp{\Delta^{model}_k}\le0$ by using the optimism principle. 

\subsubsection{Definition of the high probability event} 

\begin{lemma}
    \label{lem:concentration_ucrl}
    At episode $k$, the event
    \begin{align*}
        \calE^\text{UCRL2}_{k-1} =&\bigg\{ \forall a{\in}[n], x_a{\in}\calS^a, k'\le k{-}1{:}
            \abs{\rhat_{k'}(x_a){-}r(x_a)} \le\frac{L_{k-1}}{2\sqrt{\max\{1,N_{k'}(x_a)\} }}, \\
            &\qquad\norm{\Qhat_{k'}(x_a,\cdot) {-}Q(x_a,\cdot)}_1 \le\frac{L_{k-1}{+}1.5\sqrt{S}}{\sqrt{\max\{1,N_{k'}(x_a)\} }}, \text{ and } H_{k'}\le \frac{\log(K(k-1))}{1-\beta} \bigg\}
    \end{align*}
    is $\calO_{k-1}$-measurable and true with probability at least $1-5/K$.
\end{lemma}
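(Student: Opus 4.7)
The statement is almost immediate from Lemma~\ref{lem:concentration}. My plan is to observe that $\calE^\text{UCRL2}_{k-1}$ is exactly the intersection $\calE^r_{k-1} \cap \calE^Q_{k-1} \cap \calE^H_{k-1}$ of the three events already introduced in Lemma~\ref{lem:concentration}. This is what makes the MB-UCRL2 case genuinely simpler than MB-PSRL: no sampled model $M_k$ enters the event, so there is no need to introduce the auxiliary events $\tilde{\calE}^r_{k-1}, \tilde{\calE}^Q_{k-1}$ nor to invoke Lemma~\ref{lem:bayesian}. Only the empirical quantities $\hat r_{k'}, \hat Q_{k'}, N_{k'}$ and the past horizons $H_{k'}$ appear, compared against the (unknown but deterministic) $r, Q$.

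For the measurability claim, I would note that the empirical estimators $\hat r_{k'}$ and $\hat Q_{k'}$ are, by their definitions in Section~\ref{ssec:technical_lemmas}, measurable functions of the observations $\calO_{k-1}$ for every $k'\le k-1$, since they are finite sums/averages of functions of $(\bX_t, A_t, R_t)_{t\le t_k - 1}$. The counts $N_{k'}(x_a)$ and the episode lengths $H_{k'}$ are likewise $\calO_{k-1}$-measurable. The true $r$ and $Q$ are deterministic constants under the frequentist setting used for MB-UCRL2, and $L_{k-1}$ is deterministic. Consequently each of the three defining inequalities is $\calO_{k-1}$-measurable, and so is their conjunction.

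For the probability bound, I would simply apply the union bound to $\lnot\calE^\text{UCRL2}_{k-1} = (\lnot\calE^r_{k-1}) \cup (\lnot\calE^Q_{k-1}) \cup (\lnot\calE^H_{k-1})$, using Lemma~\ref{lem:concentration} which gives $\Proba{\lnot\calE^r_{k-1}} \le 2/K$, $\Proba{\lnot\calE^Q_{k-1}} \le 2/K$, and $\Proba{\lnot\calE^H_{k-1}} \le 1/K$. Summing these yields $\Proba{\lnot\calE^\text{UCRL2}_{k-1}} \le 5/K$, i.e.\ $\Proba{\calE^\text{UCRL2}_{k-1}} \ge 1 - 5/K$, as desired.

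There is really no obstacle here: Lemma~\ref{lem:concentration} was proved once and for all for any learning algorithm (it is a purely statistical statement about the observations), and MB-UCRL2 is designed so that its confidence bonuses match exactly the radii $L_{k-1}/(2\sqrt{\max\{1, N_{k'}(x_a)\}})$ and $(L_{k-1}+1.5\sqrt{S})/\sqrt{\max\{1, N_{k'}(x_a)\}}$ appearing in \eqref{eq:conc_r}--\eqref{eq:conc_q}. The only thing worth double-checking is that the union across the three sub-events respects the bound $5/K$ announced in the lemma, which it does.
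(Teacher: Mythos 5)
Your proof is correct and follows exactly the paper's argument: identify $\calE^\text{UCRL2}_{k-1}$ as the intersection of $\calE^r_{k-1}$, $\calE^Q_{k-1}$ and $\calE^H_{k-1}$ from Lemma~\ref{lem:concentration}, then apply the union bound to get $2/K+2/K+1/K=5/K$. The explicit measurability discussion is a welcome addition that the paper leaves implicit.
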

\begin{proof}
    The complement of $\calE_{k-1}^\text{UCRL2}$ is the union of $\neg \calE_{k-1}^r, \neg \calE_{k-1}^Q$ and $\neg \calE_{k-1}^H$. We conclude the proof by using the union bound and $\proba{\neg \calE_{k-1}^r} \le 2/K$, $\proba{\neg \calE_{k-1}^Q} \le 2/K$ and $\proba{\neg \calE_{k-1}^H} \le 1/K$. 
\end{proof}

\subsubsection{Analysis of $\esp{\Delta^{model}_k\ind{\calE_{k-1}^\text{UCRL2}}}$ -- Optimism of MB-UCRL2}

Recall that $\pi_*$ is the optimal policy of the unknown MDP $M$ and that $\pi_k$ is the policy used in episode $k$. $\pi_k$ is optimal for the optimistic MDP that is chosen from the plausible MDP set $\M_k$:
\begin{align*}
    \pi_k \in \argmax_{\pi} \max_{M'\in\M_k} V^\pi_{M'}.
\end{align*}
For each episode $k$, the plausible MDP set $\M_k$ is defined by
\begin{align}
    \label{eq:plausible}
    \M_k=\bigg\{(r',Q'):\forall a, x_a, \abs{r'(x_a)-\hat{r}_{k-1}(x_a)} \le \frac{L_{k-1}}{2\sqrt{\max\{1,N_{k-1}(x_a)\} }} \text{, and }\nonumber\\
    \norm{Q'(x_a,\cdot)-\hat{Q}_{k-1}(x_a, .)}_1 \le \frac{L_{k-1}+1.5\sqrt{S}}{\sqrt{\max\{1,N_{k-1}(x_a)\} }}\bigg\}.
\end{align}
As \cite{auerNearoptimalRegretBounds2009}, we argue that there exists a MDP $M_k\in\M_k$ such that $\pi_k$ is an optimal policy for $M_k$. Moreover, under event $\calE^\text{UCRL2}_{k-1}$, one has $M\in \M_k$, which implies that $\max_{\pi}\max_{M'\in \M_k} V^{\pi}_{M'}(\bx)\ge V^{\pi_*}_M(\bx)$. By \eqref{eq:def_model}, we get $\esp{\Delta^{model}_k}{\le} 0$.
If $\calE^\text{UCRL2}_{k-1}$ does not hold, we simply have $\Delta^{model}_k\ind{\calE_{k-1}^\text{UCRL2}}=0$. We conclude that: $\esp{\Delta^{model}_k\ind{\calE^\text{UCRL2}_{k-1}}}{\le}0$.

\subsubsection{Analysis of $\esp{\Delta^{conc}_k\ind{\calE^\text{UCRL2}_{k-1}}}$ for MB-UCRL2}
Following \eqref{eq:gap_overview}, the expected regret can be written as:
\begin{align}
    \esp{\Reg(K,\text{MB-UCRL2},M)}
    &= \sum_{k=1}^K \esp{\Delta_k} \le \sum_{k=1}^{K}\esp{H_k}\proba{\neg \calE^\text{UCRL2}_{k-1}} {+}\esp{\Delta_k \ind{\calE^\text{UCRL2}_{k-1}}} \nonumber \\
    &\le \frac{5}{1-\beta} {+} \sum_{k=1}^{K} \esp{\Delta^{model}_k\ind{\lnot\calE^\text{UCRL2}_{k-1}}} {+} \esp{\Delta^{conc}_k\ind{\lnot\calE^\text{UCRL2}_{k-1}}}
    \label{eq:proof_UCRL2_regret}
\end{align}
where the last inequality holds due to Lemma~\ref{lem:concentration_ucrl}.
By the previous section, the second term of \eqref{eq:proof_UCRL2_regret} is non-positive. 
In the following, we therefore analyze the last term whose analysis is then similar to the one for MB-PSRL.  Indeed, with $B_k=H_k$ and definition of $\calE_{k-1}^\text{UCRL2}$, the use of Lemma~\ref{lem:regret_decomposition} shows that one has
\begin{align*}
    \esp{\Delta^\text{conc}_k\ind{\calE_{k-1}^\text{UCRL2}}} \le \esp{\frac12 \sum_{t=t_k}^{t_{k+1}-1}\frac{L_{k-1} {+}(2L_{k-1} {+} 3\sqrt{S}) H_k}{\sqrt{\max\{1,N_{k-1}(X_{t,A_t})\} }}}.
\end{align*}
Up to a factor $1/2$, the expression inside the expectation is the same as Equation~\eqref{eq:PSRL_R_k} of MB-PSRL. Hence, one can use Lemma~\ref{lem:sum} the same way to show that
\begin{align*}
    \sum_{k=1}^{K} \esp{\Delta_k^{conc}\ind{\calE_{k-1}^\text{PSRL}}}
    \le \frac32(L_K+\sqrt{S})\bigg(Sn\esp{\max_{k\le K}H_k^2} +2\sqrt{SnK} \esp{\max_{k\le K}H_k^{3/2}}\bigg).
\end{align*}
Up to a factor $1/2$, the right term of the above equation is equal to the right term of \eqref{eq:proof_bound}. Following the same process done for the later, we can conclude that there exists a constant $C'$ independent of all problem's parameters such that:
\begin{align*}
    \Reg(K,\text{MB-UCRL2},M) \le C'\p{\sqrt{S}{+}\log\p{\frac{SnK\log K}{1-\beta}}} \p{Sn\p{\frac{\log K}{1-\beta}}^{2}
    {+}\sqrt{SnK}\p{\frac{\log K}{1-\beta}}^{3/2} }
\end{align*}

\subsection{Case of MB-UCBVI}
\label{ssec:proof_UCBVI}

We start by defining the high probability event. Then, we prove the optimistic property of MB-UCBVI. Finally, we bound its expected regret.

\subsubsection{Definition of the high-probability event}

\begin{lemma}
    \label{lem:concentration_ucbvi}
    The event
    \begin{align*}
        \calE^\text{UCBVI}_{k-1} {=}\bigg\{ \forall a{\in}[n], \bx{\in}\mSpace, k'\le k{-}1{:}
            \abs{\rhat_{k'}(x_a){-}r(x_a)} \le\frac{L_{k-1}}{2\sqrt{\max\{1,N_{k'}(x_a)\} }}, \\
            \norm{\Qhat_{k'}(x_a,\cdot) {-}Q(x_a,\cdot)}_1 \le\frac{L_{k-1}{+}1.5\sqrt{S}}{\sqrt{\max\{1,N_{k'}(x_a)\} }}, H_{k'}\le \frac{\log(K(k-1))}{1-\beta}, \\
        \text{ and } \abs{\rhat_{k'}(x_a){-}r(x_a) {+}\beta\sum_{\by}(\Phat_{k'}^a(\bx,\by){-}P^a(\bx,\by))V_M^{\pi_*}(\by)} \le\frac{L_{k-1}}{2(1{-}\beta)\sqrt{\max\{1,N_{k'}(x_a)\} }} \bigg\}
    \end{align*}
    is $\calO_{k-1}$-measurable and true with probability at least $1-7/K$.
\end{lemma}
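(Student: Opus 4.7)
The plan is to observe that $\calE^\text{UCBVI}_{k-1}$ is just the intersection of four events already controlled individually in Lemma~\ref{lem:concentration}, and to conclude via a union bound, mirroring the proof of Lemma~\ref{lem:concentration_ucrl}.

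More precisely, I would identify the four clauses in the definition of $\calE^\text{UCBVI}_{k-1}$ with the events $\calE^r_{k-1}$, $\calE^Q_{k-1}$, $\calE^H_{k-1}$, and $\calE^V_{k-1}$ from Lemma~\ref{lem:concentration}. The first clause (reward deviation) is exactly $\calE^r_{k-1}$, the second (transition matrix deviation in $\ell_1$) is exactly $\calE^Q_{k-1}$, the third (bound on all past episode horizons) is $\calE^H_{k-1}$, and the fourth clause (the one-step Bellman residual $\hat r_{k'}+\beta \hat P^a V_M^{\pi_*}$ concentrated around its mean) is exactly $\calE^V_{k-1}$. Each of these is $\calO_{k-1}$-measurable by inspection (they depend only on the empirical counts, empirical reward and transition estimates, and the past episode lengths, all included in $\calO_{k-1}$), so their intersection is $\calO_{k-1}$-measurable as well.

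For the probability bound, Lemma~\ref{lem:concentration} already gives $\Proba{\lnot\calE^r_{k-1}}\le 2/K$, $\Proba{\lnot\calE^Q_{k-1}}\le 2/K$, $\Proba{\lnot\calE^H_{k-1}}\le 1/K$, and $\Proba{\lnot\calE^V_{k-1}}\le 2/K$. Applying the union bound to $\lnot\calE^\text{UCBVI}_{k-1}=\lnot\calE^r_{k-1}\cup \lnot\calE^Q_{k-1}\cup\lnot\calE^H_{k-1}\cup\lnot\calE^V_{k-1}$ yields a total failure probability of at most $2/K+2/K+1/K+2/K=7/K$, hence $\Proba{\calE^\text{UCBVI}_{k-1}}\ge 1-7/K$, as claimed.

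There is no real obstacle here: the lemma is essentially a bookkeeping statement packaging the pieces of Lemma~\ref{lem:concentration} in the specific form needed for the MB-UCBVI analysis. The only subtlety worth double-checking is the fourth clause: one must be sure that the random variable $\hat r_{k'}(x_a)+\beta\sum_\by \hat P^a_{k'}(\bx,\by)V_M^{\pi_*}(\by)$ is indeed an empirical mean of bounded i.i.d.\ samples (bounded in $[0,1/(1-\beta)]$, since $V_M^{\pi_*}\le 1/(1-\beta)$) conditionally on the count $N_{k'}(x_a)$, so that the Hoeffding-with-peeling argument used for $\calE^V_{k-1}$ in Lemma~\ref{lem:concentration} applies; but this has already been handled there, so I can invoke it as a black box.
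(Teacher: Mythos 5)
Your proposal is correct and follows exactly the paper's own proof: identify the four clauses with $\calE^r_{k-1}$, $\calE^Q_{k-1}$, $\calE^H_{k-1}$, $\calE^V_{k-1}$ from Lemma~\ref{lem:concentration} and apply the union bound to get $2/K+2/K+1/K+2/K=7/K$. Your extra check on the fourth clause is sound but, as you note, already covered by the treatment of $\calE^V_{k-1}$ in Lemma~\ref{lem:concentration}.
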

\begin{proof}
    The complement of $\calE_{k-1}^\text{UCBVI}$ is the union of $\neg \calE_{k-1}^r, \neg \calE_{k-1}^Q, \neg \calE_{k-1}^H$ and $\neg \calE_{k-1}^V$.
    We conclude the proof by using the union bound and $\proba{\neg \calE_{k-1}^r} \le 2/K$, $\proba{\neg \calE_{k-1}^Q} \le 2/K$, $\proba{\neg \calE_{k-1}^V} \le 2/K$ and $\proba{\neg \calE_{k-1}^H} \le 1/K$
\end{proof}

\subsubsection{Analysis of $\esp{\Delta^{model}_k \ind{\calE_{k-1}^\text{UCBVI}}}$ -- Optimism of MB-UCBVI}

The following lemma guarantees that $\esp{\Delta^{model}_k \ind{\calE_{k-1}^\text{UCBVI}}}\le0$. Indeed, as $\calE_{k-1}^\text{UCBVI}$ is $\calO_{k-1}$-measurable, one has 
\begin{align*}
    \esp{\Delta^{model}_k\ind{\calE_{k-1}^\text{UCBVI}}} &= \esp{\esp{\Delta^{model}_k\mid \calO_{k-1}}\ind{\calE_{k-1}^\text{UCBVI}}}\\
    &=\esp{(V_M^{\pi_*}(\bX_{t_k}) - V_{M_k}^{\pi_k}(\bX_{t_k}))\ind{\calE_{k-1}^\text{UCBVI}}} \le 0.
\end{align*}
\begin{lemma}
    \label{lem:ucbvi_optim}
   If $\calE_{k-1}^\text{UCBVI}$ is true, then, for any $\bx\in\mSpace$, we have
   \begin{align*}
       V_{M_k}^{\pi_k}(\bx) \ge V_{M}^{\pi_*}(\bx)
    \end{align*}
\end{lemma}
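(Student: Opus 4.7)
The plan is to prove optimism by a standard two-step argument. First, since $\pi_k$ is chosen to be optimal for the sampled MDP $M_k$, one immediately has $V_{M_k}^{\pi_k}(\bx) \ge V_{M_k}^{\pi_*}(\bx)$ for every $\bx \in \mSpace$. It therefore suffices to prove the stronger pointwise claim $V_{M_k}^{\pi_*}(\bx) \ge V_M^{\pi_*}(\bx)$ for all $\bx$ under the event $\calE_{k-1}^{\text{UCBVI}}$.

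To establish this, I would write the Bellman equations along the fixed policy $\pi_*$ for both $V_M^{\pi_*}$ and $V_{M_k}^{\pi_*}$ and take their difference at an arbitrary state $\bx$. Setting $a := \pi_*(\bx)$ and $\Delta V(\bx) := V_{M_k}^{\pi_*}(\bx) - V_M^{\pi_*}(\bx)$, after adding and subtracting $\beta \sum_\by \Phat_{k-1}^a(\bx,\by) V_M^{\pi_*}(\by)$ one obtains
\begin{align*}
\Delta V(\bx) &= b_{k-1}(x_a) + \Bigl[\rhat_{k-1}(x_a) - r(x_a) + \beta \sum_\by (\Phat_{k-1}^a(\bx,\by) - P^a(\bx,\by)) V_M^{\pi_*}(\by)\Bigr] \\
&\quad + \beta \sum_\by \Phat_{k-1}^a(\bx,\by)\, \Delta V(\by).
\end{align*}
The bracketed expression is exactly the quantity controlled by $\calE_{k-1}^V$, so it is bounded below by $-L_{k-1}/(2(1-\beta)\sqrt{\max\{1,N_{k-1}(x_a)\}})$.

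The main technical step is then to verify that the bonus $b_{k-1}(x_a)$ dominates this concentration error. I would use the event $\calE_{k-1}^H$ to compare the logarithmic factor $\log(2SnKt_k)$ appearing in $b_{k-1}$ with the one hidden in $L_{k-1}$ (which is based on $\tau_k = (k-1)\log(K(k-1))/(1-\beta)$), and check, together with the explicit $1/(1-\beta)$ scaling of $b_{k-1}$, that $b_{k-1}(x_a) \ge L_{k-1}/(2(1-\beta)\sqrt{\max\{1,N_{k-1}(x_a)\}})$. This is precisely where the scaling of the bonus has been tuned, and is the most delicate point of the argument.

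Granted this domination, we obtain the coordinatewise inequality $\Delta V \ge \beta \Phat^{\pi_*}_{k-1} \Delta V$, where $\Phat^{\pi_*}_{k-1}$ is the transition matrix on $\mSpace$ induced by $\pi_*$ under the empirical kernel. Since $\beta < 1$ and $\Phat^{\pi_*}_{k-1}$ is stochastic, $I - \beta \Phat^{\pi_*}_{k-1}$ is invertible with a nonnegative Neumann-series inverse, hence $\Delta V \ge 0$. Combining with step one yields $V_{M_k}^{\pi_k}(\bx) \ge V_{M_k}^{\pi_*}(\bx) \ge V_M^{\pi_*}(\bx)$, which is the claim.
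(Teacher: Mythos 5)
Your proposal is correct and follows essentially the same route as the paper: the paper merely folds your two steps into one by noting that the greedy action of $\pi_k$ in $M_k$ at each state dominates the action of $\pi_*$, arriving at the same recursion $V_{M_k}^{\pi_k}-V_M^{\pi_*}\ge b_{k-1}^{\pi_*}+\rhat^{\pi_*}_{k-1}-r^{\pi_*}+\beta(\Phat_{k-1}^{\pi_*}-P^{\pi_*})V_M^{\pi_*}+\beta\Phat_{k-1}^{\pi_*}(V_{M_k}^{\pi_k}-V_M^{\pi_*})$, and then invoking $\calE^V_{k-1}$ and the nonnegative Neumann-series inverse of $I-\beta\Phat_{k-1}^{\pi_*}$ exactly as you do. The ``delicate'' domination you flag is handled trivially in the paper's appendix, where the bonus is taken to be $b_{k-1}(x_a)=L_{k-1}/\bigl(2(1-\beta)\sqrt{\max\{1,N_{k-1}(x_a)\}}\bigr)$, i.e.\ exactly the bound appearing in $\calE^V_{k-1}$, so the required inequality holds by construction.
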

\begin{proof}
Recall that at episode $k$, we define the optimistic MDP of MB-UCBVI by $M_k$ in which the parameters of any arm $a\in[n]$ are $(\hat{\br}_{k-1}^a +b_{k-1}^a, \Qhat_{k-1}^a)$ with $b_{k-1}(x_a){=}\frac{L_{k-1}}{2(1-\beta)\sqrt{\max\{1,N_{k-1}(x_a)\} }}$ for any $x_a\in\calS^a$.
The Gittins index policy $\pi_k$ is optimal for MDP $M_k$.
For any state $\bx$, let $a=\pi_k(\bx)$ and $a_*=\pi_*(\bx)$.
Then,
\begin{align*}
    V_{M_k}^{\pi_k}(\bx) -V_M^{\pi_*}(\bx)
    &= b_{k-1}(x_a) +\rhat_{k-1}(x_a) +\beta\sum_{\by}\Phat_{k-1}^{a}(\bx,\by)V_{M_k}^{\pi_k}(\by) -V_M^{\pi_*}(\bx)\\ 
    &\ge b_{k-1}(x_{a_*}) +\rhat_{k-1}(x_{a_*}) +\beta\sum_{\by}\Phat_{k-1}^{a_*}(\bx,\by)V_{M_k}^{\pi_k}(\by) \\
    &\quad -r(x_{a_*}) -\beta\sum_{\by}P^{a_*}(\bx,\by)V_M^{\pi_*}(\by) \\
    &= b_{k-1}(x_{a_*}) {+}\rhat_{k-1}(x_{a_*}) {-}r(x_{a_*}) {+}\beta\sum_{\by}(\Phat_{k-1}^{a_*}(\bx,\by) {-}P^{a_*}(\bx,\by))V_M^{\pi_*}(\by) \\
    &\qquad +\beta \sum_{\by}\Phat_{k-1}^{a_*}(\bx,\by)(V_{M_k}^{\pi_k}(\by) -V_M^{\pi_*}(\by))
\end{align*}
In matrix form, we have
\begin{align*}
    V_{M_k}^{\pi_k} -V_M^{\pi_*}
    &\ge b_{k-1}^{\pi_*} +\rhat^{\pi_*}_{k-1} -r^{\pi_*} {+}\beta(\Phat_{k-1}^{\pi_*} {-}P^{\pi_*})V_M^{\pi_*} +\beta \Phat_{k-1}^{\pi_*}(V_{M_k}^{\pi_k} -V_M^{\pi_*})
\end{align*}
Under event $\calE_{k-1}^\text{UCBVI}$, $b_{k-1}^{\pi_*} +\rhat^{\pi_*}_{k-1} -r^{\pi_*} {+}\beta(\Phat_{k-1}^{\pi_*} {-}P^{\pi_*})V_M^{\pi_*}\ge 0$. This implies that:
\begin{align*}
    (I-\beta \Phat_{k-1}^{\pi_*})(V_{M_k}^{\pi_k} -V_M^{\pi_*}) &\ge 0.
\end{align*}
As $(I-\beta \Phat_{k-1}^{\pi_*})^{-1} = I+ (I-\beta \Phat_{k-1}^{\pi_*}) +(I-\beta \Phat_{k-1}^{\pi_*})^2 +\dots$ is a matrix whose coefficients are all non-negative, this implies that $V_{M_k}^{\pi_k} -V_M^{\pi_*} \ge 0$. 
\end{proof}

\subsubsection{Analysis of $\esp{\Delta^{conc}_k\ind{\calE^\text{UCBVI}_{k-1}}}$ for MB-UCBVI}
Following \eqref{eq:gap_overview}, the expected regret can be written similarly to Equation~\eqref{eq:proof_UCRL2_regret} for MB-UCRL2, one can write that
\begin{align*}
    \esp{\Reg(K,\text{MB-UCBVI},M)}
    &\le \frac{7}{1-\beta} {+} \sum_{k=1}^{K} \esp{\Delta^{model}_k\ind{\calE^\text{UCBVI}_{k-1}}} {+} \esp{\Delta^{conc}_k\ind{\calE^\text{UCBVI}_{k-1}}}.
\end{align*}
The same as MB-UCRL2, the second term is non-positive.
We are therefore left with the last term.
Using Lemma~\ref{lem:regret_decomposition} with $B_k=\frac{H_kL_{k-1}}{2(1-\beta)}$ and the definition of $M_k$ for MB-UCBVI, we have:
\begin{align*}
    \sum_{k=1}^K \esp{ \ind{\calE_{k{-}1}^\text{UCBVI}} \Delta_k^\text{conc}} &{\le} \sum_{k=1}^K \mathbb{E}\bigg[ \ind{\calE_{k{-}1}^\text{UCBVI}}\sum_{t=t_k}^{t_{k+1}-1} \abs{b_{k-1}(X_{t,A_t}) {+}\rhat_{k-1}(X_{t,A_t}){-}r(X_{t,A_t})} \\
    &\qquad {+}\frac{H_kL_{k-1}}{2(1-\beta)}\norm{\Qhat_{k-1}(X_{t,A_t},\cdot){-}Q(X_{t,A_t},\cdot)}_1 \bigg] \\
    &\le \esp{\sum_{k=1}^{K}\sum_{t=t_k}^{t_{k+1}-1} \frac{(2-\beta)L_{k-1}+H_kL_{k-1}(L_{k-1}+1.5\sqrt{S})}{2(1-\beta)\sqrt{\max\{1,N_{k-1}(X_{t,A_t})\} }}} \\
    &\le \esp{\frac{2L_K(L_K+\sqrt{S})\max_{k\le K}H_k}{1-\beta} \sum_{k=1}^{K} \sum_{t=t_k}^{t_{k+1}-1} \frac1{\sqrt{\max\{1,N_{k-1}(X_{t,A_t})\} }}} \\
    &\le \esp{\frac{2L_K(L_K+\sqrt{S})\max_{k\le K}H_k}{1-\beta} \p{Sn\max_{k\le K}H_k +2\sqrt{SnK\max_{k\le K}H_k} }} 
\end{align*}
where the second inequality holds due to the definition of $\calE_{k-1}^\text{UCBVI}$ and the last one holds due to Lemma~\ref{lem:sum}.
With $L_K{=}\sqrt{2\log\frac{4SnK^2\log K}{1-\beta}}$, we have
\begin{align*}
    \sum_{k=1}^K \esp{ \ind{\calE_{k{-}1}^\text{UCBVI}} \Delta_k^\text{conc}}
    &\le \frac{2(1{+}\sqrt{S})}{1-\beta} 2\log\p{\frac{4SnK^2\log K}{1-\beta}} \p{Sn\esp{\max_{k\le K}H_k^2} {+}2\sqrt{SnK} \esp{\max_{k\le K}H_k^{3/2}}}
\end{align*}
The last term of the right side above can be analyzed exactly the same as what is done for \eqref{eq:proof_bound} using Lemma~\ref{lem:moment}.
This concludes the proof.

%

\section{Proof of Theorem~\ref{thm:lower_bound}}
\label{apx:sketch_of_proof_lower}

To prove the lower bound, we consider a specific Markovian bandit problem that is composed of $S$ independent {\it stochastic bandit problems}. 
This allows us to reuse the existing minimax lower bound for stochastic bandit problems. 
This existing result can be stated as follows: let $\Algo\sto$ be a learning algorithm for the stochastic bandit problem. 
It is shown in Theorem~3.1 of \cite{bubeck2012regret} that for any number of arms $n$ and any number of time steps $\tau$, there exists parameters for a stochastic bandit problem $M\sto$ with $n$ arms such that the regret of the learning algorithm over $\tau$ time steps is at least $(1/20)\sqrt{n \tau}$. 
\begin{align}
    \label{eq:regret_sto}
    \Reg\sto( \tau, \Algo\sto, M\sto) \ge \frac1{20}\sqrt{n\tau}. 
\end{align}
This lower bound (Theorem~3.1 of \cite{bubeck2012regret}) is constructed by considering $n$ stochastic bandit problems $M\stoi$ for $j\in[n]$ with parameters that depend on $\tau$ and $n$. 
In the problem $M\stoi$, all arms have a reward $\gamma(\tau,n)$ except arm $j$ that has a reward $\gamma'(\tau,n)>\gamma(\tau,n)$. 
It is shown in Theorem~3.1 of \cite{bubeck2012regret} that a learning algorithm cannot perform uniformly well on all problems because it is impossible to distinguish them {\it a priori}. 
More precisely, in the proof of Lemma~3.2 of \cite{bubeck2012regret}, it is shown that if the best arm is chosen at random, then the expected (Bayesian) regret of any learning algorithm is at least $(1/20)\sqrt{n \tau}$.

As for our problem, let $K$ be a number of episodes, $\beta$ a discount factor, $n$ a number of arms, $S$ a number of states per arm and set $\tau=K/(2S(1-\beta))$. 
We consider a random Markovian bandit model $M$ constructed as follows. 
Each arm $a$ has $S$ states with the state space $\calS^a = \{ 1_a,2_a,\ldots, S_a\}$. 
The transition matrix $Q_a$ is the identity matrix. 
For each state $i\in\{1\dots S\}$, we choose the best arm $a^*_i$ uniformly at random among the $n$ arms, independently for each $i$. 
The rewards of a state $i_a$ are \emph{i.i.d.} Bernoulli rewards with mean $\gamma(\tau,n)$ if $a\ne a^*_i$ and $\gamma'(\tau,n)$ if $a=a^*_i$. 
The initial distribution $\rho$ couples the initial states of all arms for all $i\in\{1\dots S\}$, 
\begin{align*}
    \Proba{\forall a\in[n]: x_{0,a} = i_a} = \frac1S. 
\end{align*} 
In this case, the Markovian bandit problem becomes a combination of $S$ independent stochastic bandit problems with $n$ arms each. 
We denote by $M\sto_i$ the random stochastic bandit problem for the initial state $\bi=(i_a)_{a\in[n]}$. As the $a^*_i$ are chosen independently, a learning algorithm $\Algo$ cannot use the information for $M\sto_i$ to perform better on $M\sto_j$, $j\ne i$.

Let $\phi$ be the distribution of the random Markovian bandit model $M$ defined above and let $T_i$ be the number of time steps spent in state $\bi$ by the learning algorithm $\Algo$.
\begin{align}
    \BayReg(K,\Algo, \phi)
    &\ge \sum_{i=1}^S\esp{\Reg\sto(T_i, \Algo\sto_i, M\sto_i)} \nonumber\\
    &\ge \sum_{i=1}^S\esp{\Reg\sto(\tau, \Algo\sto_i,  M\sto_i)\ind{T_i\ge\tau}}\label{eq:non-decreasing}\\
    &\ge \frac{S}{20} \sqrt{n\tau}\Proba{T_i\ge \tau}\label{eq:lower2}\\
    &= \frac1{20\sqrt{2}}\Proba{T_i\ge \tau} \sqrt{\frac{SnK}{1-\beta}},\label{eq:tau}
\end{align}
where \eqref{eq:non-decreasing} is true because the expected regret is non-decreasing function of the number of episodes, \eqref{eq:lower2} comes from \eqref{eq:regret_sto} and \eqref{eq:tau} from the definition of $\tau$.

We show in the Lemma~\ref{lem:concentration_T_i} below that $\Proba{T_i\le K/(2S(1-\beta))}\le 8S/K$. This shows that for $K\ge16S$, one has $\Proba{T_i\ge \tau}\ge1/2$. This concludes the proof as $40\sqrt{2}\le60$. 

\begin{lemma}
    \label{lem:concentration_T_i}
    Recall that $T_i$ is the number of time steps that the MDP is in state $\bi$ for the MDP model above. Let $G_k$ be a sequence of \emph{i.i.d.} Bernoulli random variable of mean $1/S$ and let $H_k$ be an independent \emph{i.i.d.} sequence of geometric random variable of parameter $1-\beta$. Then:
    \begin{itemize}
        \item[(i)] $T_i\sim\sum_{k=1}^K G_k H_k$,
        \item[(ii)] $\esp{T_i} = K/(S(1-\beta))$,
        \item[(iii)] $\Proba{T_i \ge \esp{T_i}/2} \ge 1-8S/K$. 
    \end{itemize}
\end{lemma}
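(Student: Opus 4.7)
}
The plan is to first identify the distribution of $T_i$ using the structure of the constructed MDP, then compute its mean, and finally apply Chebyshev's inequality to obtain the tail bound.

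For (i), the key observation is that in the lower-bound construction each transition matrix $Q^a$ is the identity, so the global state $\bX_t$ does not change within an episode: if $\bX_{t_k} = \bi$ then $\bX_t = \bi$ for all $t \in \{t_k, \dots, t_k + H_k - 1\}$. Letting $G_k := \ind{\bX_{t_k} = \bi}$, the initial distribution $\rho$ gives $G_k \sim \mathrm{Bernoulli}(1/S)$, and the $G_k$'s are i.i.d.\ across episodes since initial states are drawn independently. The $H_k$'s are i.i.d.\ $\mathrm{Geom}(1-\beta)$, independent of the $G_k$'s. Hence
\begin{equation*}
    T_i = \sum_{k=1}^K H_k \ind{\bX_{t_k} = \bi} \;\sim\; \sum_{k=1}^K G_k H_k.
\end{equation*}

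For (ii), by independence of $G_k$ and $H_k$, $\esp{T_i} = K \esp{G_k}\esp{H_k} = K/(S(1{-}\beta))$.

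For (iii), I would use Chebyshev's inequality. Since $G_k, H_k$ are independent,
\begin{equation*}
    \var{G_k H_k} \le \esp{G_k^2 H_k^2} = \frac{1}{S}\esp{H_k^2} = \frac{1}{S}\cdot\frac{1+\beta}{(1-\beta)^2} \le \frac{2}{S(1-\beta)^2},
\end{equation*}
using the standard moment formula for the geometric distribution. By independence across episodes, $\var{T_i} \le 2K/(S(1-\beta)^2)$. Writing $\mu := \esp{T_i} = K/(S(1-\beta))$, Chebyshev's inequality gives
\begin{equation*}
    \Proba{T_i < \mu/2} \le \Proba{\abs{T_i - \mu} \ge \mu/2} \le \frac{\var{T_i}}{(\mu/2)^2} \le \frac{2K/(S(1-\beta)^2)}{K^2/(4S^2(1-\beta)^2)} = \frac{8S}{K},
\end{equation*}
which is exactly (iii).

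The main obstacle is essentially cosmetic: one must justify carefully step~(i), in particular the fact that under the identity-transition construction the state stays constant throughout an episode, so $T_i$ reduces cleanly to a sum of i.i.d.\ products $G_k H_k$. Once this reduction is in place, (ii) is immediate from Wald-type independence and (iii) follows by a routine second-moment computation combined with Chebyshev; no concentration heavier than Chebyshev is required, and the bound $8S/K$ falls out exactly.
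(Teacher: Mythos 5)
Your proposal is correct and follows essentially the same route as the paper: identify $T_i$ as $\sum_k G_k H_k$ via the identity-transition structure, compute the mean by independence, and apply Chebyshev with a second-moment bound on $G_kH_k$ using $\esp{H_k^2}=(1+\beta)/(1-\beta)^2$. The only cosmetic difference is that you bound $\var{G_kH_k}$ directly by $\esp{G_k^2H_k^2}$ whereas the paper phrases the same computation as $\var{H_1G_1}\le 2S(\esp{H_1G_1})^2$; both give exactly $8S/K$.
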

\begin{proof}
    Let $G_k$ be a random variable that equals $1$ if the initial state $\bi$ is chosen at the beginning of episode $k$ and recall that $H_k$ is the episode length. By definition, the variables $G_k$ and $H_k$ are independent and follow respectively Bernoulli and geometric distribution. This shows \emph{(i)}. 
    
    Let $W_k=G_kH_k$. As the $W_k$ are \emph{i.i.d.} and $G_k$ and $H_k$ are independent, we have: 
    \begin{align*}
        \esp{T_i} = K\esp{H_1G_1} &= \frac{K}{S(1-\beta)}.
    \end{align*}
    This shows (ii). 
    
    Moreover, $\var{T_i} = K\var{H_1G_1}$. Hence, by using Chebyshev's inequality, one has:
    \begin{align*}
        \Proba{T_i \le \frac{\esp{T_i}}{2}} &\le \Proba{\abs{T_i-\esp{T_i}} \ge \frac{\esp{T_i}}{2}}\\
        &\le \frac{4\var{T_i}}{(\esp{T_i})^2}\\
        &= \frac{4}{K} \frac{\var{H_1G_1}}{(\esp{H_1G_1})^2}.
    \end{align*}

    Concerning the variance, the second moment of a geometric random variable of parameter $1-\beta$ is $(1+\beta)/(1-\beta)^2$. This shows that $\esp{(H_1G_1)^2}=(1+\beta)/(S(1-\beta)^2)\le 2S(\esp{H_1G_1})^2$. This implies:
    \begin{align*}
        \var{H_1G_1} &\le (2S-1)(\esp{H_1G_1})^2\le 2S(\esp{H_1G_1})^2.        
    \end{align*}
    This implies (iii).
\end{proof}

\section{Proof of Theorem~\ref{thm:no_OFU}}
\label{apx:proof_OFU}

\tikzstyle{state}=[circle, draw]
\tikzstyle{reward}=[node distance=0.6cm,font=\small]

\begin{figure}[ht]
    \centering
    \begin{tabular}{c|c|c}
        \begin{tikzpicture}[xscale=0.7]
            \node[state] at (0,0) (A) {$A_1$};
            \node[state] at (2,0) (B) {$A_2$};
            \node[state] at (4,0) (C) {$A_3$};
            \node[below of=A,reward] {$+3$};
            \node[below of=B,reward] {$+4$};
            \node[below of=C,reward] {$+0$};
            \draw (A) edge[loop above, ->] node[above]{$0.5$} (A);
            \draw (A) edge[->] node[above]{$0.5$} (B);
            \draw (B) edge[->] node[above]{$1$} (C);
            \draw (C) edge[loop above, ->] node[above]{$1$} (C);
        \end{tikzpicture}
        &\begin{tikzpicture}[xscale=0.7]
            \node[state] at (0,0) (A) {$B_1$};
            \node[state] at (2,0) (B) {$B_2$};
            \node[state] at (4,0) (C) {$B_3$};
            \node[below of=A,reward] {$+3.21$};
            \node[below of=B,reward] {$+0$};
            \node[below of=C,reward] {$+3.21$};
            \draw (A) edge[->] node[above]{$1$} (B);
            \draw (B) edge[loop above, ->] node[above]{$1$} (B);
            \draw (C) edge[loop above, ->] node[above]{$1$} (C);
        \end{tikzpicture}
        &\begin{tikzpicture}[xscale=0.7]
            \node[state] at (0,0) (A) {$C_1$};
            \node[below of=A,reward] {$+\mu$};
            \draw (A) edge[loop above, ->] node[above]{$1$} (A);
        \end{tikzpicture}
        \\
        (a) $\hat{Q}_a$ and $\hat{r}_a=r_a$.
        &(b) $\hat{Q}_b=Q_b$ and $\hat{r}_b=r_b$.
        &(c) $\hat{Q}_c=Q_c$ and $\hat{r}_c=r_c$.
    \end{tabular}
        \caption{Counterexample for OFU indices: $\hatB_a$, $\hatB_b=\calB_b$, $\hatB_c=\calB_c$.}
        \label{fig:counter-example1}
\end{figure}

In this proof, we reason by contradiction and assume that there exists a procedure that computes local indices such that the obtained policy is such that for any estimate $\hatB$ and any initial condition $\rho$, then if $M\in\M(\hatB)$, one has 
\begin{align}
    \sup_{M\in\M(\hatB)}V^{\pi^{I^{\hatB}}}_{M}(\rho) \ge \sup_{\pi} V^\pi_{M}(\rho).
\end{align}
In the remaining of this section, we set the discount factor to $\beta=0.5$. For a given state $x_a$, we denote by $I(x_a)$ the local index of state $x_a$ computed by this hypothetically optimal algorithm.

We first consider a Markovian bandit problem with two arms $\{b,c\}$. We consider that these two arms are perfectly estimated (\ie, $\epsilon^r_b(x_b)=\epsilon^Q_b(x_b)=\epsilon^r_c(x_c)=\epsilon^Q_c(x_c)=0$). The Markov chains for these arms are depicted in Figure~\ref{fig:counter-example1}. Their transitions matrices and rewards are
\begin{align*}
    Q_b = \left[\begin{array}{ccc}
        0 & 1 & 0\\
        0 & 1 & 0\\
        0 & 0 & 1
    \end{array}\right]
    \text{ and } r_b = [3.21,0,3.21];
    &&Q_c = [1] \text{ and } r_c = [\mu].
\end{align*}
As the Markovian bandit are perfectly known, the indices $I(B_1)$, $I(B_2)$, $I(B_3)$ and $I(C_1)$ must be such that the obtained priority policy is optimal for the true MDP, that is: states $B_1$ and $B_3$ should have priority over $C_1$ (\ie, $I(B_1)>I(C_1)$ and $I(B_3)>I(C_1)$) if and only if $\mu<3.21$, and state $B_2$ should have priority over $C_1$ (\ie, $I(B_2)>I(C_1)$) if and only if $\mu<0$. This implies that the local indices defined by our hypothetically optimal algorithm must satisfy
\begin{align*}
    I(B_1) = I(B_3) > I(B_2).
\end{align*}
Now, we consider Markovian bandit problems with two arms $\{a,b\}$, where Arm~$b$ is as before. For Arm~$a$, we consider a confidence set $\hatB_a=(\hat{Q}_a,\hat{r}_a,\epsilon^r_a,\epsilon^Q_a)$ where $(\hat{Q}_a,\hat{r}_a)$ are depicted in Figure~\ref{fig:counter-example1}(a) and where $\epsilon^r_a(x_a)=0$ and $\epsilon^Q_a(x_a)=0.2$:
\begin{align*}
    \hat{Q}_a = \left[\begin{array}{ccc}
        0.5 & 0.5 & 0\\
        0 & 0 & 1\\
        0 & 0 & 1
    \end{array}\right]
    \text{ and }\hat{\br}_a = \br_a = [3, 4, 0]
    &&\epsilon_a^Q = [0.1, 0.1, 0.1] \text{ and }\epsilon_a^r=[0,0,0].
\end{align*}
We consider two possible instances of the ``true'' Markovian bandit problem, denoted $M^1$ and $M^2$. For $M^1$, the transition matrix and reward function of the first arm are depicted in Figure~\ref{fig:counter-example2}(a). For $M^2$, they are depicted in Figure~\ref{fig:counter-example2}(b). In both cases, $(Q_b,r_b)$ are as in Figure~\ref{fig:counter-example1}(b). It should be clear that $M^1\in\M$ and $M^2\in\M$. 

\begin{figure}[ht]
    \centering
    \begin{tabular}{c|c}
        \begin{tikzpicture}[xscale=0.7]
            \node[state] at (0,0) (A) {$A_1$};
            \node[state] at (2,0) (B) {$A_2$};
            \node[state] at (4,0) (C) {$A_3$};
            \node[below of=A,reward] {$+3$};
            \node[below of=B,reward] {$+4$};
            \node[below of=C,reward] {$+0$};
            \draw (A) edge[loop above, ->] node[above]{$0.4$} (A);
            \draw (A) edge[->] node[above]{$0.6$} (B);
            \draw (B) edge[->] node[above]{$1$} (C);
            \draw (C) edge[loop above, ->] node[above]{$1$} (C);
        \end{tikzpicture}
        &\begin{tikzpicture}[xscale=0.7]
            \node[state] at (0,0) (A) {$A_1$};
            \node[state] at (2,0) (B) {$A_2$};
            \node[state] at (4,0) (C) {$A_3$};
            \node[below of=A,reward] {$+3$};
            \node[below of=B,reward] {$+4$};
            \node[below of=C,reward] {$+0$};
            \draw (A) edge[loop above, ->] node[above]{$0.6$} (A);
            \draw (A) edge[->,bend right] node[above]{$0.4$} (B);
            \draw (B) edge[->,bend right] node[above]{$0.1$} (A);
            \draw (B) edge[->] node[above]{$0.9$} (C);
            \draw (C) edge[loop above, ->] node[above]{$0.9$} (C);
            \draw (C) edge[->, bend right] node[above]{$0.1$} (A);
        \end{tikzpicture}
        \\
        (a) $(Q^a,r^a)$ for $M^1$ &  (b) $(Q^a,r^a)$ for $M^2$
    \end{tabular}
    \caption{The two instances of $\calB_a^1$ and $\calB_a^2$}
    \label{fig:counter-example2}
\end{figure}

If there exist indices that can be computed locally, then the indices for an arm should not depend on the confidence that one has on the other arms. The indices $I(A_1)$, $I(A_2)$ and $I(A_3)$ must satisfy the following facts: 
\begin{itemize}
    \item $I(A_3)\in(I(B_2),I(B_3))$ because for all Markovian bandit $M\in\M$, state $A_3$ should have priority over state $B_2$ and should not have priority over state $B_3$ (because of the discount factor $\beta=1/2$).
    \item $I(A_2)>I(B_1)=I(B_3)$ because for all Markovian bandit $M\in\M$, state $A_2$ will give a higher instantaneous reward than state $B_1$ or $B_3$. It should therefore have a higher priority.
\end{itemize}
This leaves two possibilities for $I(A_1)$: 
\begin{itemize}
\item If $I(A_1)>I(B_1)=I(B_3)$, then state $A_1$ has priority over both $B_1$ and $B_3$.  We denote the corresponding priority policy $\pi^1$.
\item If $I(A_1)<I(B_1)=I(B_3)$, then state $B_1$ and $B_3$ have a higher priority than state $A_1$. We denote the corresponding priority policy by $\pi^2$.
\end{itemize}  

We use a numerical implementation of extended value iteration (available in the supplementary material) to find that:
\begin{align}
    \sup_{M\in\mathbb{M}} V^{\pi^2}_{M}(A_1,B_3) \approx 6.42
    &< \sup_{\pi}V^{\pi}_{M^1}(A_1,B_3) \approx 6.47 \label{eq:counter-example1}\\
    \sup_{M\in\mathbb{M}} V^{\pi^1}_{M}(A_1,B_1) \approx 5.96
    &< \sup_{\pi}V^{\pi}_{M^2}(A_1,B_1) \approx 6.00 \nonumber
\end{align}
This implies that there does not exist any definition of indices such that \eqref{eq:optimism} holds regardless of $M$ and $\bx$. 

\section{Description of the Algorithms and Choice of Hyperparameter}
\label{apx:algos}

In this section, we provide a detailed description of the simulation environment used in the paper. We first describe the Markov chain used in our example. Then, we describe all algorithms that we compare in the paper. For each algorithm, we give some details about our choice of hyperparameters. Last, we also describe the experimental methodology that we used in our simulations. 

\subsection{Description of the example}
\label{apx:scene1}

We design an environment with 3 arms, all following a Markov chain represented in Table~\ref{fig:randomwalk}.  This Markov chain is obtained by applying the optimal policy on the river swim MDP of \cite{filippiOptimismReinforcementLearning2010a}. 
In each chain, there are 2 rewarding states: state 1 with low mean reward $r_L$, and state 4) with high mean reward $r_R$, both with Bernoulli distributions. At the beginning of each episode, all chains start in their state 1. Each chain is parametrized by the values of $p_L,p_R,p_{RL},r_L,r_R$ that are given in Table~\ref{fig:randomwalk} along
with the corresponding Gittins indices of each chain.

\begin{table}[htbp]
    \begin{tabular}{@{}c@{}c@{}}
        \begin{tabular}{@{}c@{}}
            \includegraphics[angle=0, width=0.43\linewidth]{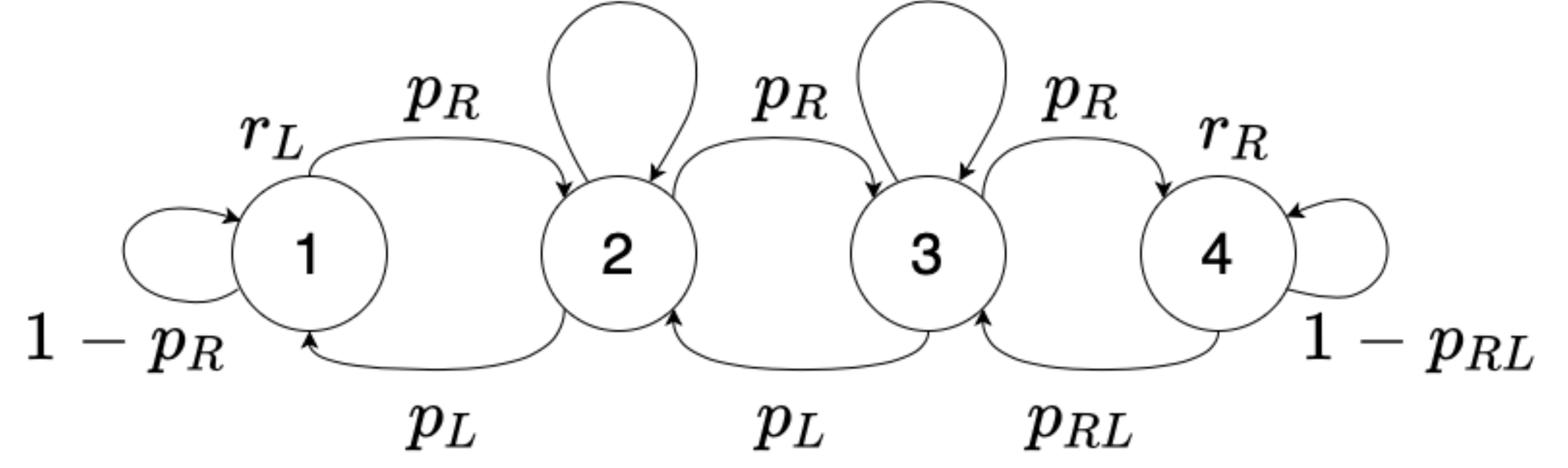}    
        \end{tabular}
        &\begin{tabular}{|@{~}c@{~}|@{~}c@{~}|@{~}c@{~}|@{~}c@{~}|@{~}c@{~}|c@{~}|@{~}c@{~}|@{~}c@{~}|@{~}c@{~}|}
            \cline{6-9}
            \multicolumn{5}{@{}c@{}}{} 
            & \multicolumn{4}{@{}|@{~}c@{~}| }{Gittins index for each state} \\ \cline{1-9}
             $p_L$ & $p_R$ & $p_{RL}$ & $r_L$ & $r_R$ & 1 & 2 & 3 & 4\\ \cline{1-9}
              0.1  &  0.2  &   0.3    &  0.2  &  1.0  & 0.276 & 0.2894 & 0.392 & 1.0 \\ \cline{1-9}
              0.1  &  0.5  &   0.7    & 0.35  &  0.7  & 0.35 & 0.256 & 0.2892 & 0.7 \\ \cline{1-9}
              0.1  &  0.4  &   0.5    &  0.4  &  0.65 & 0.4 & 0.250 & 0.286 & 0.65 \\ \cline{1-9}
            \end{tabular}            
    \end{tabular}
    \vspace{0.2cm}
    \caption{The random walk chain with 4 states. In state 4, the chain has an average reward $r_R$. For state 2 and 3, the chain gives zero reward. In state 1, the mean reward is $r_L$. This chain is obtained by applying the optimal policy on the 4-state river swim MDP of \cite{filippiOptimismReinforcementLearning2010a}. The table contains the parameters that we used, along with Gittins indices of all states when the discount factor is $\beta=0.99$.}
    \label{fig:randomwalk}
\end{table}

\subsection{\bf MB-PSRL}
\label{sssec:impl_gittinsPS}

MB-PSRL, the adaption from PSRL, puts prior distribution on the parameters $(\br^a,Q^a)$ of each Arm$~a$, draws a sample from the posterior distribution and uses it to compute the Gittins indices at the start of each episode. We implement two posterior updates for the mean reward vector $\br^a$: Beta and Gaussian-Gamma. The second posterior, Gaussian-Gamma, will be used in prior choice sensitivity tests. For the transition matrix $Q^a$, we implemented Dirichlet posterior update because Dirichlet distribution is the only natural conjugate prior for categorical distribution. Beta, Gaussian-Gamma and Dirichlet distributions can be easily sampled using the numpy package of Python. This greatly contributes to the computational efficiency of MB-PSRL.

We give more details on this prior distribution and their conjugate posterior in the subsections below.

\subsubsection{Bayesian Updates: Conjugate Prior and Posterior Distributions}
\label{apx:Bayes}

MB-PSRL is a Bayesian learning algorithm. 
As such, it samples reward vectors and transition matrices at the start each episode.
We would like to emphasize that neither the definition of the algorithm nor its performance guarantees that we prove in Theorem~\ref{thm:regret_upper_bound} depend on a specific form of the prior distribution $\phi$. 
Yet, in practice, some prior distributions are more preferable because their conjugate distributions are easy to implement. 
In the following, we give concrete examples on how to update the conjugate distribution given the observations. 

For $a\in[n]$ and $x_a\in\calS^a$, let $N_{k-1}(x_a)$ be the number of activations of arm $a$ while in state $x_a$ up to episode $k$. 
For this state $x_a$, the number of samples of the reward and of transitions from $x_a$ are equal to $N_{k-1}(x_a)$. 
To ease the exposition, we drop the label $a$ and assume that we are given: 
\begin{itemize}
    \item $N_{k-1}(x)$ \emph{i.i.d.} samples $\{Y_1,\dots,Y_{N_{k-1}(x)}\}$ of next states to which the arm transitioned from $x$.
    \item $N_{k-1}(x)$ \emph{i.i.d.} samples $\{R_1,\dots,R_{N_{k-1}(x)}\}$ of random immediate rewards earned while the arm was activated in state $x$
\end{itemize}
Each $Y_i$ is such that $\Proba{Y_i=y}=Q(x,y)$ and each $R_i$ is such that $\esp{R_i}=r(x)$. In what follows, we describe natural priors that can be used to estimate the transition matrix and the reward vector. 

\subsubsection{Transition Matrix}
If no information is known about the arm, the natural prior distribution is to consider the lines $Q(x,\cdot)$ of the matrix as independent multivariate random variables uniformly distributed among all non-negative vectors of length $S$ that sum to $1$. 
This corresponds to a Dirichlet distribution of parameters $\alpha=(1,\dots,1)$. 
For a given $x$, the variables $\{Y_1,\dots,Y_{N_{k-1}(x)}\}$ are generated according to a categorical distribution $Q(x,\cdot)$. 
The Dirichlet distribution is self-conjugate with respect to the likelihood of a categorical distribution. 
So, the posterior distribution $\phi(Q(x,\cdot)| Y_1,\dots,Y_{N_{k-1}(x)})$ is a Dirichlet distribution with parameters $\mathbf{c}=(c_1\dots c_S)$ where $c_y=1+\sum_{i=1}^{N_{k-1}(x)}\ind{Y_i=y}$.

\subsubsection{Reward Distribution}
\label{apx:reward_post}

As for the reward vector, the choice of a good prior depends on the distribution of rewards. 
We consider two classical examples: Bernoulli and Gaussian.

\paragraph{Bernoulli distribution}
A classical case is to assume that the reward distribution of a state $x$ is Bernoulli with mean value $r(x)$.
A classical prior in this case is to consider that $\{r(x)\}_{\{x\in\calS\}}$ are \emph{i.i.d.} random variables following a uniform distribution whose support is $[0,1]$. 
The posterior distribution of $r(x)$ at time $t$ is the distribution of $r(x)$ conditional to the reward observations from state $x$ gathered up to time $t$. 
The posterior distribution $\phi(r(x)\mid R_1,\dots,R_{N_{k-1}(x)})$ is then a Beta distribution with parameters $(1+\sum_{i=1}^{N_{k-1}(x)}\ind{R_i=1}, 1+\sum_{i=1}^{N_{k-1}(x)}\ind{R_i=0})$. 
Recall that the Beta distribution is a special case of the Dirichlet distribution in the same way as the Bernoulli distribution is a special case of the Categorical distribution. 

\paragraph{Gaussian distribution}
We now consider the case of Gaussian rewards and we assume that the immediate rewards earned in state $x$ are \emph{i.i.d.} Gaussian random variables of mean and variance $(r(x), \sigma^2(x))$. 
A natural prior for Gaussian rewards is to consider that $\{{(r(x), \frac1{\sigma^2(x)})}\}_{\{x\in\calS\}}$ are \emph{i.i.d.} bivariate random variables where the marginal distribution of each $\frac{1}{\sigma^2(x)}$ is a Gamma distribution (it is a natural belief since the empirical variance of Gaussian has a chi-square distribution which is a special case of Gamma distribution). 
Conditioned on $\frac1{\sigma^2(x)}$, $r(x)$ follows a Gaussian distribution of variance $\sigma^2(x)$. 
We say that $(r(x), \frac1{\sigma^2(x)})$ has a Gaussian-Gamma distribution, which is self-conjugate with respect to a Gaussian likelihood (\emph{i.e.,} the likelihood of Gaussian rewards). 
So, given the reward observations, the marginal distribution of $\frac{1}{\sigma^2(x)}$ is still a Gamma distribution. 
$r(x)$ has Gaussian distribution conditioned on the reward observations and $\frac1{\sigma^2(x)}$. 
Indeed, let $\hat{r}(x)=\frac{1}{N_{k-1}(x)}\sum_{i=1}^{N_{k-1}(x)}R_i$ and $\hat{\sigma}^2(x)=\frac{1}{N_{k-1}(x)}\sum_{i=1}^{N_{k-1}(x)}\left(R_i-\hat{r}(x)\right)^2$ be the empirical mean and empirical variance of $R_i$. Then it can be shown that the posterior distribution of $\frac{1}{\sigma^2(x)}$ and $r(x)$ are:
\begin{align*}
    \frac{1}{\sigma^2(x)}\mid R_1,\dots,R_{N_{k-1}(x)}&{\sim} 
    \mathrm{Gamma}\bigg(\frac{N_{k-1}(x){+}1}{2}, \frac{1}{2}{+}\frac{N_{k-1}(x)\hat{\sigma}^2(x)}{2} {+}\frac{N_{k-1}(x)\hat{r}^2(x)}{2(N_{k-1}(x){+}1)}\bigg)\\
    r(x)\mid \frac1{\sigma^2(x)}, R_1,\dots,R_{N_{k-1}(x)}&{\sim} \mathcal{N}\left(\frac{N_{k-1}(x)\hat{r}(x)}{N_{k-1}(x)+1}, \frac{\sigma^2(x)}{N_{k-1}(x)+1}\right).
\end{align*}
For more details about the analysis of conjugate prior and posterior presented above as well as more conjugate distributions, we refer the reader to \cite{finkCompendiumConjugatePriors1997,murphyConjugateBayesianAnalysis2007}.

Notice that a reward that has a Gaussian distribution violates the property that all rewards are in $[0,1]$.
This could invalidate the bound on the regret of our algorithm proven in Theorem \ref{thm:regret_upper_bound}. 
Actually, it is possible to correct the proof to cover the Gaussian case by replacing the Hoeffding's inequality used in Lemma \ref{lem:concentration} by a similar inequality, also valid for sub-Gaussian random variables, see \cite{Vershynin}. 
In the experimental section (see \ref{ssec:prior}), we also show that a bad choice for the prior distribution of the reward (assuming a Gaussian distribution while the rewards are actually Bernoulli) does not alter too much the performance of the learning algorithm.





\subsection{Experimental Methodology}
\label{ssec:experimental_methodo}

In our numerical experiment, we did 3 scenarios to evaluate the algorithms (scenario 2 and 3 are given in Appendix~\ref{apx:add_numerical}). In each scenario, we choose the discount factor $\beta=0.99$ (which is classical) and we compute the regret over $K=3000$ episodes. The number of simulations varies over scenario depending on how the regret is computed. For each run, we draw a sequence of horizons $\{H_k\}_{k\in[3000]}$ from a geometric distribution of parameter $0.01$ and we run all algorithms for this sequence of time-horizons to remove a source of noise in the comparisons.  

For a given sequence of policies $\pi_k$, following Equation~\eqref{eq:regretTraj}, the expected regret is $\esp{\sum_{k=1}^K\Delta_k(\bX_{t_k})}$ where $\Delta_k(\bX_{t_k})$ is the expected regret over episode $k$.
To reduce the variance in the numerical experiment, we compute
$\Delta_k(\bX_{t_k})=V^{\pi_*}_M(\bX_{t_k}) -V^{\pi_k}_M(\bX_{t_k})$.
For a given Markovian bandit problem and state $\bx$, the value $V^{\pi_*}_{M}(\bx)$ can be computed by using the retirement evaluation presented in Page~272 of \cite{whittle1996optimal}. It seems, however, that the same methodology is not applicable to compute the value function of an index policy that is not the Gittins policy. This means that while the policy $\pi_k$ is easily computable, we do not know of an efficient algorithm to compute its value $V^{\pi_k}_{M}(\bx)$. Hence, in our simulations, we will use two methods to compute the regret, depending on the problem size:
\begin{enumerate}
    \item (Exact method) Let $(r^{\pi}, P^{\pi})$ be the reward vector and transition matrix under policy $\pi$ (i.e. $\forall \bx, \by\in\calE, r^{\pi}(\bx)=r(\bx,\pi(\bx)), P^{\pi}(\bx,\by)=P^{\pi(\bx)}(\bx, \by)$). Using the Bellman equation, the value function under policy $\pi$ is computed by
    \begin{equation}
        \label{eq:exact_value}
        V_{M}^{\pi}=(\pmb{1}-\beta P^{\pi})^{-1}r^{\pi}.
    \end{equation}
    The matrix inversion can be done efficiently with the numpy package of Python. However, this takes $S^{2n}+2S^{n}$ of memory storage. Hence, when the number of states and arms are too large, the exact computation method cannot be performed. 

    \item (Monte Carlo method) In Scenario~2, the model has $n=9$ arms with $S=11$ states each, which makes the exact method inapplicable. In this case, it is still possible to compute the optimal policy and to apply Gittins index based algorithms but computing their value is intractable. In such a case, to measure the performance, we do 240 simulations for each algorithm and try to approximate $\Delta_k$ by
    \begin{equation}
        \label{eq:simulation_value}
        \hat{\Delta}_k=\frac1{\text{\#replicas}}\sum_{j=1}^{\text{\#replicas}}\sum_{t=0}^{H_k^{(j)}-1}\Big[r(X_{t,A^{*,(j)}_t}^{*,(j)}) -r(X_{t,A^{(j)}_t}^{(j)})\Big],
    \end{equation}
    where $H_k^{(j)}$ is the horizon of the $k$th episode of the $j$th simulation and $\{X_{t,A^{*,(j)}_t}^{*,(j)}\}$ and $\{X_{t,A^{(j)}_t}^{(j)}\}$ are the trajectories of the oracle and the agent respectively. The term oracle refers to the agent that knows the optimal policy.
\end{enumerate}
Note that the expectation of \eqref{eq:simulation_value} is equal to the value given in \eqref{eq:exact_value} but \eqref{eq:simulation_value} has a high variance. Hence, when applicable (Scenario~1~and~3) we use Equation~\eqref{eq:exact_value} to compute the expected regret.

\section{Additional Numerical Experiments}
\label{apx:add_numerical}

\subsection{Scenario 1: Small Dimensional Example (Random Walk chain)}
\label{sssec:random_walk}

This scenario is explained in Appendix~\ref{apx:scene1} and the main numerical results are presented in Section~\ref{sec:numerical}. Here, we provide the result with error bars with respect to the random seed. The error bar size equals twice the standard deviation over 80 samples (each sample is a simulation with a given random seed and the random seeds are different for different simulations).

\begin{figure}[ht]
    \center
    \includegraphics[width=0.9\linewidth]{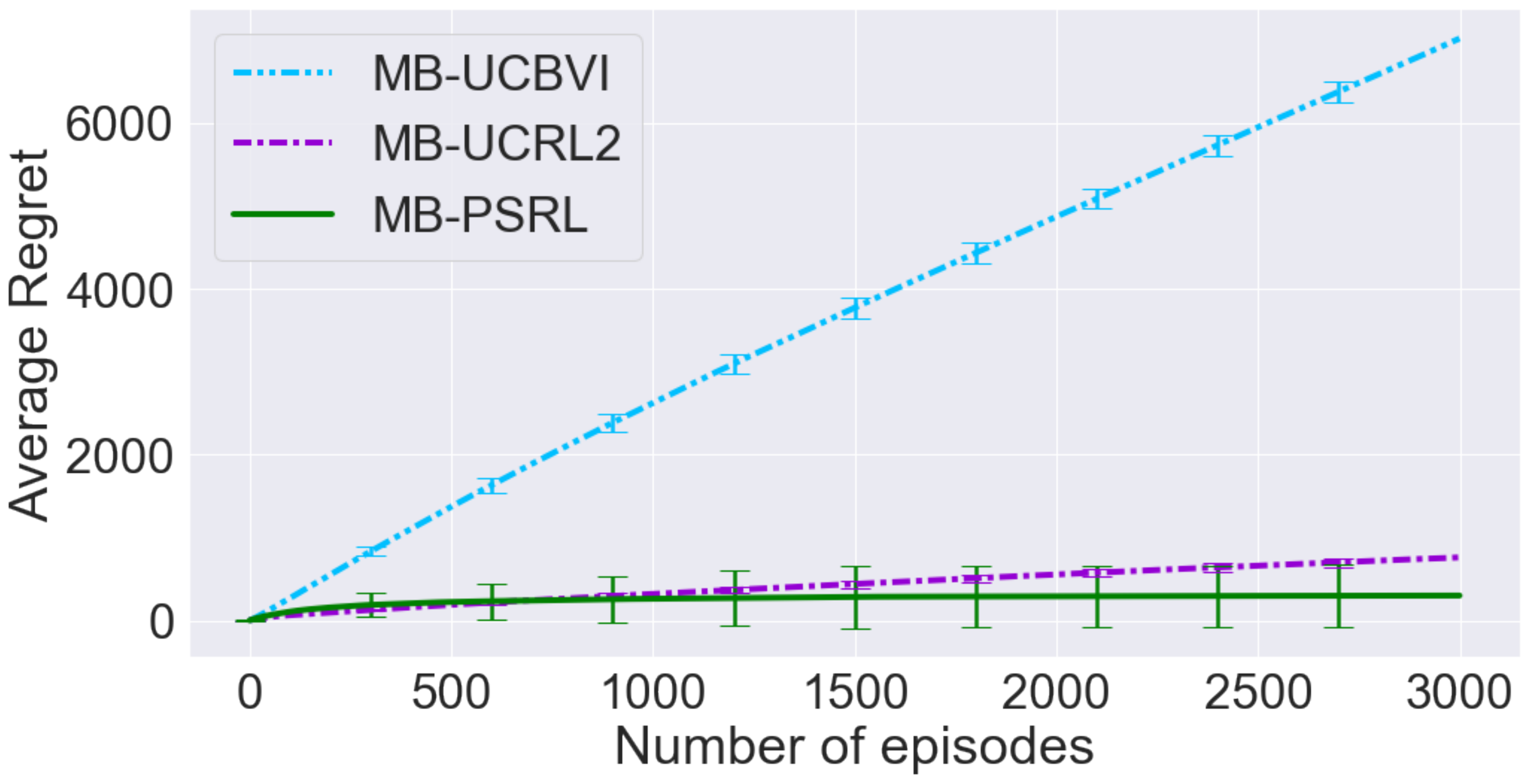}
    \caption{Average cumulative regret in function of the number of episodes. Result from 80 simulations in a Markovian bandit problem with three 4-state random walk chains given in Table~\ref{fig:randomwalk}. The horizontal axis is the number of episodes. The size of the error bar equals twice the standard deviation over 80 simulations.}
    \label{fig:randomwalk_3b4s_errorbar}
\end{figure}

\subsection{Scenario 2: Higher Dimensional Example (Task Scheduling)}
\label{sssec:task_scheduling}

We now study an example that is too large to apply MB-UCRL2 . Hence, here we only compare MB-PSRL and MB-UCBVI. 

We implement the environment proposed on page 19 of \cite{duffQLearningBanditProblems1995a} that was used as a benchmark for the algorithm in the cited paper. Each chain represents a task that needs to be executed, and is represented in \figurename~\ref{fig:task_scheduling}(a). 
Each task has 11 states (including finished state $\star$ that is absorbing). 
For a given chain $a\in\{1,\dots,9\}$ and a state $i\in\{1,\dots,10\}$, the probability that a task $a$ ends at state $i$ is $\rho^{(a)}_{i}=\sProba{\tau^{(a)}=i \mid\tau^{(a)}\ge i}$ where $\tau^{(a)}$ is the execution time of task $a$. 
We choose the same values of the parameters as in \cite{duffQLearningBanditProblems1995a}: $\rho^{(a)}_{1}=0.1a$ for $a\in\{1,\dots,9\}$, $\lambda=0.8$, $\beta=0.99$ and for $i\ge2$, 
\begin{align*}
    \mathbb{P}\{x_a=i\}=\{1-[1-\rho^{(a)}_{1}]\lambda^{i-1}\}[1-\rho^{(a)}_{1}]^{i-1}\lambda^{\frac{(i-1)(i-2)}{2}}.
\end{align*}
Hence, the hazard rate $\rho^{(a)}_{i}$ is increasing with $i$. The reward in this scenario is deterministic: the agent receives 1 if the task is finished (\emph{i.e.,} under the transition from any state $i$ to state $\star$) and 0 otherwise (\emph{i.e.,} any other transitions including the one from state $\star$ to itself).
For MB-PSRL, we use a uniform prior for the expected rewards and consider that the rewards are Bernoulli distributed.

\begin{figure}[ht]
    \center
    \begin{tabular}{cc}
        \begin{minipage}{.5\linewidth}
            \includegraphics[angle=270, width=\textwidth]{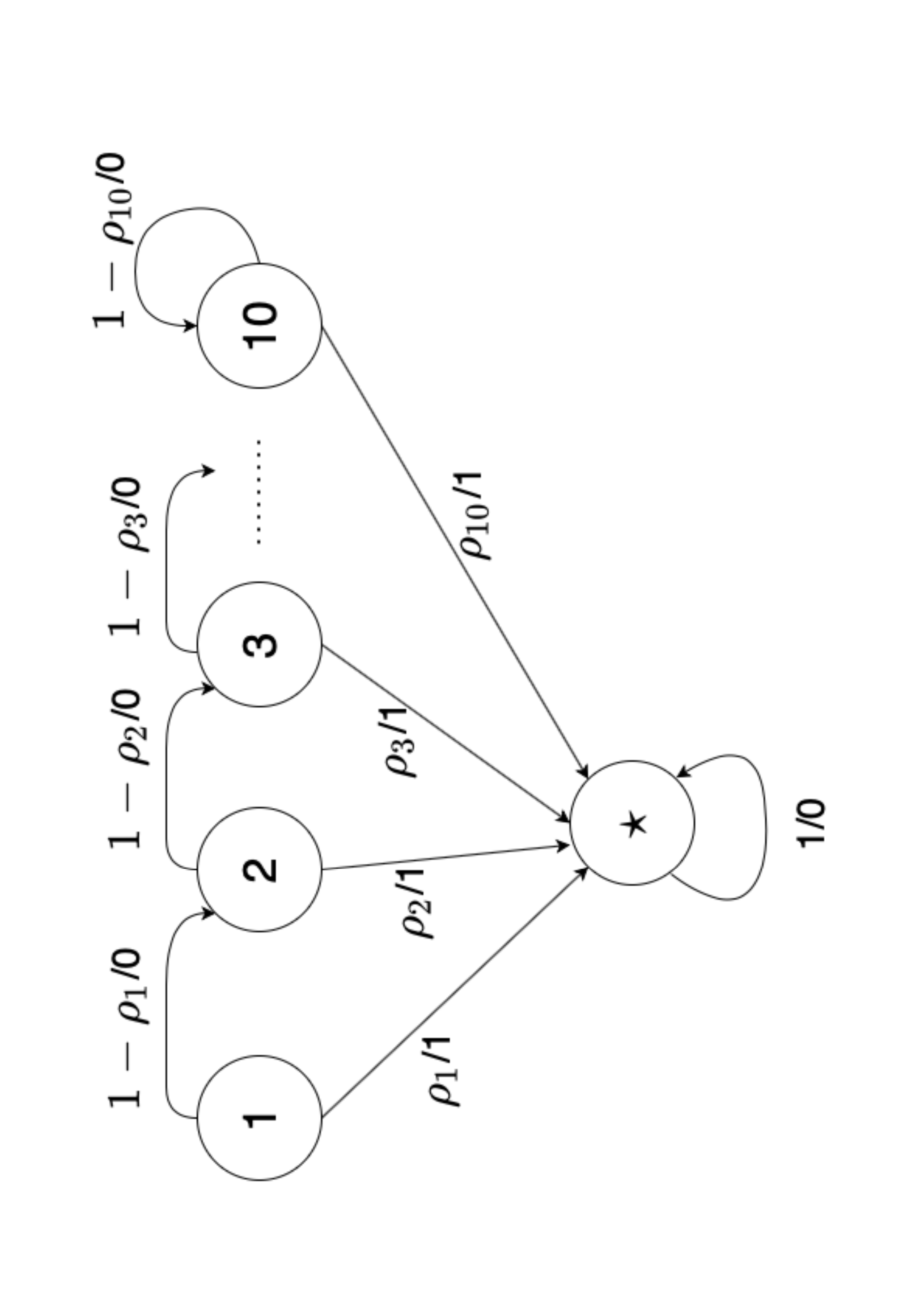}
            (a) In state $i$, the task is finished with probability $\rho_i$ or transitions to state $i+1$ with probability $1-\rho_i$. For $i=1,\dots,10$, the transition from state $i$ to state $\star$ provides 1 as the immediate reward. Otherwise, the agent always receives 0 reward.
        \end{minipage}
        &
        \begin{minipage}{.4\linewidth}
            \includegraphics[angle=0, width=\columnwidth]{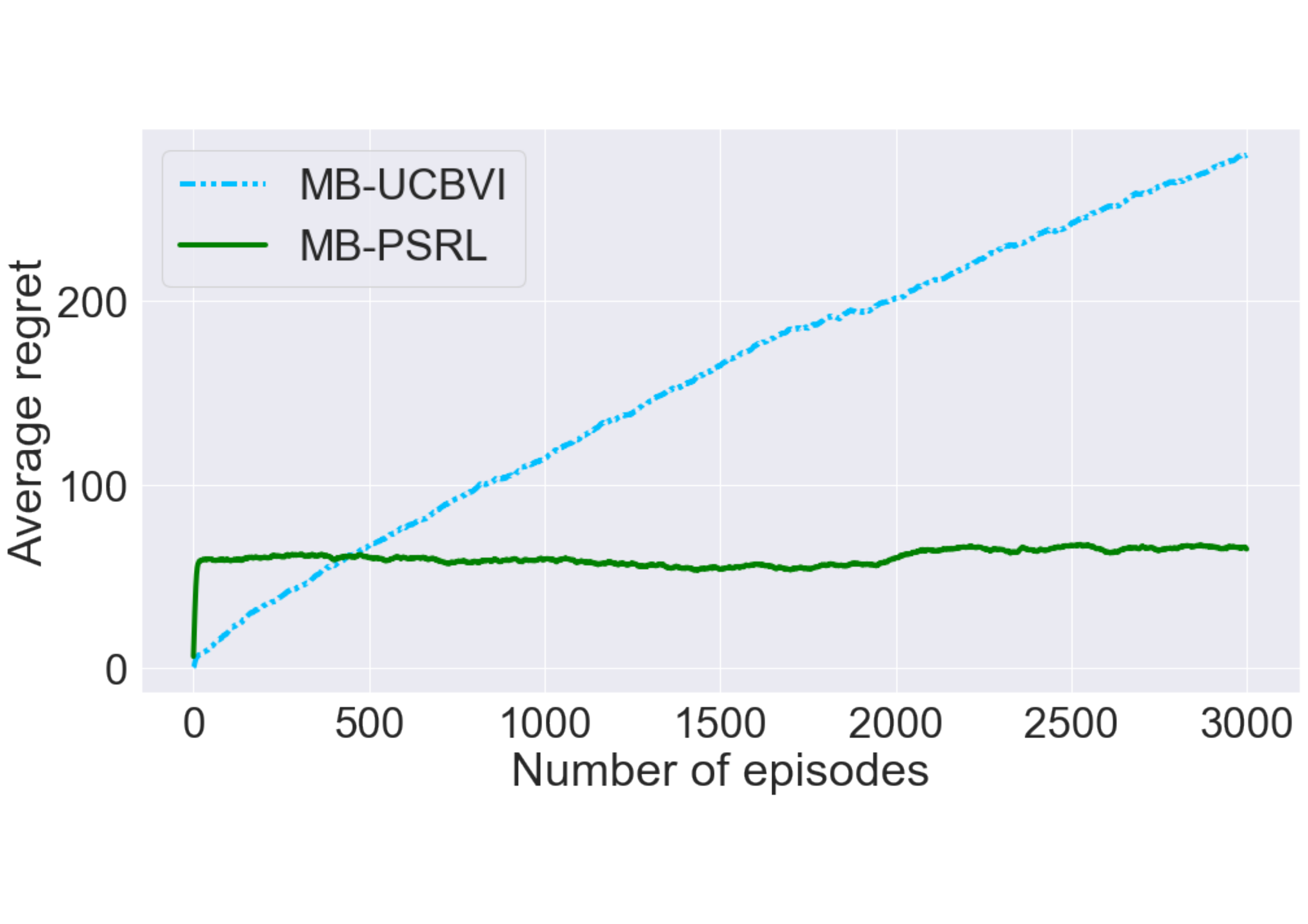}
            (b) Average cumulative regret over 240 simulations.
        \end{minipage}        
    \end{tabular}
    \caption{Task Scheduling with 11 states including the absorbing state (finished state). }
    \label{fig:task_scheduling}
\end{figure}

The average regret of the two algorithms is displayed in \figurename~\ref{fig:task_scheduling}(b). As before, MB-PSRL outperforms MB-UCBVI.  Note that we also studied the time to run one simulation for 3000 episodes. This time is around 1 min for MB-PSRL and MB-UCBVI.

\subsection{Scenario 3: Bayesian Regret and Sensitivity to the Prior}
\label{ssec:prior}

In this section, we study how robust the two implementations of PSRL are, namely MB-PSRL and vanilla PSRL (to simplify, we will just call the later PSRL), to a choice of prior distributions. 
As explained in Appendix~\ref{apx:reward_post}, the natural conjugate prior for Bernoulli reward is the Beta distribution. 
In this section, we simulate MB-PSRL and PSRL in which the rewards are Bernoulli but the conjugate prior used for the rewards are Gaussian-Gamma which is incorrect for Bernoulli random reward. 
In other words, MB-PSRL and PSRL have Gaussian-Gamma prior belief while the real rewards are Bernoulli random variables.

To conduct our experiments, we use a Markovian bandit problem with three $4$-state random walk chains represented in Table~\ref{fig:randomwalk}. 
We draw 16 models by generating 16 pairs of $(r_L, r_R)$ from $U[0,1]$, 16 pairs of $(p_L, p_R)$ from Dirichlet(3,(1,1,1)) and 16 values of $p_{RL}$ from Dirichlet(2, (1,1)) for each chain. 
Each model is an unknown MDP that will be learned by MB-PSRL or PSRL. 
For each of these $16$ models, we simulate MB-PSRL and PSRL 5 times with correct priors and 5 times with incorrect priors. 
The result can be found in \figurename~\ref{fig:bayes_gittinsPS} which suggests that MB-PSRL performs better when the prior is correct and is relatively robust to the choice of priors in term of Bayesian regret. 
This figure also shows that PSRL seems more sensitive to the choice of prior distribution. 
Also note that for both MB-PSRL and PSRL, some trajectories deviate a lot from the mean, under correct priors but even more so with  incorrect priors. 
This illustrates the general fact that learning can go wrong, but with a small probability.

\begin{figure*}[tb]
    \begin{tabular}{cccc}
        \rotatebox{90}{\quad\qquad MB-PSRL}
        &\includegraphics[width=0.3\linewidth]{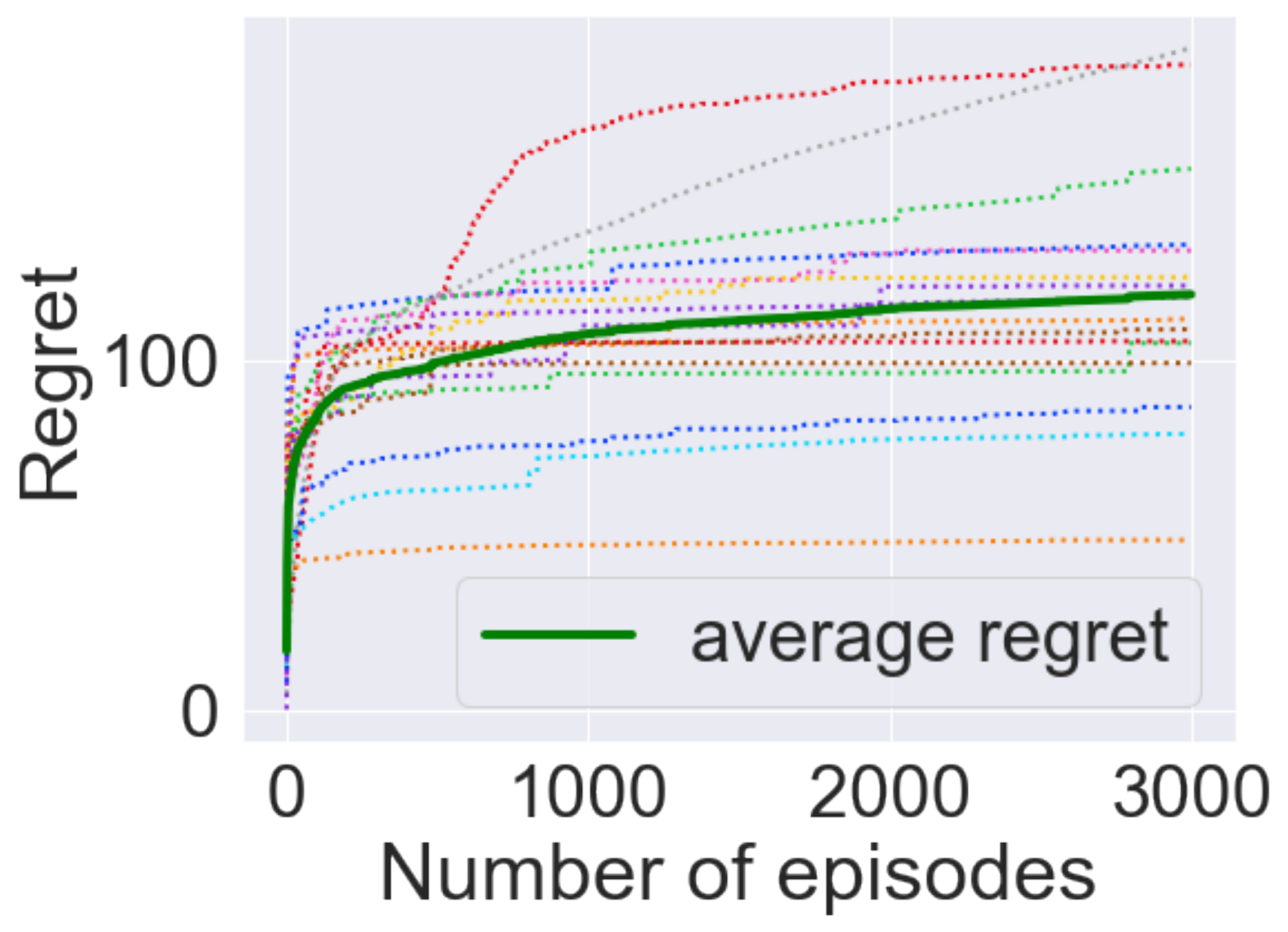}
        &\includegraphics[width=0.3\linewidth]{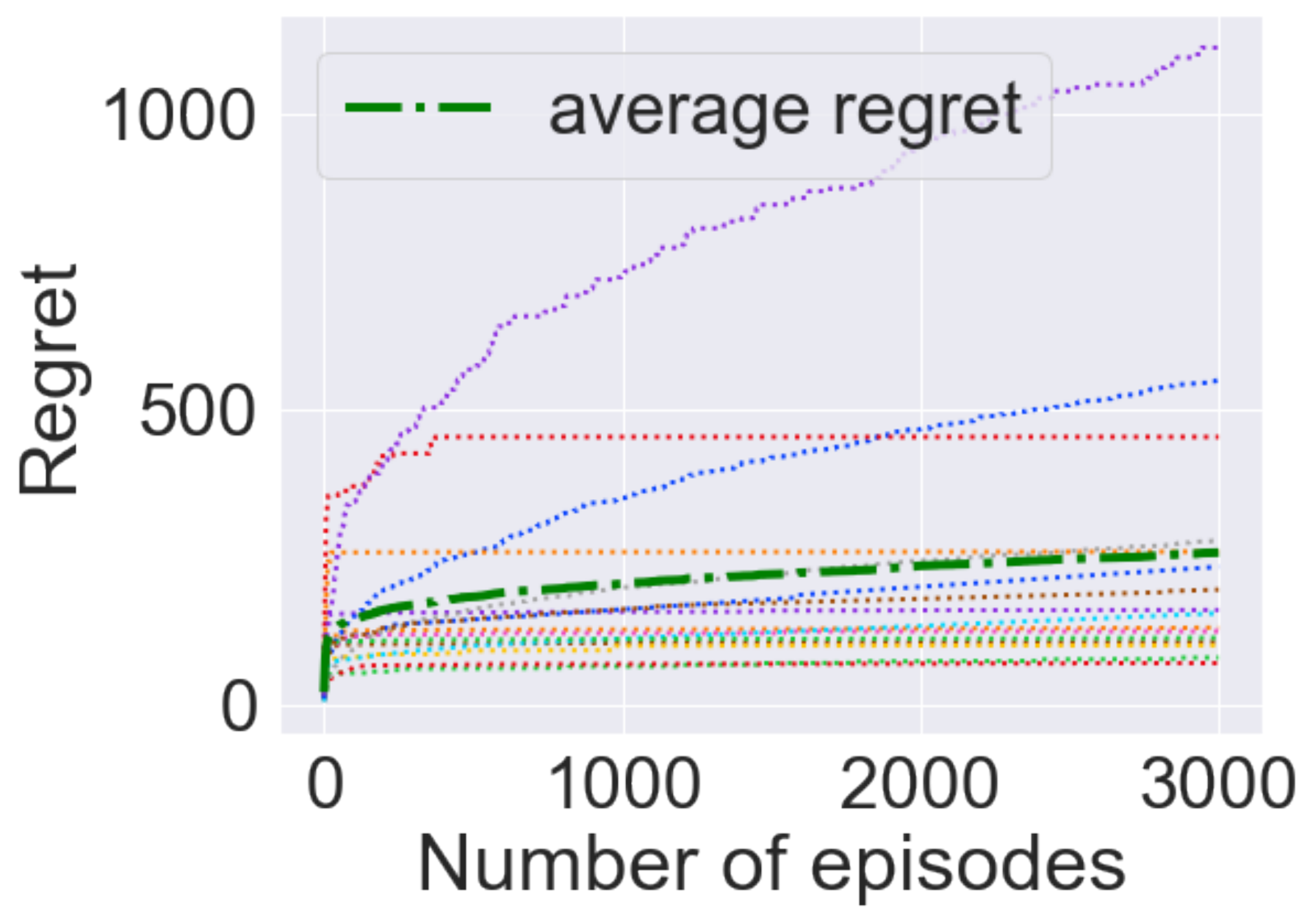}
        &\includegraphics[width=0.3\linewidth]{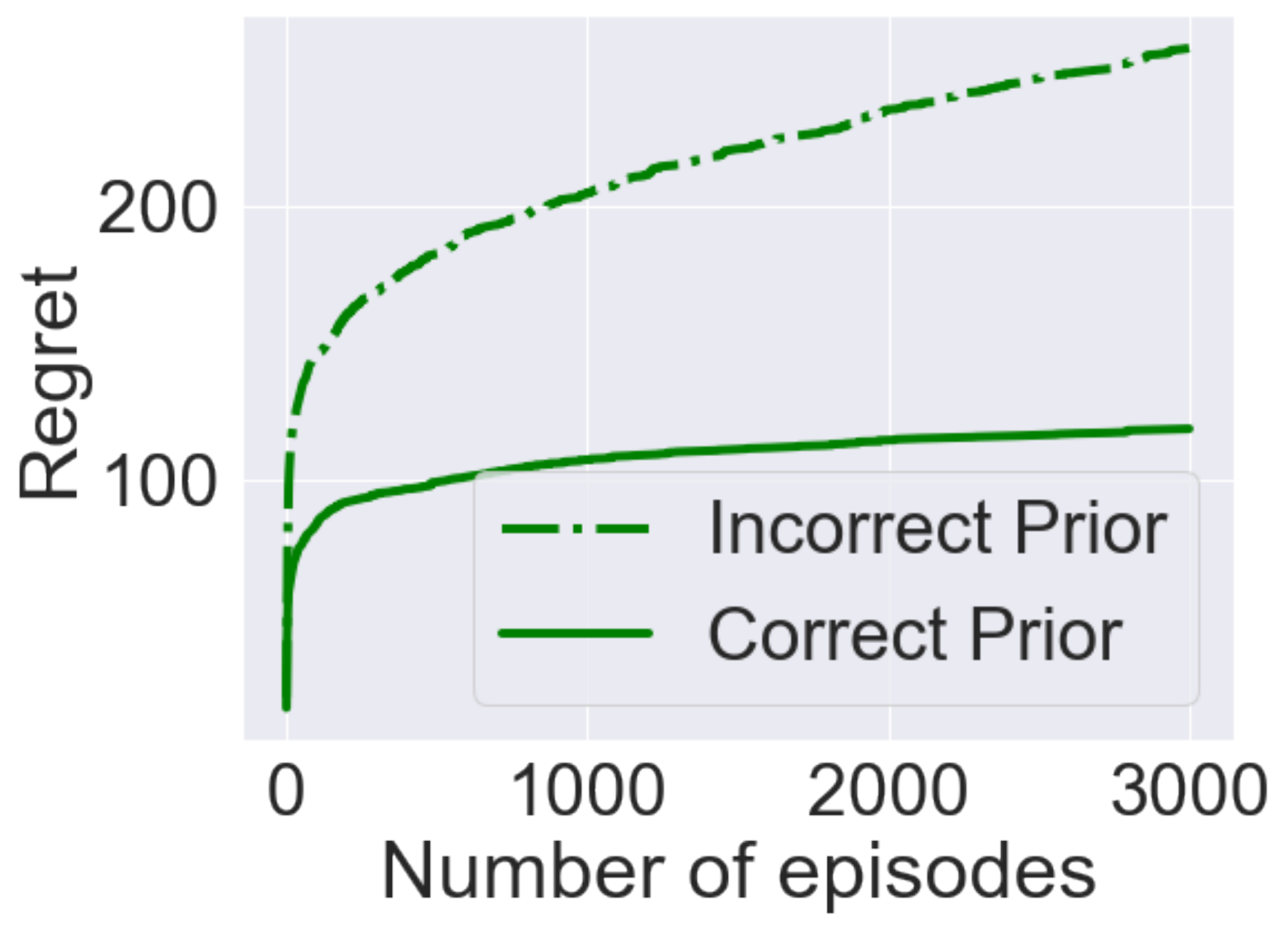}\\
        \rotatebox{90}{\qquad\qquad PSRL}
        &\subfigure[Correct Prior]{\label{subfig:gittinsPS_corr_prior}\includegraphics[width=0.3\linewidth]{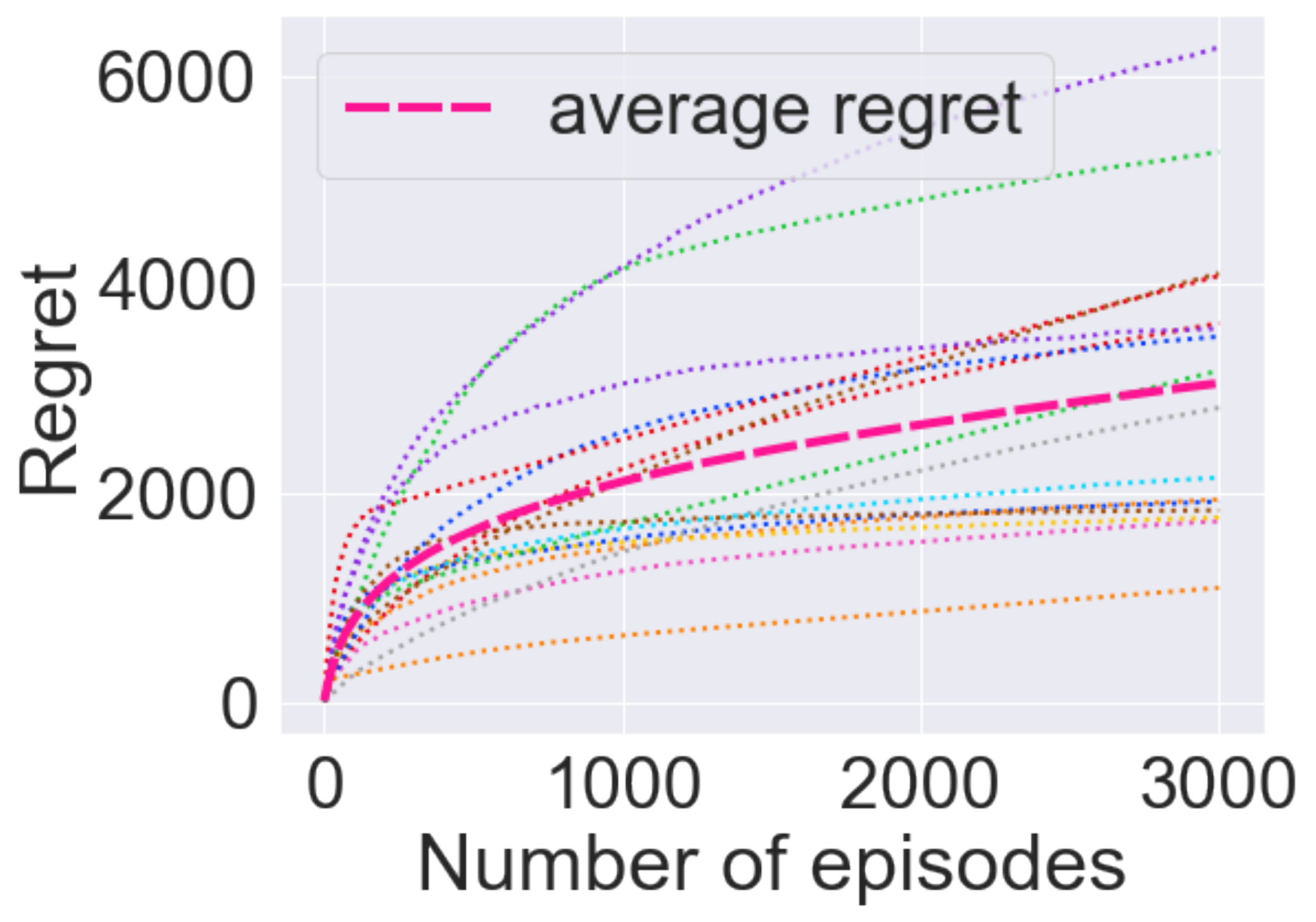}}
        &\subfigure[Incorrect Prior]{\label{subfig:gittinsPS_incorr_prior}\includegraphics[width=0.3\linewidth]{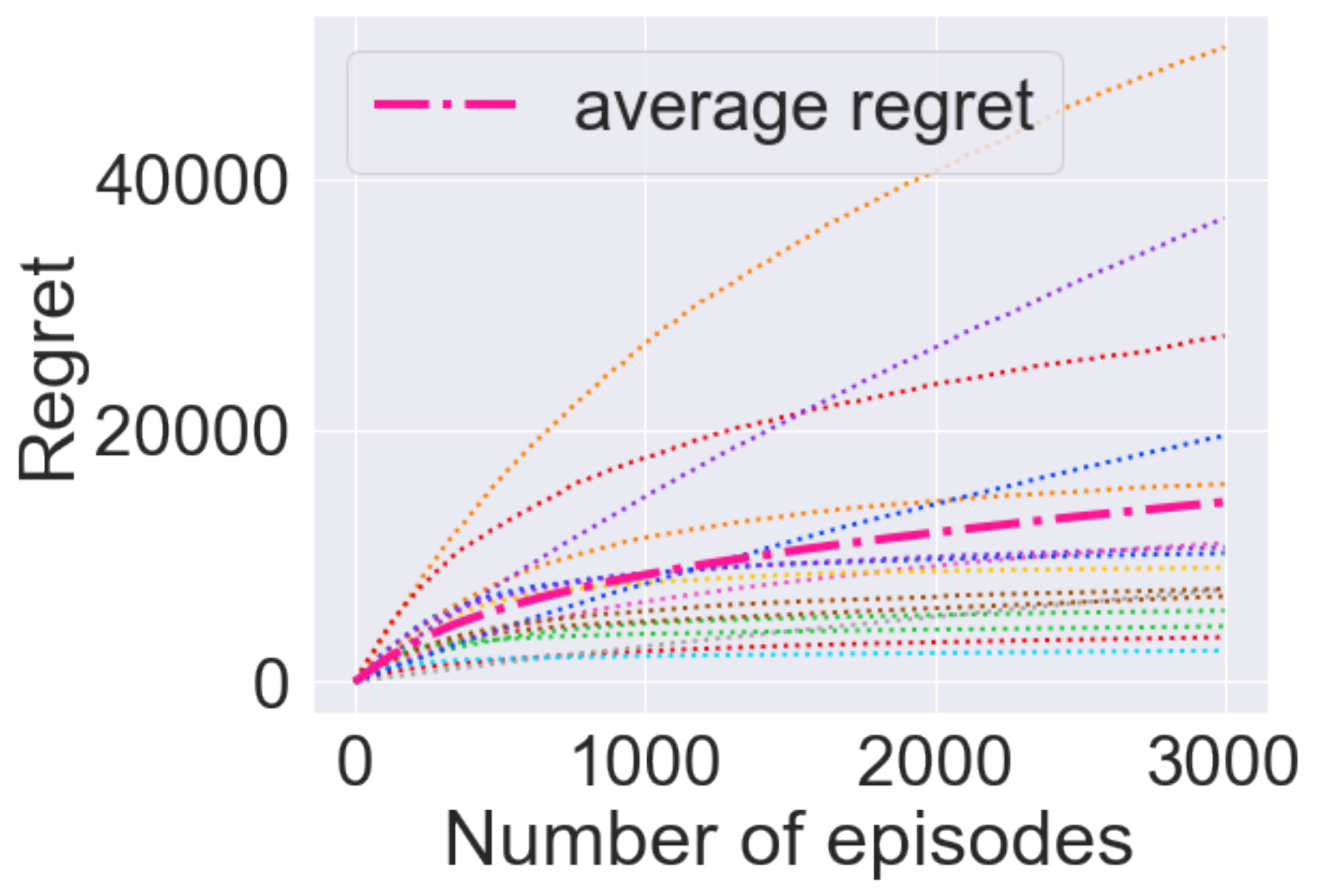}}
        &\subfigure[Bayesian Regret]{\label{subfig:gittinsPS_prior_choices}\includegraphics[width=0.3\linewidth]{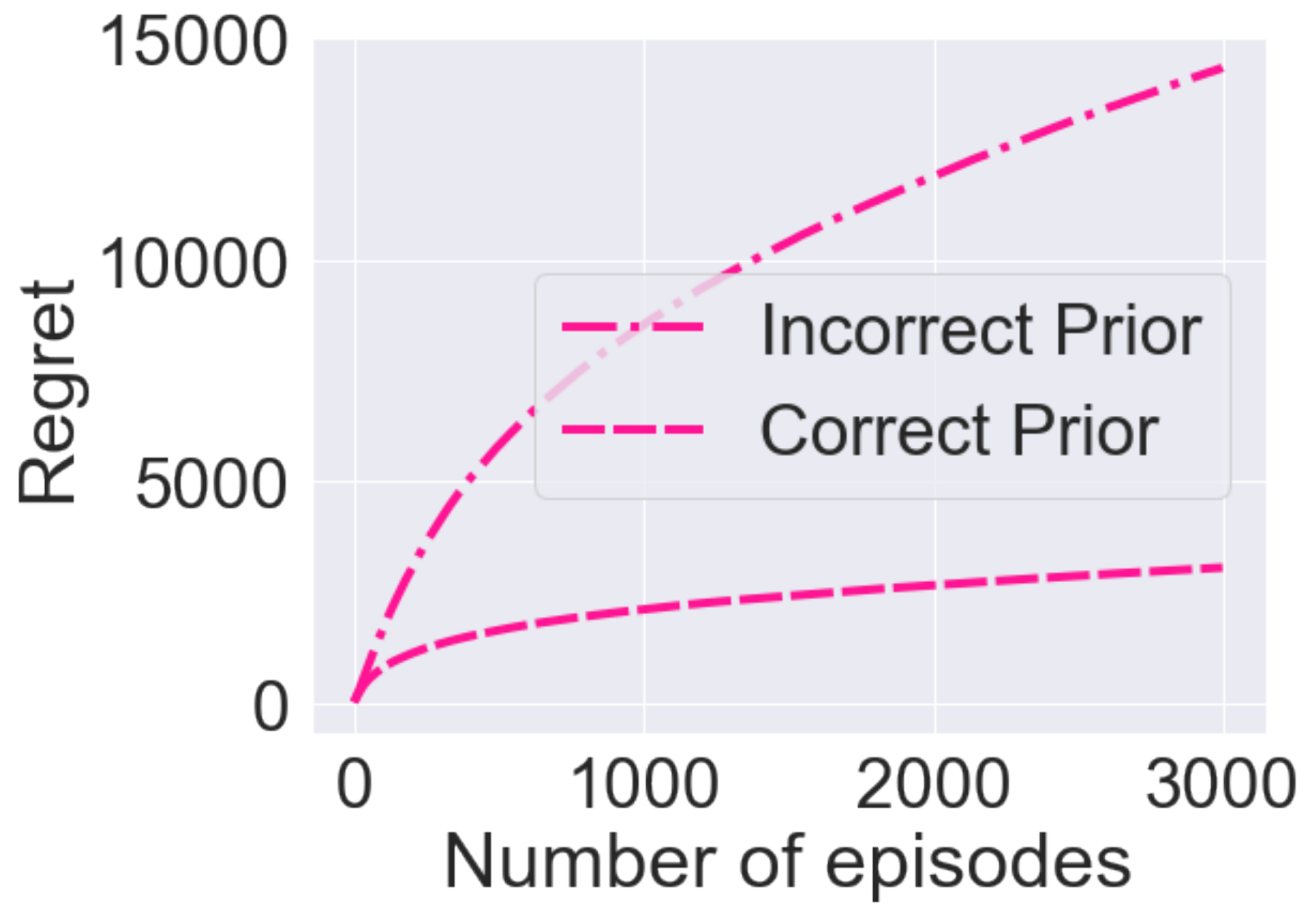}}
        \end{tabular}
        \caption{Bayesian regret of MB-PSRL and vanilla PSRL in 3 4-state Random Walk chains. For each chain, we draw 16 random models and run the algorithms for 5 simulations in each model (there are 80 simulations in total). In panels (a) and (b), we plot 16 dotted lines that correspond to the average cumulative regret over 5 simulations in the 16 samples. The solid and dash-dot lines are the average regret each over 80 simulations (the estimated Bayesian regret). \figurename~\ref{subfig:gittinsPS_corr_prior} shows the performance when reward prior is well chosen (namely, $U([0,1])$). \figurename~\ref{subfig:gittinsPS_incorr_prior} is when the reward prior is incorrectly chosen (namely Gaussian-Gamma distribution). \figurename~\ref{subfig:gittinsPS_prior_choices} compares the Bayesian regret of the correct prior with the incorrect one (dash-dot line). In both case, the prior of next state transition is well chosen (namely, Dirichlet distribution). Y-axis range changes for each figure.}
    \label{fig:bayes_gittinsPS}
\end{figure*}

\section{Experimental environment}
\label{apx:envi}

The code of all experiments is given in a separated zip file that contains all necessary material to reproduce the simulations and the figures. 

Our experiments were run on HPC platform with 1 node of 16 cores of Xeon E5. The experiments were made using Python 3 and Nix and submitted as supplementary material and will be made publicly available with the full release of the paper. The package requirement are detailed in README.md. Using only \emph{1 core} of Xeon E5, the Table~\ref{tab:sim_time} gives some orders of duration taken by each experiment (with discount factor $\beta=0.99$, and 3000 episodes per simulation). We would like to draw two remarks. First, the duration reported in Figure~\ref{fig:randomwalk_cpt_3b4s} is the time for policy computation (algorithm's parameters update and policy computation). The duration reported in Table~\ref{tab:sim_time} include this plus the computation time for oracle (because we track the regret), the state transition time along the trajectories of oracle and of each algorithm, resetting time... This explains why the duration reported in Table~\ref{tab:sim_time} cannot be compared to the duration reported in Figure~\ref{fig:randomwalk_cpt_3b4s}.  Second, the duration shown in Table~\ref{tab:sim_time} are meant to be a rough estimation of the computation time (we only ran the simulation once and the average duration might fluctuate). 

\begin{table}[ht]
\begin{tabular}{ |c|c|c|c|c|c| } 
\hline
Experiment & MB-PSRL  & PSRL & MB-UCRL2 & MB-UCBVI & Total\\
\hline
Scenario 1 & 40 min  & - & 3days & 50 min & 3days\\ 
\hline
Scenario 2 & 200 min  & - & - & 200 min & 400 min\\ 
\hline
Scenario 3 & 90 min  & 260 min & - & - & 350 min\\ 
\hline
\end{tabular}
\vspace{0.2cm}
\caption{Approximative execution time for simulating each algorithm and tracking its regret in each scenario. This time includes the time given in Figure~\ref{fig:randomwalk_cpt_3b4s} and the computation time needed by oracle (because we track the regret), the state transition time along the trajectories of oracle and each algorithm, etc. In each scenario, we set the discount factor $\beta=0.99$ and run the algorithms for $3000$ episodes per simulation.}
\label{tab:sim_time}
\end{table}

\end{document}